\documentclass[twoside,11pt,jmlr]{article}

\newcommand{\fracpartial}[2]{\frac{\partial #1}{\partial  #2}}
\usepackage{xcolor}
\usepackage{bbm}
\usepackage{mathtools}

\usepackage{amsmath,amsfonts,bm}

\def\eqref#1{equation~\ref{#1}}

\def\1{\bm{1}}

\def\vu{{\bm{u}}}
\def\vv{{\bm{v}}}
\def\vw{{\bm{w}}}

\DeclareMathAlphabet{\mathsfit}{\encodingdefault}{\sfdefault}{m}{sl}
\SetMathAlphabet{\mathsfit}{bold}{\encodingdefault}{\sfdefault}{bx}{n}

\def\gA{{\mathcal{A}}}

\def\gD{{\mathcal{D}}}

\def\gG{{\mathcal{G}}}
\def\gH{{\mathcal{H}}}

\def\gL{{\mathcal{L}}}
\def\gM{{\mathcal{M}}}

\def\gP{{\mathcal{P}}}

\def\gX{{\mathcal{X}}}

\def\sR{{\mathbb{R}}}

\newcommand{\E}{\mathbb{E}}

\newcommand{\Var}{\mathrm{Var}}

\DeclareMathOperator*{\argmax}{arg\,max}

\renewcommand{\epsilon}{\varepsilon}

\newcommand{\A}{\mathcal{A}}

\renewcommand{\Pr}{\textup{\mbox{Pr}}}

 \DeclareMathOperator*{\tr}{tr}

\def\ie{\textit{i.e.,~}}

\def\wrt{\textit{w.r.t.~}}

\def\ones{\mathbbm{1}}
\def\Trans{^\top}

\usepackage{enumitem}
\usepackage{wrapfig}

\usepackage{enumitem}
\usepackage{wrapfig}

\usepackage{algorithm} 
\usepackage{algpseudocode} 
\usepackage{booktabs} %
\usepackage{threeparttable}  %
\usepackage{multirow} %
\usepackage{graphicx}
\usepackage{float}
\usepackage{color}
\usepackage{enumitem}
\usepackage[hang]{subfigure}
\usepackage{amsmath} %
\usepackage{colortbl} %

\usepackage{url}
\usepackage{amsthm}

\usepackage{jmlr2e}

\newtheorem{prop}[theorem]{Proposition}
\newtheorem{assumption}[theorem]{Assumption}

\theoremstyle{definition}

\newcommand{\paraemph}[1]{\textit{#1}} %

\ShortHeadings{A Theoretical Framework for Masked Pretraining (MPT)}{Zhang, Zhou, Wang and Wang}
\firstpageno{1}

\usepackage{hyperref}

\usepackage{lastpage}
\jmlrheading{27}{2026}{1-\pageref{LastPage}}{3/25; Revised 8/26}{8/26}{25-0477}{Qi Zhang, Runyu Zhou, Yifei Wang and Yisen Wang}

\begin{document}

\title{A Theoretical Framework for Masked Pretraining (MPT)}

\author{\name Qi Zhang $^1$ $^*$
\email zhangq327@stu.pku.edu.cn 
\AND
\name Runyu Zhou $^1$ $^*$
\email runyuzhou25@stu.pku.edu.cn
\AND
\name Yifei Wang $^2$ \thanks{Equal Contribution}
\email yifei\_w@mit.edu
\AND
\name Yisen Wang $^{1}$ \thanks{Corresponding Author}
\email yisen.wang@pku.edu.cn \\~\\
\addr $^1$State Key Lab of General Artificial Intelligence, School of Intelligence Science and Technology, Peking University \\
\addr $^2$CSAIL, MIT
}
\editor{Sanjiv Kumar}
\maketitle

\begin{abstract}
    Recently, Masked Pretraining (MPT) based on reconstruction pretraining tasks has risen to a promising self-supervised learning paradigm across various domains and achieves remarkable performance in multiple downstream tasks. However, the theoretical understanding of the working mechanism behind MPT is still limited. In this paper, we introduce a new theoretical framework to analyze MPT and understand the crucial role of masking in extracting meaningful representations. We establish theoretical connections between MPT and another popular self-supervised paradigm: contrastive learning. We prove that the masking technique implicitly creates positive pairs that are semantically similar and the reconstruction loss pulls them together in the feature space. Besides, as a result of the implicit alignment, we point out the dimensional collapse issue of MPT and propose a Uniformity-enhanced MPT (U-MPT) loss that can effectively address this issue and bring significant improvements in downstream tasks including linear evaluation, cross-dataset fine-tuning and out-of-distribution generalization on real-world data sets. Furthermore, we establish downstream guarantees of U-MPT and theoretically analyze the influence of masking strategies. Based on the theoretical analysis, we propose a new masking strategy which enhances the downstream performance of MPT and explains current improvements of masking strategies with our theoretical perspective.
\end{abstract}

\begin{keywords}
    masked pretraining, dimensional collapse, masking strategy, representation learning theory
\end{keywords}

\section{Introduction}

In recent years, self-supervised learning (SSL) has emerged as a promising paradigm for learning meaningful representations without the need for labeled data. SSL methods can be broadly categorized into two types: contrastive pretraining and masked pretraining. Contrastive pretraining methods \citep{simclr,moco} apply data augmentation on natural samples and train neural networks by aligning different augmented views of the same sample in the feature space. On the other hand, masked pretraining methods \citep{bert,mae} apply masks to samples and predict the masked portions based on the unmasked views. Although contrastive and masked pretraining differ in their objectives, both approaches have demonstrated impressive performance across various domains, including vision \citep{mae,simclr}, language \citep{bert,zhang2022contrastive}, and multi-modal tasks \citep{clip,khan2021exploiting,zhang2023on}.

Due to the remarkable empirical success of self-supervised learning, researchers try to theoretically analyze the working mechanism behind these paradigms. Various theoretical frameworks have been proposed to analyze contrastive pretraining from different perspectives. For example, \citet{arora} establish the connection between the pretraining contrastive objective and downstream classification error. \citet{haochen} observe that the contrastive loss is equal to a matrix decomposition objective, and \citet{infomax} understand the contrastive pretraining process with the information theory. However, the theoretical analysis of masked pretraining is still relatively underexplored.

In this paper, we introduce a new theoretical framework for masked pretraining (MPT). Taking the representative masked pretraining method Masked Autoencoder (MAE) \citep{mae} as an example, we note that there are two core components: autoencoders and masking. As vanilla autoencoders exhibit much inferior performance than contrastive paradigms \citep{simclr}, there exists a common belief that masking is the key factor for the success of MPT. To understand the role of masking in MPT, we first establish a close connection between contrastive and masked pretraining. Specifically, we prove that the masking technique creates implicit positive pairs that are semantically similar and the reconstruction loss pulls these positive pairs together.
In contrastive pretraining, merely aligning positive pairs can lead to feature collapse, as the alignment loss can be minimized by mapping different inputs to a constant value. Given the connection between contrastive and masked pretraining, we wonder whether MPT also faces this collapse issue. Our findings show that while MPT avoids the trivial constant solution, it still suffers from dimensional collapse, where the effective dimensions of the learned representations are quite limited. To address this problem, we draw inspiration from contrastive learning and add a uniformity regularizer into the MPT objective. Specifically, we propose Uniformity-enhanced MPT (U-MPT), which calculates the feature similarity between independent samples and encourages them to be pushed apart. Empirically, we observe that U-MPT effectively resolves the dimensional collapse and significantly boosts MPT performance on downstream linear evaluation, cross-dataset fine-tuning and out-of-distribution generalization tasks.

Built on the connection between contrastive pretraining and masked pretraining, we then establish theoretical guarantees for U-MPT's downstream generalization performance. Our theoretical bounds demonstrate that the generalization performance of MPT is decided by two factors: first, the unmasked views should preserve the core semantics of natural samples (like label information), and second, semantically similar unmasked views should be selected as an implicit positive pair. With a toy model, we prove that a lower mask ratio helps the unmasked views retain label information while a higher mask ratio increases the probability that semantically similar unmasked views are selected as implicit positive pairs. This introduces a trade-off in selecting the optimal mask ratio. Based on this insight, we propose a surrogate metric to estimate downstream guarantees on real-world data sets, and we confirm that it can reliably estimate the optimal mask ratio in practice (75\%). Furthermore, with the theoretical analysis of masking operation in MPT, we propose an improved masking strategy, which exhibits better downstream classification performance than random masking. Meanwhile, we observe that many of the recent advancements in masking strategies and objective designs can be analyzed and explained through our theoretical framework, which further verifies the generality of our proposed analysis. The contents of the paper are organized as follows:

\begin{itemize}
    \item In Section 2, we summarize the related work of understanding self-supervised learning and further introduce the mathematical problem setup in Section 3.
    \item In Section 4, we establish theoretical connections between masked pretraining and contrastive pretraining, showing that the masking operation implicitly creates positive pairs that are semantically similar and the reconstruction objectives align them in the feature space.
    \item  In Section 5, we propose the uniformity-enhanced MPT (U-MPT) objective, which alleviates the dimensional collapse issue in MPT and significantly improves the downstream performance of MPT in downstream tasks including linear evaluation, cross-dataset fine-tuning, and out-of-distribution generalization on real-world data sets.
    \item In Section 6, we establish theoretical guarantees on the downstream performance of MPT. With a toy model, we theoretically analyze the influence of mask ratio on the downstream error of MPT. Furthermore, we propose a metric to estimate MPT's downstream performance, which accurately predicts the optimal mask ratio.
    \item In Section 7, we improve the masking strategy in MPT based on our theoretical analysis. Besides, we also find that many of the current improvements can be explained with our theoretical framework.
\end{itemize}

\paraemph{Remark.} A preliminary work was published at NeurIPS 2022 as a Spotlight paper \citep{zhang2022mask}, while we add substantially new results, both theoretically and empirically, in this manuscript, aiming to provide a more comprehensive and in-depth analysis of the generalization property of masked pretraining. Specifically, we add the following key results:

\begin{itemize}
    \item In Sections 3 and 4, we incorporate cross-entropy (CE) loss-based MPT methods into our theoretical framework alongside the reconstruction loss-based approaches, thereby enhancing the comprehensiveness of our analysis.
    \item  In Section 5, we verify our proposed U-MPT objectives in more downstream scenarios, including cross-dataset fine-tuning and out-of-distribution generalization, which further verify the effectiveness and advantages of U-MPT.
    \item In Section 6, we theoretically analyze the influence of mask ratio on downstream guarantees with a toy model. We also propose a new metric to estimate the theoretical bounds.
    \item In Section 7, we propose new masking and reconstruction strategies based on our theoretical analysis. Besides, we explain the current improvements of MPT with our theoretical framework.
\end{itemize}

\section{Related Work}

In this section, we review two lines of work most relevant to our study: the empirical development and the theoretical efforts of masked pretraining (MPT) methods.

\paraemph{Masked Pretraining Methods in Practice.} Deep supervised learning algorithms typically require extensive labeled data to achieve strong performance, yet manual annotation is costly and labor-intensive. Self-supervised learning (SSL) addresses this challenge by learning discriminative representations directly from unlabeled data. Among SSL paradigms, contrastive learning (CL) and Masked Pretraining (MPT) approaches have emerged as dominant frameworks. CL trains models to pull semantically similar samples together in the feature space while pushing dissimilar samples away, achieving notable success in vision and language tasks~\citep{simclr,moco,BYOL,wang2021residual,simsiam,barlowtwins,vicreg,cui2023rethinking,wang2023message}. MPT methods based on reconstruction tasks have also demonstrated impressive performance \citep{bert,mae,du2024on}. During the pretraining process, MPT methods reconstruct masked regions of input data to learn transferable representations. Key MPT methods include BERT~\citep{bert}, BeiT~\citep{beit}, SimMIM~\citep{xie2022simmim}, and MAE~\citep{mae}, which differ in reconstruction targets: BEiT and BERT optimize token-level cross-entropy losses, while SimMIM and MAE minimize pixel-wise losses. 

\paraemph{Theoretical Understandings of Masked Pretraining Methods.}  The empirical success of SSL has made it important to theoretically understand its underlying mechanisms. For contrastive learning,  \citet{arora} establish the downstream guarantee of contrastive learning representations.  \citet{wang2022chaos,zhangjmlr2025} revise their bounds by resolving the class collision issues, and develop a new understanding of contrastive learning from the perspective of augmentation overlap. 
\citet{wang2020understanding} identify alignment and uniformity as two key properties of contrastive learning.
\citet{haochen} introduce the augmentation graph framework to characterize the downstream performance of spectral contrastive solutions. \citet{cui2025augmentation} propose an augmentation-aware error bound which enables the explanation of specific types of data augmentation in contrastive learning. \citet{zhang2023on} establish generalization guarantees for multimodal contrastive learning. 
However, compared to contrastive learning, the theoretical analysis of MPT methods remains relatively limited. \citet{cao2022understand} analyze the attention mechanism of MPT through an integral kernel perspective. \cite{Kong2023UnderstandingMAE} formulate the underlying data-generating process in MPT as a hierarchical latent variable model.
\citet{liu2022masked} study masked prediction from a parameter identifiability perspective and characterize when the underlying parameters can be recovered from masked prediction objectives. \citet{meng2024representation} reveal representation deficiency in masked language modeling and theoretically show that the mask token can occupy representation capacity, leading to rank deficiency in learned representations. \citet{li2024promises} establish a theoretical framework for generative masked language modeling and analyze its statistical properties, including the impact of masking ratios on sample complexity and learning efficiency. \citet{zhang2024look} theoretically compare masked and autoregressive pretraining, showing that the flexible prediction targets in masked pretraining induce richer inter-sample connections and lead to advantages in downstream classification.
However, most of these studies predominantly analyze model architecture and ignore the core designs of MPT methods, i.e., the masking technique. Our work bridges this gap by formalizing the role of masking and deriving the first theoretical guarantees for MPT’s downstream performance. We note that while both \citet{haochen} and our work adopt a graph-based perspective, the corresponding graphs characterize different learning mechanisms. Their augmentation graph defines edges through data augmentations and is developed for analyzing contrastive learning, whereas our mask graph defines edges through reconstruction relationships between masked and unmasked views and is used to characterize the implicit alignment induced by masked reconstruction.

\section{Mathematical Formulation for Different Masked Pretraining Methods}

In this section, we introduce mathematical formulations of  Masked Pretraining (MPT) approaches. We categorize different MPT approaches based on the type of loss used, i.e., the pixel-wise reconstruction loss and the cross-entropy loss. We start by introducing some common definitions for clarity.

Let \(\bar{x}\) denote a natural sample from a data set \(\gD\). In computer vision (CV), \(\bar{x}\) typically refers to an image, while in natural language processing (NLP), it corresponds to a sequence of words. The sequence \(\bar{x}\) is divided into \(K\) elements, where each element is either a patch of size \(s\) (e.g., \(16 \times 16\) in ViT) in CV, such that \(\bar{x} \in \sR^{K \times s}\), or a token in NLP. To generate a masked view, a random binary mask \(m \in \{0, 1\}^K\) is applied, where each entry \(m_k\) is set to \(0\) with probability \(\rho\) (the mask ratio). This process masks a fraction \(\rho\) of the elements, resulting in two complementary views of the sequence: the unmasked view \(x_1\) and the masked view \(x_2\). These views are formally defined as:
\begin{equation}
\label{eq:mask_view} 
x_1 = \bar{x}[m], \quad x_2 = \bar{x}[1 - m],
\end{equation}
where \(\bar{x}[m]\) selects the elements of \(\bar{x}\) corresponding to the positions where \(m_k = 1\). Let \(n_1\) and \(n_2\) denote the number of unmasked and masked elements. By construction, \(n_1 + n_2 = K\), with \(n_1 = (1-\rho)K\) and \(n_2 = \rho K\). 

\paraemph{The Mask Graph $\gG_M$ of MPT.} We notice that MPT essentially learns to pair the unmasked and masked views $x_1, x_2$ via the reconstruction task, which can be characterized by a \emph{mask graph} $\gG_M$. Denote the set of all unmasked views as $\gX_1=\{x_1\}$ and the set of all masked views as $\gX_2=\{x_2\}$, where the two sets are assumed to be finite (can be exponentially large), \ie $|\gX_1|=N_1,|\gX_2|=N_2$. The mask graph $\gG_M$ over the joint set $\gX=\gX_1\cup\gX_2$ is defined here:
\begin{itemize}
    \item Node: each view $x\in\gX$.
    \item Edge: the edge weight $w_{x_1,x_2}$ between any $x_1,x_2\in\gX$ is defined as their joint probability $\gM(x_1,x_2)=\E_{\bar x}\gM(x_1,x_2|\bar x)$. In other words, there is an edge between two views (\ie $w_{x_1,x_2}>0$) if and only if they are complementary views generated by masking.
\end{itemize}
Considering the masking process in Eq. \ref{eq:mask_view}, when the masking ratio $\rho\neq 0.5$, the two sets $\gX_1$ and $\gX_2$ are disjoint by construction. Therefore, there only exist edges \emph{between} the two sets and there is no edge \emph{within} either set. Thus, the mask graph $\gG_M$ is a \emph{bipartite} graph in this case.
Its adjacency matrix can be defined as $A_M\in\sR^{N_2 \times N_1}$ where $(A_M)_{x_2,x_1}=w_{x_1,x_2}$ for $x_1\in\gX_1,x_2\in\gX_2$. As long as $\rho\neq0.5$, $N_1\neq N_2$ and $A_M$ is not necessarily a square matrix. 
We can define the normalized adjacency matrix as $\bar A_M=D_2^{-1/2}AD_1^{-1/2}$, where  $D_1,D_2$ are the diagonal degree matrices with elements $d_{x_1}=\sum_{x_2}w_{x_1,x_2}$ and $d_{x_2}=\sum_{x_1}w_{x_1,x_2}$, respectively.

\subsection{Methods with Pixel-wise Reconstruction Loss}

A wide range of MPT methods in CV is based on pixel-wise reconstruction loss, which measures differences between the original images and the reconstructed regions \citep{mae,xie2022simmim}. For instance, MAE~\citep{mae} employs an encoder-decoder architecture $h = g \circ f$, where $f$ is the encoder that maps unmasked patches $x_1$ to latent features $z_1$, and $g$ is the decoder that reconstructs the complementary masked view $x_2$ by mapping $z_1$ and the mask token back to the pixel space. During the pretraining process, MAE minimizes the $l_2$-norm loss between the original and reconstructed masked areas:
\begin{equation}
    \gL_\text{MAE}(h)=\E_{\bar x}\E_{x_1,x_2|\bar x}\left\| g(f(x_1))-\hat{x}_2\right\|_2^2,
    \label{eq:mae-loss}
\end{equation}
where $\hat{x}_2$ is the $l_2$-normalized version of $x_2$.

SimMIM~\citep{xie2022simmim} also uses an encoder-decoder framework, where the encoder $f$ takes the unmasked view $x_1$ and a mask token (spread over the masked positions) as input, outputting the corresponding features. A linear layer $g$ is then used to reconstruct the masked patches from these features, and the loss function is defined as an $l_1$ reconstruction loss:

\begin{equation}
    \gL_\text{SimMIM}(h)=\frac{1}{n_2}\E_{\bar x}\E_{x_1,x_2|\bar x}\left\| g(f(x_1))-x_2\right\|_1.
    \label{eq:simmim-loss}
\end{equation}

\subsection{Methods with Cross-Entropy Loss}

While many MPT methods in the vision domain rely on the pixel-wise reconstruction loss (such as \( l_1 \) or \( l_2 \) loss), MPT pretraining approaches usually adopt the cross-entropy (CE) loss in the language domains \citep{bert}. Taking the popular framework BERT \citep{bert} as an example, the pretraining objective is to minimize the negative log-likelihood:
\begin{equation}
\gL_{\text{BERT}} = -\E_{x_1,x_2}\sum_k \log\Pr(x_{2,k}^+ \mid x_1),
\end{equation}
where \( x_1 \) and \( x_2 \) are the unmasked and masked portions of a sequence from a language data set, \( x_{2,k}^+ \) denotes the \( k \)-th token of the masked view \( x_2 \). This formulation captures the likelihood of predicting the masked tokens in \( x_2 \) based on the unmasked portion \( x_1 \). Specifically, the predicted probability of a token \( x_{2,k}^+ \) in the masked portion can be formulated as:
\begin{align}
\gL_{\text{BERT}} &=  -\E_{x_1,x_2}\sum_k \log \frac{\exp(f(x_1)^\top w_{x_{2,k}^+})}{\sum_{x_{2,k}^-}\exp(f(x_1)^\top w_{x_{2,k}^-})},
\label{eqn:ce-softmax}
\end{align}
where \( x_{2,k}^- \) denotes irrelevant tokens in the vocabulary, \( f(x_1) \) is the encoder output given the input \( x_1 \), and \( w_i \) is the feature embedding of the \( i \)-th token in the weight matrix of the token classification head.

Inspired by the impressive performance of BERT in the language domain, \citet{beit} adapts the concept of masked token prediction to vision tasks. Unlike methods that operate directly on raw pixels, BEiT employs a discrete variational autoencoder (dVAE) to transform an image \(\bar{x}\) into a sequence of discrete tokens. Specifically, each patch of the image is tokenized into an index from a visual codebook. BEiT then learns to predict the token indices of the masked patches using the embeddings of the visible regions by minimizing the following loss:
\[
\gL_{\text{BEiT}} = -\E_{x_1,x_2}\sum_k \log\Pr(x_{2,k}^+ \mid x_1),
\]
where \( x_1 \) and \( x_2 \) represent the unmasked and masked views of the image, \( x_{2,k}^+ \) denotes the \( k \)-th token of the masked view \( x_2 \).

These reconstruction-based methods, though differing in the specific loss functions used, share the common goal of predicting masked content from incomplete inputs. In the next section, we will reveal their intrinsic connections in the lens of sample alignment.

\section{Masked Pretraining Performs Implicit Contrastive Learning}
\label{sec:mae implicitly aligns positive pairs}

Masked pretraining (MPT) and contrastive learning (CL) are two of the most widely used self-supervised paradigms and both of them exhibit remarkable performance across various downstream tasks \citep{simclr,mae}. While contrastive learning has been extensively studied with several theoretical frameworks analyzing its mechanisms from different perspectives \citep{arora,haochen,infomax}, the theoretical exploration of MPT approaches remains relatively underdeveloped. In this paper, we provide a theoretical analysis of MPT by establishing a fundamental connection between MPT and CL. Specifically, we prove that MPT implicitly aligns semantically similar pairs through its masking mechanism, similar to the alignment between positive pairs in contrastive methods. In Section~\ref{sec:MPT=alignment}, we formalize this relationship, proving that minimizing the pixel-wise MPT reconstruction loss enforces alignment between semantically similar samples. 
Furthermore, in Section~\ref{sec:ce-align-conn}, we extend this analysis to cross-entropy-based MPT methods (e.g., BEiT, BERT), showing that their loss functions implicitly align encoder outputs with token embeddings, mirroring contrastive objectives.

\subsection{Pixel-wise MPT Implicitly Aligns Positive Input Pairs}
\label{sec:MPT=alignment}

Masked pretraining approaches typically involve an encoder-decoder architecture, similar to that of autoencoders. In this context, we start by assuming that the encoder-decoder structure in MPT can perform the fundamental task of a standard autoencoder, \ie reconstructing the original input.

\begin{assumption}
\label{ass:aprroximation pesudo encoder}
For any decoder $g$, let $\mathcal{F}_g\subseteq\mathcal{F}$ denote a non-empty family of pseudo-inverse encoders associated with $g$. We define the best achievable autoencoding error for $g$ as
\[
\varepsilon(g)=\min_{f\in\mathcal{F}_g}\mathbb{E}_{x}\|g\circ f(x)-x\|_2^2,
\]
where $x$ represents either unmasked data $x_1$ or masked data $x_2$. We assume that the minimum is attained by some $f_g\in\mathcal{F}_g$, i.e.,
\[
f_g\in
\arg\min_{f\in\mathcal{F}_g}
\mathbb{E}_{x}\|g\circ f(x)-x\|_2^2.
\]
\end{assumption}

The assumption characterizes the reconstructability of a given decoder $g$ through its associated pseudo-inverse encoders, with $\varepsilon(g)$ quantifying the best achievable autoencoding error. For sufficiently expressive decoders, such reconstructability is plausible in practice, given the strong empirical performance of deep networks on autoencoding tasks \citep{kingma2014auto,jing2020implicit}. Moreover, the Transformer architectures commonly used in MPT possess universal approximation capabilities \citep{Yun2020Are}, further supporting the existence of encoder-decoder pairs with small autoencoding error. We emphasize, however, that $\varepsilon(g)$ is decoder-dependent and need not be uniformly small for all $g$.

However, it is worth noting that the vanilla autoencoder task fails to learn representations as meaningful as those achieved by MPT. For instance, when no masks are applied, the linear probing accuracy of MAE has a significant decline from 61.2\% to 17.4\% on ImageNet-100. This outcome suggests that the autoencoding ability alone, as explored in previous studies \citep{cao2022understand}, is insufficient to explain the effectiveness of MPT. Hence, this motivates us to delve further into the masking mechanism employed by MPT.

\paraemph{MPT Performs Asymmetric Input-Output Alignment on the Mask Graph.} For the sake of simplicity, we assume that MPT employs an $l_2$ reconstruction loss. First, we show that the MPT loss can be lower bounded by an \textit{asymmetric} alignment loss between the two complementary views $x_1,x_2$ using two-branch autoencoders {$h=g \circ f$ (real) and $h_g=g \circ f_g$ (pseudo)}, respectively.
\begin{theorem}
Under Assumption \ref{ass:aprroximation pesudo encoder}, the MPT loss can be lower bounded by
\begin{align}
    \gL_\textnormal{MPT}(h)&\geq\gL_\textnormal{asym}(h)-\varepsilon(g)+\textnormal{const},\\
    \text{and }\gL_\textnormal{asym}(h)&=-\E_{x_1,x_2}h(x_1)^\top h_g(x_2)=-\tr(  H_g ^\top \bar  A_M H), \label{eq:asymmetric-alignment-loss}
\end{align}
\label{thm:asym-align}
where $H$ denotes the output matrix of $h$ on $\gX_1$ whose $x_1$-th row is $H_{x_1}=\sqrt{d_{x_1}}h(x_1)$, and $H_g$ denotes the output matrix of $h_g$ on $\gX_2$ whose $x_2$-th row is $(H_g)_{x_2}=\sqrt{d_{x_2}}h_g(x_2)$. 
\end{theorem}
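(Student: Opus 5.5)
The plan is to expand the square in the $l_2$ reconstruction loss and then replace the target $x_2$ by its pseudo-reconstruction $h_g(x_2)=g(f_g(x_2))$. Assumption~\ref{ass:aprroximation pesudo encoder} guarantees that this replacement changes the target by at most $\varepsilon(g)$ in mean squared error.

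First, write $x_2 = h_g(x_2) + (x_2 - h_g(x_2))$. Expanding gives
\[
\|h(x_1)-x_2\|^2=\|h(x_1)\|^2-2h(x_1)^\top x_2+\|x_2\|^2 .
\]
The term $\E\|x_2\|^2$ does not depend on $h$, so it is absorbed into the constant. In the cross term I split
\[
h(x_1)^\top x_2 = h(x_1)^\top h_g(x_2) + h(x_1)^\top (x_2-h_g(x_2)).
\]
The residual piece is bounded with Young's inequality, $2|a^\top b|\le \|a\|^2+\|b\|^2$. Choosing the weights so that the $\|h(x_1)\|^2$ produced by Young cancels against the $+\|h(x_1)\|^2$ already present in the expansion, I get
\[
\E\|h(x_1)-x_2\|^2 \;\ge\; -2\E\, h(x_1)^\top h_g(x_2) - \E\|x_2-h_g(x_2)\|^2 + \text{const}.
\]
The subtracted residual equals $\varepsilon(g)$ evaluated on the masked views, because $f_g$ attains the minimum. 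The factor $2$ and any normalization of $x_2$ (the outputs are normalized, as in MAE) are then absorbed by rescaling the loss or the constant; equivalently, one can use $-2a^\top b \ge -\|a\|^2-\|b\|^2$ once the outputs are normalized. This step needs the most care: the coefficients have to be tracked so that no $h$-dependent term is left over other than the alignment term. If the direct Young-type split leaves a stray $\|h(x_1)\|^2$, I would fall back on the normalized setting, where $\|h\|$ and $\|\hat x_2\|$ are fixed and the expansion contains only the inner product.

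Second, I rewrite the alignment expectation in matrix form. Since $\E_{x_1,x_2}$ is taken over the joint distribution $\gM(x_1,x_2)=w_{x_1,x_2}=(A_M)_{x_2,x_1}$,
\[
\E\, h(x_1)^\top h_g(x_2)=\sum_{x_1,x_2} w_{x_1,x_2}\, h(x_1)^\top h_g(x_2).
\]
Inserting the degree factors gives
\[
w_{x_1,x_2}=\sqrt{d_{x_2}}\,(\bar A_M)_{x_2,x_1}\sqrt{d_{x_1}},
\]
which follows from $\bar A_M=D_2^{-1/2}A_MD_1^{-1/2}$. The sum therefore equals
\[
\sum_{x_1,x_2}(H_g)_{x_2}^\top(\bar A_M)_{x_2,x_1}H_{x_1}=\tr(H_g^\top\bar A_M H),
\]
which is the stated identity for $\gL_\textnormal{asym}$.
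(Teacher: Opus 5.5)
Your proof is correct in the normalized setting the paper works in, but the first inequality comes from a different decomposition. The paper adds $\E\|x_2-h_g(x_2)\|^2\le\varepsilon(g)$ to the loss and applies $\|a+b\|^2\le 2(\|a\|^2+\|b\|^2)$, which gives $\gL_\textnormal{MPT}(h)\ge\frac12\E\|h(x_1)-h_g(x_2)\|^2-\varepsilon(g)$. It then uses $\|h\|=\|h_g\|=1$ to rewrite the right side as $1-\E\,h(x_1)^\top h_g(x_2)-\varepsilon(g)$, so the coefficient $1$ on $\gL_\textnormal{asym}$ and the constant $1$ come out directly.

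Your unit-weight Young step is in fact the exact identity
\[
\|h(x_1)-x_2\|^2+\|x_2-h_g(x_2)\|^2-\|x_2\|^2+2\,h(x_1)^\top h_g(x_2)=\|h(x_1)+h_g(x_2)-x_2\|^2\ge 0.
\]
So the worry about a stray $\|h(x_1)\|^2$ is unfounded: the cancellation is exact. You obtain $\gL_\textnormal{MPT}(h)\ge 2\gL_\textnormal{asym}(h)-\varepsilon(g)+\E\|x_2\|^2$ with no normalization of $h$ or $h_g$, and with a constant that depends on neither $f$ nor $g$.

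The cost is the factor $2$, and here one of your two options fails. You cannot remove it by ``rescaling the loss'', since the claim is about $\gL_\textnormal{MPT}$ itself. It can be absorbed into the constant only through $h(x_1)^\top h_g(x_2)\le 1$, i.e.\ $-2\E\,h(x_1)^\top h_g(x_2)\ge-\E\,h(x_1)^\top h_g(x_2)-1$. That requires exactly the normalization the paper invokes, and it yields constant $0$ rather than the paper's $1$ when $\|\hat x_2\|=1$. State this step explicitly instead of leaving it as a fallback.

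Your matrix rewriting via $w_{x_1,x_2}=\sqrt{d_{x_2}}(\bar A_M)_{x_2,x_1}\sqrt{d_{x_1}}$, leading to $\tr(H_g^\top\bar A_M H)$, is the same computation as the paper's.
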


\begin{wrapfigure}[13]{r}{0.5\textwidth}
    \centering
    \vspace{-6pt}
    \includegraphics[width=\linewidth]{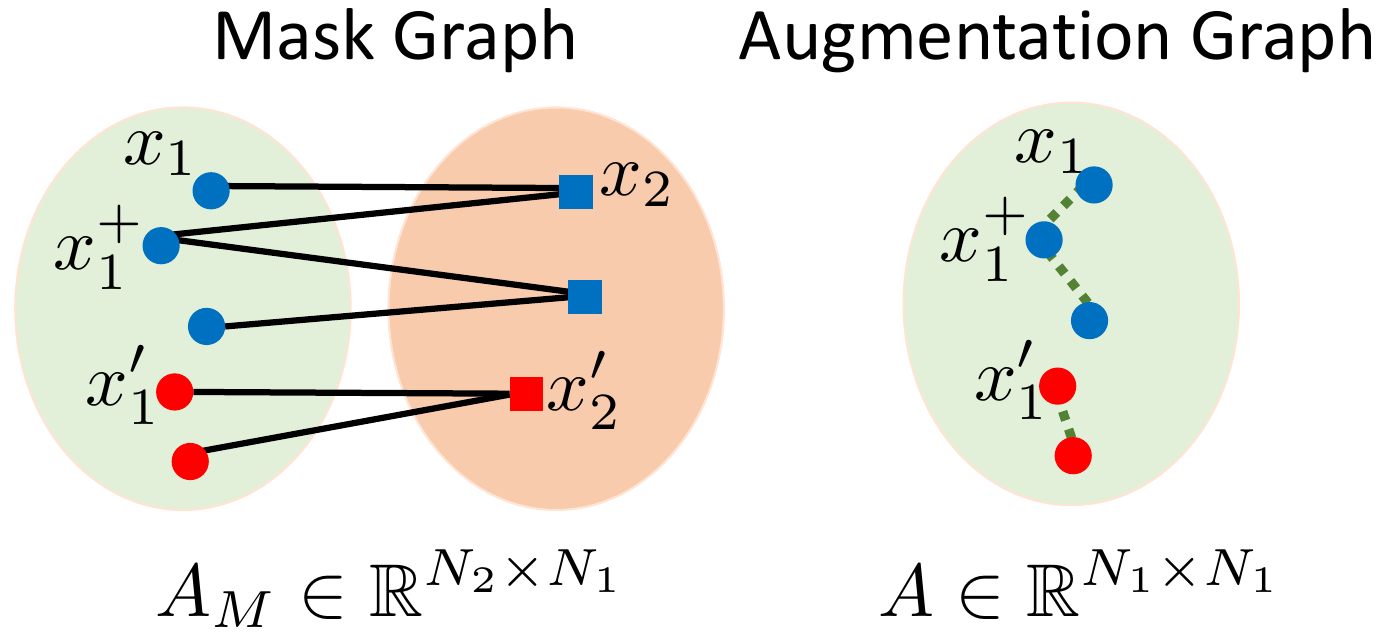}
    \caption{An illustration of the mask graph and the corresponding augmentation graph of MPT. Different colors denote different belonging classes.}
    \label{fig:mask-graph}
    \vspace{-8pt}
\end{wrapfigure}
Consequently, when the MPT loss is small, it implies a reduced alignment loss (as a lower bound). This indicates that MPT, with its encoder-decoder architecture, implicitly aligns the masked and unmasked views. However, it is still unclear why aligning these two complementary views contributes to learning meaningful features. To gain a deeper understanding, we further explore the effect of masking in MPT. We find that via masking, MPT generates \textit{implicit connections} among different \textit{input samples} in the form of \textit{2-hop connectivity}. Consider a pair of 2-hop input neighbors, denoted as $x_1$ and $x_1^+$, both belonging to $\gX_1$, and sharing a common complementary target view, $x_2\in\gX_2$ (more likely to occur under a larger mask ratio $\rho$), as illustrated in Figure \ref{fig:mask-graph}. By enforcing $x_1$ and $x_1^+$ to reconstruct the same output $x_2$, MPT implicitly maps their features together. In this way, the 2-hop input neighbors serve as \textit{positive pairs} that are implicitly aligned as in contrastive learning.

Motivated by this observation, we seek to establish a formal connection between MPT and contrastive learning below. Specifically, we will show that similar to the data augmentations employed in contrastive learning, the masking mechanism in MPT also introduces implicit affinities between \textit{input samples} and creates (implicit) \textit{positive pairs}. To present this connection formally, we introduce an augmentation graph $\gG_A$, which models the relationship between all input samples in $\gX_1$. It is important to note that this augmentation graph $\gG_A$ is distinct from the mask graph $\gG_M$, which captures the input-output relationship between $\gX_1$ and $\gX_2$.

\paraemph{The Augmentation Graph $\gG_A$ of MPT.} 
To model the mask-induced affinity between all unmasked views in $\gX_1$, we construct an augmentation graph $\gG_A$.\footnote{We can also construct an augmentation graph $\gG'_A$ on the output space $\gX_2$, where the $(x_2,x_2')$-th element of the adjacency matrix $A'$ is defined by $\gA(x_2,x_2')=\E_{x_1}[\gM(x_2|x_1)\gM(x_2'|x_1)]$. As the results are quite similar, we mainly take the input space as an example in this work.} Specifically, for any pair of views $x_1$ and $x'_1$ in $\gX_1$, we define the edge weight in the adjacency matrix $A$ as the probability of having the same target view, \ie $\gA(x_1,x_1') = \E_{x_2}[\gM(x_1|x_2)\gM(x'_1|x_2)]$.\footnote{$\gM(x_1|x_2)=\gM(x_1,x_2)/\gM(x_2)$ can further be calculated by marginalizing $\gM(x_1,x_2|\bar x)$.} With this formulation, we define a \textit{symmetric} alignment loss with respect to the positive pairs $(x_1,x_1^+)\sim\gA(x_1,x_1^+)$ drawn according to the augmentation graph:
\begin{equation}
 \gL_\text{align}(h)=-\E_{x_1,x_1^+}h(x_1)^\top h(x_1^+).
 \label{eq:alignment-loss}
\end{equation}

\paraemph{MPT Performs Symmetric Input Alignment on the Augmentation Graph.} 
Built upon the augmentation graph constructed above, we theoretically verify the intuition on 2-hop connectivity. To achieve this, we establish a relationship between the asymmetric input-output alignment loss on the mask graph and the symmetric input alignment loss on the augmentation graph in the following theorem.
\begin{theorem}
The asymmetric alignment loss on the mask graph (Eq.~\ref{eq:asymmetric-alignment-loss}) can be lower bounded by the symmetric alignment loss on the augmentation graph (Eq.~\ref{eq:alignment-loss}):
\begin{equation}
\gL_\textnormal{asym}(h)\geq \frac{1}{2}\gL_\textnormal{align}(h)+\textnormal{const}.
\end{equation}
\paraemph{Proof Sketch.} We provide a proof sketch of this inequality as it is the key to our analysis. We first symmetrize the asymmetric alignment loss $\gL_\textnormal{asym}(h)$ with an arithmetic inequality, and then establish its equivalence to the symmetric alignment loss $\gL_\textnormal{align}(h)$. The derivation highlights the intrinsic connection between the mask graph (adjacency matrix $A_M$) and the augmentation graph (adjacency matrix $A$).
\begin{equation*}
    \begin{aligned}
    \gL_\textnormal{asym}(h)&=\E_{x_1,x_2}h(x_1)^\top h_g(x_2)\\
    &=-{\color{blue}\tr( H_g^\top \bar A_M H )} & (\text{reformulated to the mask graph (Theorem \ref{thm:asym-align}}))\\
    &\geq-\frac{1}{2}(\| \bar A_M H\|^2+\|H_g\|^2) &(\text{because }\tr(AB)\leq\frac{1}{2}(\|A\|^2+\|B\|^2)) \\
    &=-\frac{1}{2}{\color{blue}\tr(H^\top \bar A_M^\top \bar A_M H)}-\frac{1}{2} &(\text{because }\|H_g\|^2=\sum_{x_2}d_{x_2}\|h_g(x_2)\|^2=1)\\
    &=-\frac{1}{2}{\color{blue}\sum_{x_1,x'_1}A_{x_1,x'_1}h(x_1)^\top h(x'_1)}-\frac{1}{2} &\text{(transformed to the augmentation graph)}\\
    &=\frac{1}{2}\gL_\textnormal{align}(h)-\frac{1}{2}. &\text{(following the definition in Eq.~\ref{eq:alignment-loss})}
    \end{aligned}
\end{equation*}
\label{thm:asym-sym}
\end{theorem}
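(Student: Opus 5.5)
The plan is to follow the chain indicated in the proof sketch, but to make each step rigorous and to pin down which normalization makes the constants come out. Start from the matrix form given by Theorem~\ref{thm:asym-align}, $\gL_\textnormal{asym}(h)=-\tr(H_g^\top\bar A_M H)$, and verify it directly by expanding. Writing $\bar A_M=D_2^{-1/2}A_MD_1^{-1/2}$ and $H_{x_1}=\sqrt{d_{x_1}}h(x_1)$, $(H_g)_{x_2}=\sqrt{d_{x_2}}h_g(x_2)$, the degree factors cancel exactly. This leaves $-\sum_{x_1,x_2}\gM(x_1,x_2)h(x_1)^\top h_g(x_2)$, which is $-\E_{x_1,x_2}h(x_1)^\top h_g(x_2)$.

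Next apply the elementary inequality $\tr(X^\top Y)\le \frac12(\|X\|_F^2+\|Y\|_F^2)$, which is just $\|X-Y\|_F^2\ge0$. Take $X=H_g$ and $Y=\bar A_M H$; this gives $\gL_\textnormal{asym}(h)\ge -\frac12\|\bar A_MH\|_F^2-\frac12\|H_g\|_F^2$. The term $\|H_g\|_F^2=\sum_{x_2}d_{x_2}\|h_g(x_2)\|^2=\E_{x_2}\|h_g(x_2)\|^2$ does not depend on $h$; it depends only on the fixed decoder $g$ and its pseudo-inverse $f_g$. So it is absorbed into the constant. If outputs are $l_2$-normalized, matching $\hat x_2$ in the MAE loss, it equals $1$, as in the sketch.

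The main step is identifying $\|\bar A_MH\|_F^2=\tr(H^\top\bar A_M^\top\bar A_M H)$ with the augmentation-graph quantity $\E_{x_1,x_1^+}h(x_1)^\top h(x_1^+)$. Compute entrywise:
\[
(\bar A_M^\top\bar A_M)_{x_1,x_1'}=\sum_{x_2}\frac{\gM(x_1,x_2)\gM(x_1',x_2)}{d_{x_2}\sqrt{d_{x_1}d_{x_1'}}}.
\]
Multiplying by $\sqrt{d_{x_1}d_{x_1'}}\,h(x_1)^\top h(x_1')$ from the rows of $H$ cancels the $x_1$-degrees. Since $d_{x_2}=\gM(x_2)$ and $\gM(x_1,x_2)=\gM(x_1|x_2)\gM(x_2)$, the summand becomes $\gM(x_2)\gM(x_1|x_2)\gM(x_1'|x_2)$. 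Summing over $x_2$ gives exactly $\gA(x_1,x_1')=\E_{x_2}[\gM(x_1|x_2)\gM(x_1'|x_2)]$. Hence $\tr(H^\top\bar A_M^\top\bar A_MH)=\sum_{x_1,x_1'}\gA(x_1,x_1')h(x_1)^\top h(x_1')=-\gL_\textnormal{align}(h)$.

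Two points need care here. First, one must check that $\gA$ is a genuine joint distribution on $\gX_1\times\gX_1$: it sums to $1$ because $\sum_{x_1}\gM(x_1|x_2)=1$, so the expectation in Eq.~\ref{eq:alignment-loss} is well defined. Second, one must keep track of the degree bookkeeping and avoid conflating $A$ with $A_M$.

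Combining the steps gives $\gL_\textnormal{asym}(h)\ge \frac12\gL_\textnormal{align}(h)-\frac12\E_{x_2}\|h_g(x_2)\|^2$, which is the claim with $\textnormal{const}=-\frac12\E_{x_2}\|h_g(x_2)\|^2$, or $-\frac12$ under normalization. I expect the only real obstacle to be this degree-cancellation identity in the third step. I would verify it first on the entrywise sum before writing it in matrix form.
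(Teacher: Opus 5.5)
Your proposal is correct and follows the same route as the paper's proof, which appears in the appendix inside the proof of Theorem~\ref{thm:implicit-alignment}: apply $\tr(X^\top Y)\le\frac12(\|X\|^2+\|Y\|^2)$ to $H_g$ and $\bar A_M H$, use $\|H_g\|^2=1$ under normalization, and expand $\bar A_M^\top\bar A_M$ entrywise so the degree factors cancel to $\sum_{x_2}w_{x_1,x_2}w_{x_1',x_2}/d_{x_2}=\gA(x_1,x_1')$. Your extra checks (that $\gA$ sums to one, and that without normalization the constant is $-\frac12\E_{x_2}\|h_g(x_2)\|^2$) are sound refinements and do not change the argument.
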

Combining Theorem \ref{thm:asym-align} and Theorem \ref{thm:asym-sym}, we derive the main theorem, which shows that MPT's reconstruction loss can be bounded by the symmetric alignment loss of the positive input pairs $(x_1,x_1^+)$ drawn according to the augmentation graph. We provide visualization examples of the augmentation graph in Appendix \ref{sec:vis of augmentation graph}.
\begin{theorem}
Under Assumption \ref{ass:aprroximation pesudo encoder}, MPT's reconstruction loss (Eq.~\ref{eq:mae-loss}) can be lower bounded by the alignment loss between positive pairs $(x_1,x^+_1)\sim\gA(x_1,x_1^+)$,
\begin{equation}
    \gL_\textnormal{MPT}(h) \geq \frac{1}{2}\gL_\textnormal{align}(h)-\varepsilon(g)+ \textnormal{const} = -\frac{1}{2} \E_{x_1,x_1^+}h(x_1)^\top h(x_1^+) -\varepsilon(g)+ \textnormal{const}.
\end{equation}
\label{thm:implicit-alignment}
\end{theorem}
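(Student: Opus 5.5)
The plan is simply to chain Theorem~\ref{thm:asym-align} with Theorem~\ref{thm:asym-sym}. Both are stated earlier and may be assumed.

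First, under Assumption~\ref{ass:aprroximation pesudo encoder}, Theorem~\ref{thm:asym-align} gives
\[
\gL_\textnormal{MPT}(h)\geq\gL_\textnormal{asym}(h)-\varepsilon(g)+c_1,
\]
for a constant $c_1$ independent of $h$. This constant collects the norms of the normalized targets $\hat x_2$ and the cross terms that were absorbed when $x_2$ was replaced by the pseudo-reconstruction $h_g(x_2)$.

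Second, Theorem~\ref{thm:asym-sym} gives
\[
\gL_\textnormal{asym}(h)\geq\tfrac12\gL_\textnormal{align}(h)+c_2,
\]
with $c_2=-\tfrac12$ coming from $\|H_g\|^2=1$. Substituting the second inequality into the first yields
\[
\gL_\textnormal{MPT}(h)\geq\tfrac12\gL_\textnormal{align}(h)-\varepsilon(g)+(c_1+c_2).
\]
Expanding $\gL_\textnormal{align}$ by its definition in Eq.~\ref{eq:alignment-loss} then gives the displayed form $-\tfrac12\E_{x_1,x_1^+}h(x_1)^\top h(x_1^+)$.

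The only point needing care is the bookkeeping of constants. The constant in Theorem~\ref{thm:asym-sym} depends on the normalization $\sum_{x_2}d_{x_2}\|h_g(x_2)\|^2=1$. I would check that this is consistent with Theorem~\ref{thm:asym-align}, where $h_g$ approximates the $l_2$-normalized $\hat x_2$ and $\sum_{x_2} d_{x_2}=1$. If the pseudo-outputs are only approximately unit norm, the discrepancy should be absorbed into the $\varepsilon(g)$ term or into the constant; I expect the paper to treat it as an exact normalization.

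I would also verify that the identity $\bar A_M^\top\bar A_M$ with $H$ rows scaled by $\sqrt{d_{x_1}}$ reproduces exactly $\sum A_{x_1,x_1'}h(x_1)^\top h(x_1')$, with $A$ defined through $\gM(x_1|x_2)\gM(x_1'|x_2)$ weighted by $\gM(x_2)$. This is already contained in Theorem~\ref{thm:asym-sym}, so it does not need to be reproved. Beyond that, the proof is a two-line composition.
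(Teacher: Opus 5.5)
Your proposal is correct and matches the paper's proof. The appendix argument is exactly the step behind Theorem~\ref{thm:asym-align} (add and subtract $\varepsilon$, apply $\|a+b\|^2\leq 2(\|a\|^2+\|b\|^2)$, and use unit-normalized outputs to reach $-\E_{x_1,x_2}h(x_1)^\top h_g(x_2)-\varepsilon+1$) followed by the trace inequality of Theorem~\ref{thm:asym-sym} (with $\|H_g\|^2=1$ contributing $-\tfrac12$), so the total constant is $\tfrac12$. One small correction: the $+1$ comes from the unit norms of $h(x_1)$ and $h_g(x_2)$ inside $\tfrac12\|h(x_1)-h_g(x_2)\|^2$, not from the targets $\hat x_2$ or absorbed cross terms, and the paper does treat this normalization as exact.
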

\vspace{-0.2in}
In this way, we establish a close relationship between the two prominent SSL paradigms (MPT and contrastive learning) by showing that a small MPT loss will imply a small alignment loss of positive input pairs as in contrastive learning.
Leveraging this connection, we could establish guarantees for the downstream generalization of MPT (discussed in Section \ref{sec:downstream}). 

\paraemph{Comparison to Contrastive Learning.} Comparing the alignment loss (Eq.~\ref{eq:alignment-loss}) to that of contrastive learning \citep{simclr,haochen}, we observe that the main difference lies in that contrastive learning aligns features in the \textit{latent} space of the encoder $f$, while MPT aligns features in the \textit{output} space with an encoder-decoder architecture $h=g\circ f$. Nevertheless, it is worth noting that most variants of contrastive learning apply a nonlinear projection head $g$ following the encoder $f$ before calculating the alignment loss \citep{simclr,simclrv2,moco,BYOL,simsiam,ouyang2025projection}. Similarly, MPT also utilizes a decoder $g$. Therefore, we may also regard the role of MPT's decoder as the projection head in contrastive learning.
Theoretically, if we further assume the bi-Lipschitzness of the decoder, we can show that the MPT loss is further lower bounded by an alignment loss defined in the feature space. 
\begin{corollary}
Under Assumption \ref{ass:aprroximation pesudo encoder} and the assumption that the decoder is $L$-bi-Lipschitz, \ie $\forall\ (x_1,x_2), \frac{1}{L}\| x_1 -x_2\|^2\leq\|g(x_1)-g(x_2)\|^2 \leq L \| x_1 -x_2\|^2.$
Then, the MPT reconstruction loss can be lower bounded by the alignment loss \wrt the encoder outputs:
\begin{equation}
\gL_\textnormal{MPT}(h)\geq -\frac{1}{2L}\cdot \E_{x_1,x_1^+} f(x_1) ^\top f(x_1^+) -\varepsilon(g) + \textnormal{const}.
\label{eq:MPT-alignment}
\end{equation}
\label{cor:implicit-alignment}
\end{corollary}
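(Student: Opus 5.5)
The plan is to start from Theorem~\ref{thm:implicit-alignment} and turn the output-space alignment term into a feature-space one using the bi-Lipschitz property of $g$. The symmetric alignment loss only involves inner products $h(x_1)^\top h(x_1^+)$. Bi-Lipschitzness, however, controls distances rather than inner products. So I will first rewrite the alignment loss in distance form, using the polarization identity
\[
-h(x_1)^\top h(x_1^+) = \tfrac{1}{2}\|h(x_1)-h(x_1^+)\|^2 - \tfrac{1}{2}\|h(x_1)\|^2-\tfrac{1}{2}\|h(x_1^+)\|^2 .
\]
The norm terms need to be absorbed into the constant. As in the proof sketch of Theorem~\ref{thm:asym-sym}, where $\|H_g\|^2=1$ is used, I will rely on the normalization convention $\sum_{x_1} d_{x_1}\|h(x_1)\|^2=1$. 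I will also use the analogous convention for $f$, namely $\sum_{x_1} d_{x_1}\|f(x_1)\|^2=1$.

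Since the marginal of each endpoint of a positive pair $(x_1,x_1^+)\sim\gA$ is $d_{x_1}$, the expected squared norms are fixed. This gives $\gL_\textnormal{align}(h)=\tfrac12\E_{x_1,x_1^+}\|h(x_1)-h(x_1^+)\|^2+\textnormal{const}$. The same rewriting applies to $f$.

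Next I apply the lower Lipschitz bound pointwise. With $h=g\circ f$,
\[
\|h(x_1)-h(x_1^+)\|^2=\|g(f(x_1))-g(f(x_1^+))\|^2\ge \tfrac1L\|f(x_1)-f(x_1^+)\|^2 .
\]
Taking the expectation over positive pairs gives $\gL_\textnormal{align}(h)\ge \tfrac1L\cdot\tfrac12\E\|f(x_1)-f(x_1^+)\|^2+\textnormal{const}$. I then expand back with polarization and use the fixed feature norms, which yields $\gL_\textnormal{align}(h)\ge -\tfrac1L\E_{x_1,x_1^+} f(x_1)^\top f(x_1^+)+\textnormal{const}$.

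Substituting this into Theorem~\ref{thm:implicit-alignment}, which carries a factor $\tfrac12$, gives
\[
\gL_\textnormal{MPT}(h)\ge -\tfrac{1}{2L}\,\E_{x_1,x_1^+} f(x_1)^\top f(x_1^+)-\varepsilon(g)+\textnormal{const},
\]
which is Eq.~\ref{eq:MPT-alignment}.

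The main obstacle is justifying that the norm terms are genuinely constants. Without a normalization on $f$ and $h$, the term $\tfrac{1}{2L}\E\|f\|^2$ depends on $f$. I would handle this by making explicit the same normalization convention implicitly used in Theorem~\ref{thm:asym-sym}, i.e.\ unit degree-weighted norms for $h$ and $f$. Alternatively, one could appeal to the fact that MAE's target $\hat x_2$ is $l_2$-normalized, so that near-optimal outputs have fixed norm. Failing both, I would keep the norm terms explicitly as additional $f$-dependent terms in the bound. The upper Lipschitz constant is not needed.
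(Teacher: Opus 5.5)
Your proposal is correct and follows the same route as the paper. The paper starts from Theorem~\ref{thm:implicit-alignment}, rewrites $-\frac12\E\, h(x_1)^\top h(x_1^+)$ as $\frac14\E\|h(x_1)-h(x_1^+)\|^2$ by normalization, applies the lower bi-Lipschitz bound to reach $\frac{1}{4L}\E\|f(x_1)-f(x_1^+)\|^2$, and expands back by tacitly treating $h$ and $f$ as unit-norm. Your degree-weighted normalization is a slightly weaker convention that works just as well, since each endpoint of a positive pair has marginal $d_{x_1}$, and making the normalization of $f$ explicit fills a step the paper leaves implicit.
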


\subsection{Cross-Entropy Loss Is Implicitly Equal to Token-Level Alignment}
\label{sec:ce-align-conn}

While pixel-wise reconstruction-based MPT methods align features through masked prediction, we further analyze whether cross-entropy-driven approaches like BEiT and BERT also achieve alignment indirectly via token prediction in this section.

Incorporating Eq.~(\ref{eqn:ce-softmax}), the cross-entropy (CE) loss, which is useful in models such as BERT and BEiT, can be uniformly reformulated as:

\begin{align}
\gL_{\textnormal{CE}} &= -\E_{x_1,x_2} \sum_k\log \Pr(x_{2,k}^+|x_1) \nonumber \\
&= \sum_{k}\left[-\E_{x_1,x_2} f(x_1)^\top w_{x_{2,k}^+} + \E_{x_1,x_2}\log \sum_{x_{2,k}^-} \exp(f(x_1)^\top w_{x_{2,k}^-})\right],
\end{align}
where $x_{2,k}^+$ is a ground-truth token and $x_{2,k}^-$ is an irrelevant token. The core objective of CE loss is to maximize the predicted probability of corresponding masked tokens while minimizing the probability of irrelevant tokens. Following the same spirit, we adopt the spectral loss \citep{haochen} for simplicity of analysis, i.e.:
\begin{align}
\gL_{\textnormal{spectral}} = -\sum_{k}\E_{x_1,x_2}f(x_1)^\top w_{x_{2,k}^+} + \sum_{k}\E_{x_1,x_2,x_{2,k}^-}\left[f(x_1)^\top w_{x_{2,k}}^-\right]^2.
\end{align}
From this formulation, we observe that the first term of the cross-entropy loss encourages alignment between the predicted features of the unmasked portion ($f(x_1)$) and the feature vectors of the masked tokens in the classification head $(w_{x_{2,k}^+})$. Consequently, when two unmasked views are aligned with the same masked view, they are also implicitly aligned with each other. As a result, we know that MPT methods with CE loss also implicitly align the unmasked views that share the same masked views.

\section{Avoiding Dimensional Collapse in MPT with an Explicit Uniformity Regularization}

In Section \ref{sec:mae implicitly aligns positive pairs}, we have demonstrated that the MPT loss is implicitly equal to an alignment loss. However, in contrastive learning, simply aligning the positive pairs leads to the problem of full feature collapse, because the alignment loss can also be minimized when the encoder produces a constant feature for all inputs.  Interestingly, MPT manages to avoid full feature collapse. This raises the question of how MPT achieves this property and we discuss it in this section. Specifically, in Section \ref{sec:implicit-uniformity}, we reveal that the MPT loss can avoid full feature collapse while still suffering from dimensional collapse. In Section \ref{sec:u-MPT}, we propose uniformity-enhanced MPT (U-MPT) loss to solve the dimensional collapse issue. Empirically, we verify the effectiveness of the U-MPT objective in Section \ref{U-MPT on real-w datasets.}. Furthermore, in Section \ref{sec:empirical understandings of u-mpt}, we empirically analyze the advantages of U-MPT and the choices of hyperparameters.

\subsection{The Feature Collapse Issue in MPT}
\label{sec:implicit-uniformity}

As shown in Section \ref{sec:mae implicitly aligns positive pairs}, during the pretraining process, MPT implicitly aligns the semantically similar images. However, we note that simply aligning the positive pairs may lead to the full feature collapse, i.e.,  the network can minimize the alignment loss by encoding all input into a constant. To address this issue, various techniques have been proposed in contrastive learning, such as incorporating additional losses to encourage feature uniformity or decorrelation \citep{simclr, moco, barlowtwins}, or employing asymmetric structural designs \citep{BYOL, simsiam, swav}. Recent theoretical work further attributes the collapse-avoidance effect of such asymmetric designs to a rank differential mechanism that improves the effective dimensionality of learned representations \citep{zhuo2023towards}. 
However, MPT based on the pixel-wise reconstruction loss avoids this issue without special designs. In the following theorem, we show that minimizing the MPT loss can provably get rid of the full feature collapse.
\begin{theorem}
When the encoder fully collapses, \ie $\forall x\in\gX_1,f(x)=c$, the MPT loss has a large lower bound:
\begin{equation}
    \gL_\textnormal{MPT}(h)\geq \Var(x_2),
\end{equation}
where $\Var(x_2)$ denotes the variance of masked targets computed on the training data set.
\label{thm:MPT not tc}
\end{theorem}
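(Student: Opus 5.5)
If the encoder is constant, $f(x_1)=c$ for every $x_1\in\gX_1$, then the decoder receives the same input regardless of the unmasked view. The reconstruction $h(x_1)=g(c)$ is therefore a single fixed vector $v:=g(c)$ that does not depend on $x_1$ or $\bar x$. The plan is to substitute this into the $l_2$ MPT loss (Eq.~\ref{eq:mae-loss}, or its unnormalized version with $x_2$; the argument is identical if $\hat x_2$ is used, with the variance understood for the normalized targets). This reduces the loss to
\[
\gL_\textnormal{MPT}(h)=\E_{\bar x}\E_{x_2\mid\bar x}\|v-x_2\|_2^2=\E_{x_2}\|v-x_2\|_2^2,
\]
where the expectation is over the marginal distribution of masked targets induced by the data set and the random mask.

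The key step is the bias--variance decomposition. Write $\mu=\E_{x_2}[x_2]$ and expand $\|v-x_2\|^2=\|v-\mu\|^2+\|\mu-x_2\|^2+2(v-\mu)^\top(\mu-x_2)$. The cross term has zero expectation because $v$ is deterministic. This gives
\[
\gL_\textnormal{MPT}(h)=\|v-\mu\|_2^2+\E_{x_2}\|x_2-\mu\|_2^2\geq \Var(x_2),
\]
with $\Var(x_2):=\E\|x_2-\mu\|^2$ the total (trace) variance of the masked targets. Equality holds iff $g(c)=\mu$, so no choice of decoder can make a collapsed encoder do better than predicting the mean target.

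The only delicate point is a bookkeeping issue rather than a real obstacle. The masked view $x_2$ must live in a fixed vector space so that $\mu$ and $\|v-x_2\|$ make sense. I would handle this as the MAE/SimMIM setup suggests: represent $x_2$ as the full $K\times s$ array with unmasked positions zeroed, or with the mask positions passed to the decoder. In the latter case, the argument conditions on the mask $m$: $v_m=g(c,m)$ is constant given $m$, the decomposition is applied conditionally, and the result is $\gL_\textnormal{MPT}\geq\E_m\Var(x_2\mid m)$, which is what "variance of the masked targets" means in that reading. Either way, the same lower bound follows.
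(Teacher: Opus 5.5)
Your proof is correct and is essentially the paper's argument. Both reduce the loss to $\E_{x_2}\|g(c)-x_2\|_2^2$ and note that the best constant prediction is the mean target: the paper finds it from the first-order (KKT) condition $\E_{x_2}(q^\star-x_2)=0$, while your bias--variance identity gives global optimality and the equality case directly. One small correction to your aside: if the decoder also receives the mask, you obtain $\E_m\Var(x_2\mid m)$, which by the law of total variance can be strictly smaller than $\Var(x_2)$, so it is not literally the same bound; in the paper's formalism $h=g\circ f$ the decoder sees only $c$, so your main argument is what proves the statement.
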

\paraemph{MPT Can Avoid Full Feature Collapse.}
Since the training data contain diverse images, the variance $\Var(x_2)$ is relatively large. Consequently, unlike the alignment loss in contrastive learning, the MPT loss cannot be minimized (to a small value) by a fully collapsed encoder. The primary reason is that the alignment loss operates in a fully flexible latent space that allows a collapsed encoder to minimize the loss, while in MPT, the reconstruction loss adopts a parameter-invariant and sample-dependent target $x_2$. As a result, a fully collapsed encoder is unable to minimize the reconstruction loss with respect to $x_2$. Consequently, MPT is inherently resistant to full feature collapse.

\paraemph{MPT Still Suffers from Dimensional Collapse.} Although MPT can prevent full feature collapse, it may still suffer from \emph{dimensional feature collapse} where the learned features lie in a low-dimensional subspace \citep{hua2021feature,jing2021understanding}, which also limits its representation power.
To illustrate that this issue can arise even under optimal reconstruction, we construct a simple example in which perfect masked reconstruction is compatible with a dimensionally collapsed representation.
\begin{prop}
Consider a masked reconstruction problem with a data set of $N$ images, where each image consists of $K$ identical patches:
\[
x_i=[p_i,p_i,\cdots,p_i],
\]
where $p_i\in\mathbb{R}^{s}$ and $p_i\neq p_j$ for any $i\neq j$. Let the representation space be $\mathbb{R}^{d}$ with $d\geq 2$. For any mask operator $m$ that leaves at least one patch visible, there exists an encoder-decoder pair $(f,g)$ that achieves perfect masked reconstruction while all learned representations lie in a one-dimensional subspace of $\mathbb{R}^{d}$.
\end{prop}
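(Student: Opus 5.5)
The proof is an explicit construction. The key point is that the $N$ images differ only through their patch vectors $p_1,\dots,p_N$, which are pairwise distinct. Their identity can therefore be stored in a single real coordinate, and the decoder can recover everything else from it.

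\textbf{Encoder.} Fix a unit vector $u\in\mathbb{R}^d$. Define $f(x_1)=\phi(q(x_1))\,u$, where $q(x_1)\in\mathbb{R}^s$ is the first visible patch of $x_1$ and $\phi:\mathbb{R}^s\to\mathbb{R}$ satisfies $\phi(p_i)=i$. Such a $\phi$ exists because the $p_i$ are distinct. One concrete choice picks a direction $v$ with $v^\top p_i$ pairwise distinct, which is possible since only finitely many hyperplanes $\{v: v^\top(p_i-p_j)=0\}$ must be avoided, and then interpolates. Because every patch of $x_i$ equals $p_i$, $q(x_1)=p_i$ for every mask that leaves at least one patch visible. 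So $f$ is well defined on all unmasked views and $f(x_1)=i\,u$ depends only on the image index. Every representation lies in $\operatorname{span}\{u\}$, a one-dimensional subspace of $\mathbb{R}^d$. The argument never uses $d\ge 2$ except to make ``one-dimensional subspace'' a nontrivial (collapsed) statement.

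\textbf{Decoder.} Define $g$ on $\mathbb{R}^d$ by $g(z)=[P(u^\top z),\dots,P(u^\top z)]$, restricted to the masked positions. Here $P:\mathbb{R}\to\mathbb{R}^s$ is any map with $P(i)=p_i$, for example a piecewise-linear or Lagrange interpolant. Then $g(f(x_1))=[p_i,\dots,p_i]$ over the $n_2$ masked positions, which equals $x_2$ exactly. The reconstruction error is therefore zero for every image and every admissible mask. For the normalized target $\hat x_2$ in Eq.~\ref{eq:mae-loss}, set $P(i)$ to the correspondingly normalized patch instead. If the decoder receives mask positions or mask tokens, as in MAE and SimMIM, it simply ignores them, since all masked patches coincide.

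\textbf{Main obstacle.} The main subtlety is making sure $f$ is a legitimate function of the visible view alone. One must check that different masks of the same image give the same code, and that different images never share a visible patch. Both follow from the identical-patch structure and $p_i\neq p_j$. The only other point is that $\phi$ and $P$ need arbitrary nonlinearity; any finite interpolation suffices, and this is within the expressivity assumed in Assumption~\ref{ass:aprroximation pesudo encoder}. With these checks, $\gL_\textnormal{MPT}(h)=0$ while $\operatorname{rank}\{f(x_1)\}\le 1$, which shows that optimal reconstruction does not rule out dimensional collapse.
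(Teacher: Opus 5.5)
Your construction is correct and is essentially the paper's: encode every visible view of image $i$ as a distinct scalar times a fixed direction (the paper's $f(x_i[m])=Q_i v$, your $f(x_1)=i\,u$), and let an expressive decoder map each code back to the patches $p_i$, so the reconstruction loss is zero while the representation matrix has rank one. What you add is detail the paper leaves to ``a sufficiently expressive decoder'': you check that $f$ is well defined across masks, give explicit interpolants $\phi$ and $P$, and handle normalized targets and mask tokens.
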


To see this, let $v\in\mathbb{R}^{d}$ be a fixed non-zero vector and let
$Q_1,\ldots,Q_N$ be distinct scalars. Since any non-empty visible view
$x_i[m]$ contains the patch $p_i$, and $p_i\neq p_j$ for $i\neq j$, the visible view
uniquely identifies the original image. We can therefore construct an encoder
such that
\[
f(x_i[m])=Q_i v.
\]
A sufficiently expressive decoder can then map each distinct code $Q_i v$
back to the corresponding image:
\[
g(f(x_i[m]))=x_i,\qquad i=1,\ldots,N.
\]
Thus, the reconstruction loss achieves its minimum of zero. However, all representations lie in the one-dimensional subspace
$\mathrm{span}(v)$. Specifically, the representation matrix satisfies
\[
Z=
\begin{bmatrix}
Q_1\\
\vdots\\
Q_N
\end{bmatrix}
v^\top,
\]
and hence $\mathrm{rank}(Z)=1$, although the representation space itself is $d$-dimensional. This counterexample therefore formally demonstrates that optimal masked reconstruction does not, by itself, prevent dimensional collapse.

We further investigate whether such dimensional feature collapse occurs in real-world data by conducting experiments on ImageNet-100. Figure \ref{fig:singular-value} shows that after learning, MPT's features exhibit a higher degree of collapse, as indicated by the reduced number of large singular values (representing non-collapse dimensions). Quantitatively, Figure \ref{fig:effective-rank} shows that during the training process, the features of MPT progressively collapse, resulting in a smaller effective rank \citep{roy2007effective}. This trend confirms that MPT experiences an increasing level of dimensional feature collapse.

\begin{figure}
    \centering
    \subfigure[singular values]{
    \includegraphics[width=.4\textwidth]{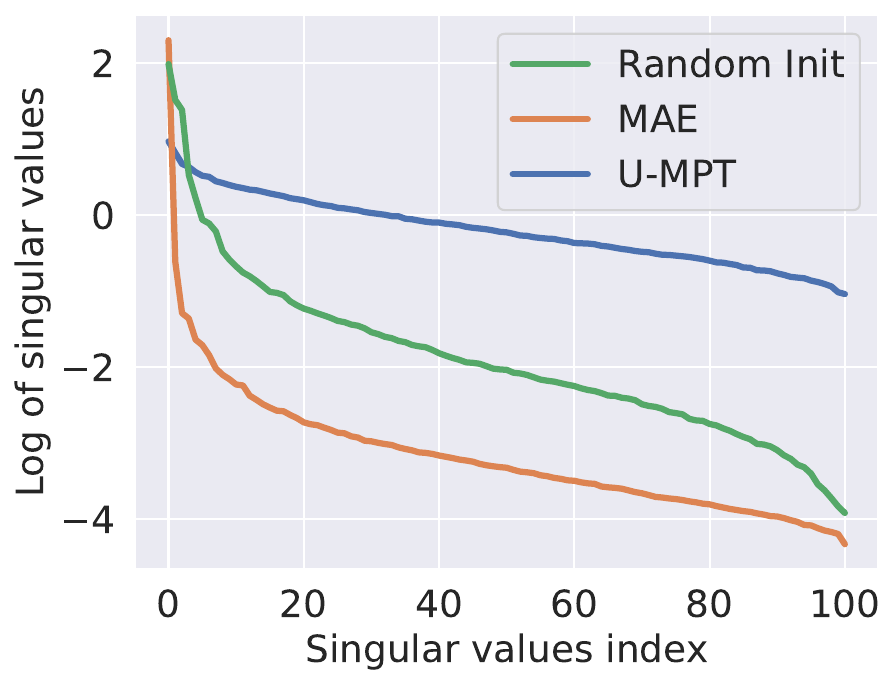}
    \label{fig:singular-value}
    }
    \subfigure[effective rank]{
    \includegraphics[width=.4\textwidth]{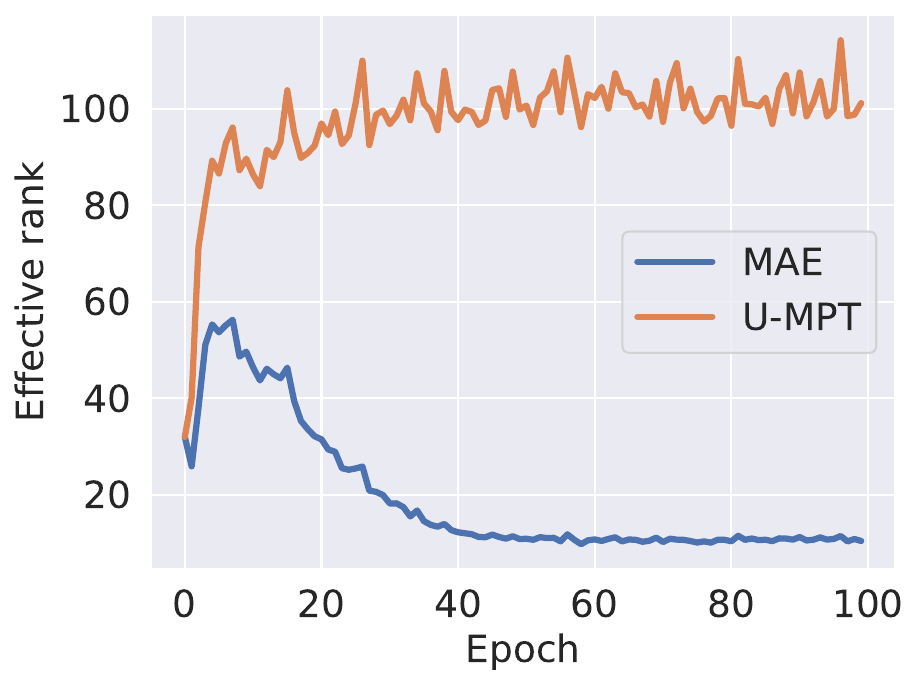}
    \label{fig:effective-rank}
    }    
    \caption{
    (a) Comparison of the singular values of learned features with 1) random initialization, 2) MPT loss, and 3) our U-MPT loss.\\
    (b) The changing process of effective rank \citep{roy2007effective} of the encoded features trained with different objectives (MPT and U-MPT).}
\end{figure}

\subsection{U-MPT: Uniformity-enhanced Masked Pretraining Objective}

\label{sec:u-MPT}
To further enhance the feature diversity of MPT and alleviate the dimensional collapse issue, we draw inspiration from the uniformity loss in contrastive learning \citep{simclr, haochen} and propose the Uniformity-enhanced MPT (U-MPT) loss, which incorporates an explicit regularization term to promote feature uniformity through a coefficient $\lambda>0$,
\begin{align}
    \gL_\textnormal{U-MPT}(h)&=\gL_\textnormal{MPT}(h)+\lambda\cdot \gL_\textnormal{unif}(f),\label{eq:u-MPT}
    \\
    \textnormal{where }\gL_\textnormal{unif}(f)&=\E_{ x_1}\E_{x^-_1}(f(x_1)^\top f(x_1^-))^2,
\end{align}
and ${x}^-_1$ denotes an independently drawn unmasked view from $\gX_1$. Intuitively, the spectral uniformity loss $\gL_\text{unif}(f)$ \citep{haochen} encourages a small feature similarity between random unmasked views, which could effectively promote the feature diversity of all samples. As a preview of the results, we can observe from Figures \ref{fig:singular-value} \& \ref{fig:effective-rank} that U-MPT effectively addresses the dimensional feature collapse issue and significantly improves the effective feature dimensionality by a large margin.

Theoretically, combined with Corollary \ref{cor:implicit-alignment}, we can show that the U-MPT loss is lower-bounded by the spectral contrastive loss with a specific choice of $\lambda$.
\begin{theorem}
Denote the spectral contrastive loss (SCL) from  \citet{haochen} as
\begin{equation}
    \gL_\textnormal{SCL}(f)=2\gL_\textnormal{align}(f)+\gL_\textnormal{unif}(f)=-2\E_{x_1,x_1^+}f(x_1)^\top f(x_1^+)+\E_{x_1,x_1^-}(f(x_1)^\top f(x_1^-))^2.
    \label{eq:scl}
\end{equation}
Under Assumption \ref{ass:aprroximation pesudo encoder} and the assumption that the decoder is $L$-bi-Lipschitz, when $\lambda=1/(4L)$, the U-MPT loss can be lower bounded by the SCL loss:
\begin{equation}
\gL_\textnormal{U-MPT}(h)\geq\frac{1}{4L}\cdot\gL_\textnormal{SCL}(f)-\varepsilon(g) + \frac{1}{2}.
\label{eq:U-MPT-lowerbound}
\end{equation}
\label{thm:spectral-loss}
\vspace{-0.2in}
\end{theorem}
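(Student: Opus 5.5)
The plan is to combine Corollary~\ref{cor:implicit-alignment} with the definition of the U-MPT loss and then match coefficients. By Eq.~\ref{eq:u-MPT},
\[
\gL_\textnormal{U-MPT}(h)=\gL_\textnormal{MPT}(h)+\lambda\,\gL_\textnormal{unif}(f).
\]
Under Assumption~\ref{ass:aprroximation pesudo encoder} and $L$-bi-Lipschitzness of $g$, Corollary~\ref{cor:implicit-alignment} bounds the first term from below by $-\frac{1}{2L}\E_{x_1,x_1^+}f(x_1)^\top f(x_1^+)-\varepsilon(g)+\textnormal{const}$. In the feature space, $\gL_\textnormal{align}(f)=-\E_{x_1,x_1^+}f(x_1)^\top f(x_1^+)$, so the bound reads $\frac{1}{2L}\gL_\textnormal{align}(f)-\varepsilon(g)+\textnormal{const}$.

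Adding $\lambda\gL_\textnormal{unif}(f)$ to both sides with $\lambda=1/(4L)$ gives
\[
\gL_\textnormal{U-MPT}(h)\ \ge\ \frac{1}{2L}\gL_\textnormal{align}(f)+\frac{1}{4L}\gL_\textnormal{unif}(f)-\varepsilon(g)+\textnormal{const}.
\]
The first two terms equal $\frac{1}{4L}\big(2\gL_\textnormal{align}(f)+\gL_\textnormal{unif}(f)\big)=\frac{1}{4L}\gL_\textnormal{SCL}(f)$ by Eq.~\ref{eq:scl}. The choice $\lambda=1/(4L)$ is exactly what makes the coefficient ratio $2:1$ required by the SCL loss.

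The main obstacle is pinning the constant to exactly $\frac12$, since the earlier statements only write ``const''. I will trace it through the chain of Theorems~\ref{thm:asym-align} and~\ref{thm:asym-sym}.

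First, expand $\|h(x_1)-x_2\|^2=\|h(x_1)\|^2-2h(x_1)^\top x_2+\|x_2\|^2$. With normalized targets, as in the MAE loss Eq.~\ref{eq:mae-loss}, the last term contributes $1$ in expectation. Next, replace $x_2$ by $h_g(x_2)$ at the cost of the autoencoding error $\varepsilon(g)$; I will use Cauchy--Schwarz or Young's inequality here, which is where the $-\varepsilon(g)$ term arises. Dropping the nonnegative $\|h(x_1)\|^2$ term then leaves an asymmetric alignment term. Finally, Theorem~\ref{thm:asym-sym} contributes $-\frac12$ via the normalization $\|H_g\|^2=1$.

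I will check that these pieces, together with the factor conventions in the definition of $\gL_\textnormal{asym}$, net to $+\frac12$. If the bookkeeping gives a different value, I will fix a normalization, for example $\E\|h_g(x_2)\|^2=1$ together with the unit-norm targets, consistent with the proof sketch of Theorem~\ref{thm:asym-sym}, so that the constant is exactly $\frac12$.

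The passage from output-space to encoder-space alignment via the bi-Lipschitz constant is already packaged in Corollary~\ref{cor:implicit-alignment}, so I take it as given rather than re-derive it.
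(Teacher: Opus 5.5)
Your main step is the paper's proof. The paper invokes Corollary~\ref{cor:implicit-alignment}, adds $\lambda\gL_\textnormal{unif}(f)$ with $\lambda=1/(4L)$ so that $\frac{1}{2L}\gL_\textnormal{align}(f)+\frac{1}{4L}\gL_\textnormal{unif}(f)=\frac{1}{4L}\gL_\textnormal{SCL}(f)$, and carries the remaining term as ``const''. It does no further bookkeeping. The $\frac12$ in the statement is read off the last line of its proof of the corollary, $-\frac{1}{2L}\E f(x_1)^\top f(x_1^+)-\varepsilon+\frac12$.

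The gap is in the part you call the main obstacle. Tracing through Theorems~\ref{thm:asym-align} and~\ref{thm:asym-sym} gives $\frac12$ only for the output-space bound of Theorem~\ref{thm:implicit-alignment}. The bi-Lipschitz step you take as given is exactly where the constant changes. Using $\|h\|=1$ and that $x_1,x_1^+$ share the marginal $d_{x_1}$,
\begin{equation*}
-\tfrac12\E h(x_1)^\top h(x_1^+)+\tfrac12=\tfrac14\E\|h(x_1)-h(x_1^+)\|^2\geq\tfrac{1}{4L}\E\|f(x_1)-f(x_1^+)\|^2=\tfrac{1}{2L}\E\|f(x_1)\|^2-\tfrac{1}{2L}\E f(x_1)^\top f(x_1^+).
\end{equation*}
So the leftover is $\frac{1}{2L}$ for unit-norm encoder features, and it is not a constant at all otherwise. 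The paper's corollary proof writes $\frac12$ at this point, which is where the factor $1/L$ is lost.

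Since bi-Lipschitzness forces $L\geq1$, the bound with $\frac{1}{2L}$ is weaker than the stated $\frac12$ unless $L=1$. No choice of normalization repairs this. Take $g$ a linear isometry, which is $L$-bi-Lipschitz for every $L\geq1$, with $\gL_\textnormal{MPT}(h)=\varepsilon(g)=0$. Then $f(x_1)=f(x_1^+)$ with unit norm on every positive pair, and the stated inequality reduces to $0\geq\frac12-\frac{1}{2L}$, which fails for $L>1$.

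Your sketched first step is also inconsistent with the corollary you then cite. After expanding the square, the cross term carries a factor $2$. Young's inequality with parameter $1$ then gives $\gL_\textnormal{MPT}(h)\geq2\gL_\textnormal{asym}(h)-\varepsilon+1$, which becomes $\gL_\textnormal{align}(h)-\varepsilon$ after Theorem~\ref{thm:asym-sym}; Cauchy--Schwarz would give $-2\sqrt{\varepsilon}$ instead of $-\varepsilon$. The paper instead uses $\|a+b\|^2\leq2(\|a\|^2+\|b\|^2)$ to get $\frac12\E\|h(x_1)-h_g(x_2)\|^2-\varepsilon=\gL_\textnormal{asym}(h)+1-\varepsilon$. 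You cannot splice a constant from one chain of inequalities onto a coefficient from another. What your argument, like the paper's, actually proves is the inequality with $\frac{1}{2L}$ in place of $\frac12$, under unit-norm encoder features.
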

As a result, minimizing the U-MPT loss will implicitly minimize the spectral contrastive loss among input views. This reveals that, 
by adding a uniformity regularization term, U-MPT aligns well with contrastive learning. Next, we will present the main empirical results of our proposed U-MPT loss on different real-world data sets with different backbones, and then conduct a series of experiments to understand the advantages of U-MPT loss.

\subsection{Evaluation on Benchmark Data Sets}
\label{U-MPT on real-w datasets.}
To evaluate the effectiveness of the proposed U-MPT loss, extensive experiments are conducted on CIFAR-10 \citep{cifar}, ImageNet-100 \citep{imagenet}, and ImageNet-1K \citep{imagenet} with MAE~\citep{mae} being the instantiated MPT paradigm.

\paraemph{Setup.}
We mainly follow the basic setup of MAE: for the encoder, we adopt different variants of ViT \citep{dosovitskiy2020image}, \ie ViT-Tiny, ViT-Base, and ViT-Large. For the decoder, we use a flexible one following \cite{mae}. The mask ratio is set to 0.75. The coefficient of the uniformity term in the U-MPT loss is set to 0.01. On CIFAR-10, we pretrain the model for 2000 epochs with batch size 4096 and weight decay 0.05. On ImageNet-100 and ImageNet-1K, we pretrain the model for 200 epochs with batch size 1024 and weight decay 0.05.

\paraemph{In-domain Linear Probing and Fine-tuning.} We conduct both linear probing and end-to-end fine-tuning on the pretrained encoder, where pretraining and downstream evaluation use the same data set. For linear probing, the pretrained encoder is frozen and only a linear classifier is optimized, which directly measures the quality of the pretrained representation. As for end-to-end fine-tuning, both the pretrained encoder and the classification head are jointly optimized, which evaluates whether the advantage induced during pretraining persists after full supervised adaptation on the same data distribution. In Table \ref{table:fintune results}, we compare the performance of original MAE loss and U-MPT loss on different benchmarks. We find that, on linear evaluation results, our proposed U-MPT loss average increases 8.9\% on CIFAR-10, 7.35\% on ImageNet-100, and 3.35\% on ImageNet-1K with two different backbones. On fine-tuning results, our proposed U-MPT loss will not hurt the performance of fine-tuning results of MAE. This suggests that full downstream adaptation can reduce the performance gap induced by different pretraining objectives, whereas the advantage of U-MPT is more directly reflected in the quality of the pretrained representations themselves.

\begin{table}[!htbp]
\centering
\footnotesize
\begin{tabular}{llcccccc}
\toprule
                                           &  & \multicolumn{2}{c}{CIFAR-10} & \multicolumn{2}{c}{ImageNet-100} & \multicolumn{2}{c}{ImageNet-1K} \\ 
Downstream Task                                  & loss          & ViT-Tiny      & ViT-Base     & ViT-Base        & ViT-Large      & ViT-Base       & ViT-Large      \\ \midrule
\multirow{2}{*}{Linear Probing} & MAE       &  59.6          &61.7                 & 61.2            & 64.4           & 55.4           &62.2                \\
                                           & U-MPT     & \textbf{68.9}                 & \textbf{70.2}           & \textbf{67.5}   & \textbf{72.8}  & \textbf{58.5}           &    \textbf{65.8}            \\ \midrule
\multirow{2}{*}{Fine-tuning}          & MAE       & \textbf{89.6}          & 90.7         & \textbf{86.9}            & \textbf{87.3}           &    82.9            &      \textbf{83.3}          \\
                                           & U-MPT     & 89.4          & \textbf{90.8}         & {86.8}            & \textbf{87.3}           &       \textbf{83.0}         &      83.2          \\ \bottomrule
\end{tabular}
\caption{In-domain linear-probing accuracy (\%) and fine-tuning accuracy (\%) of models pretrained by MAE loss and U-MPT loss with different ViT backbones on CIFAR-10, ImageNet-100, and ImageNet-1K. The uniformity {regularizer} term in the U-MPT loss significantly improves the linear evaluation performance of the MAE loss without hurting the performance of fine-tuning.}
\label{table:fintune results}
\end{table}

\paraemph{Cross-dataset Fine-tuning.} 
We next evaluate whether the benefit of U-MPT persists when the pretrained encoder is transferred to and fully adapted on a downstream data set different from the pretraining data set. Starting from models pretrained on ImageNet, we perform end-to-end fine-tuning on four downstream benchmarks with diverse characteristics: 1) iNaturalist-2021 \citep{inat}, a large-scale and long-tailed fine-grained species classification data set; 2) CUB \citep{wah2011caltech}, a fine-grained bird classification benchmark; 3) Cars \citep{car197}, a fine-grained car model classification benchmark; and 4) ImageNet-A \citep{ImageNet-A}, a challenging data set of naturally occurring images that exhibits a substantial distribution shift from the standard ImageNet distribution. As shown in Table~\ref{table:fine-grained}, U-MPT improves performance across all four downstream data sets. The improvement is particularly pronounced on iNaturalist-2021, where accuracy increases from 66.0\% to 69.4\%. On ImageNet-A, U-MPT also improves accuracy from 18.4\% to 19.9\%. These results suggest that the benefit of U-MPT can remain visible even after the pretrained encoder is fully adapted to a shifted downstream data set.

\begin{table}[!h]
\vspace{0.1cm}
\renewcommand{\arraystretch}{1.2}
\centering
\setlength{\tabcolsep}{6.0mm}{
\begin{tabular}{lcccc}
\hline
 Method    &  iNat$_{21}$   &  CUB & Cars & ImageNet-A   \\ \hline
MAE  & 66.0  & 81.5 & 59.8 & 18.4 \\
\rowcolor{gray!20}  U-MPT  &  \textbf{69.4}  & \textbf{81.8} & \textbf{60.3} & \textbf{19.9}\\ \hline
\end{tabular}}
\caption{Cross-dataset fine-tuning accuracies (\%) of models pretrained on ImageNet using MAE and U-MPT objectives. The pretrained models are subsequently fine-tuned and evaluated on different downstream data sets.}
\label{table:fine-grained}
\end{table}

\begin{table}[!h]
\renewcommand{\arraystretch}{1.2} %
\centering
\setlength{\tabcolsep}{6.0mm}{ 
\begin{tabular}{lcccc}
\hline
Data Set and Corruption Type & MAE & U-MPT    \\ \hline
ImageNet-A & 0.57 & \textbf{0.68} \\
ImageNet-C (Defocus Blur) & 10.48 & \textbf{26.20} \\
ImageNet-C (Motion Blur) & 16.36 & \textbf{35.84} \\
ImageNet-C (Frosted Glass Blur) & 13.68 & \textbf{30.14} \\
ImageNet-C (Zoom Blur) & 14.24 & \textbf{32.48} \\
ImageNet-C (Elastic Transform) & 23.02 & \textbf{39.98} \\
ImageNet-C (Contrast) & 9.76 & \textbf{23.40} \\
ImageNet-C (Brightness) & 28.52 & \textbf{50.14} \\
ImageNet-C (Pixelate) & 24.66 & \textbf{46.18} \\
ImageNet-C (Snow) & 9.86 & \textbf{22.84} \\
ImageNet-C (Fog) & 10.52 & \textbf{24.62} \\
ImageNet-C (Frost) & 11.30 & \textbf{23.64} \\
ImageNet-C (JPEG Compression) & 20.36 & \textbf{39.24} \\
ImageNet-C (Gaussian Noise) & 11.46 & \textbf{24.94} \\
ImageNet-C (Impulse Noise) & 8.78 & \textbf{20.08} \\
ImageNet-C (Shot Noise) & 11.40 & \textbf{24.08} \\
\hline
\end{tabular}}
\caption{Out-of-distribution generalization accuracies (\%) of MAE and U-MPT on ImageNet-A and ImageNet-C. The pretrained encoder is kept frozen and evaluated on ImageNet-A and ImageNet-C using linear probing.} 
\vspace{0.1cm}
\label{table:ood}
\end{table}

\paraemph{Out-of-Distribution Generalization.} We further evaluate the quality of the frozen ImageNet pretrained representations under distribution shift using
ImageNet-A \citep{ImageNet-A} and ImageNet-C \citep{ImageNet-C}. 
Specifically, models are pretrained on ImageNet, after which the pretrained encoder is kept frozen and evaluated on ImageNet-A and ImageNet-C using linear probing. In contrast to the cross-dataset fine-tuning setting above, the encoder is not adapted to the target distribution, allowing us to more directly assess the generalization ability of the pretrained representations under distribution shift. 
The results presented in Table~\ref{table:ood} demonstrate that incorporating the uniformity loss significantly improves the classification accuracies on ImageNet-A and each corruption type of ImageNet-C. We believe that this improvement is due to the uniformity loss encouraging greater feature diversity, which helps to separate the feature representations of different classes. Consequently, when corruptions are applied, the features are less likely to be misclassified into the feature space of another class. Similar connections between improved feature separation and robustness under distribution shifts have also been observed in prior work \citep{zhang2025beyond}.

\paraemph{Extension to Other MIM Frameworks.} 
We also verify our uniformity-enhanced loss in another MIM framework SimMIM \citep{xie2022simmim}. For SimMIM, we use ViT-Base as the encoder and use the linear decoder as in \cite{xie2022simmim}. We use the recommended mask ratio 0.6. The coefficient of the uniformity term is set to 0.01, the same as in MAE. For ImageNet-100, we pretrain the model for 200 epochs with batch size 128 and weight decay 0.05. Linear evaluation is conducted in Table \ref{table: offline SImMIM}. We can see that the linear accuracy increases by 6.8\% with the uniformity regularizer, which further verifies the general property of our approach.

\vspace{10pt}
\begin{minipage}[!t]{\textwidth}
\begin{minipage}[!t]{0.43\textwidth}
\centering
\makeatletter\def\@captype{table}
\begin{tabular}{cc}
\toprule
SimMIM & With Uniformity Loss       \\ \midrule
54.3 & \textbf{61.1} \\ \bottomrule
\end{tabular}
\caption{Linear probing accuracy (\%) of uniformity-enhanced SimMIM on ImageNet-100.}
\label{table: offline SImMIM}
\end{minipage}
\space\space\space\space\space\space\space\space
\begin{minipage}[!t]{0.43\textwidth}
\centering
\makeatletter\def\@captype{table}
\begin{tabular}{lccccc}
\toprule
$\lambda$       & 0   &1e-3 & 1e-2 &  1e-1  &1e0\\ \midrule
Acc & 61.2 &65.9  & \textbf{67.5}   & 45.1  &47.0    \\ \bottomrule
\end{tabular}
\caption{Linear probing accuracy (\%) of uniformity-enhanced MAE with different coefficients.}
\label{tab:different lambda}
\end{minipage}
\end{minipage}

\subsection{Empirical Understandings}
\label{sec:empirical understandings of u-mpt}
To further understand the empirical behavior of U-MPT, we analyze the effect of the regularization coefficient, the geometry of the learned representations, and the training dynamics.

\paraemph{Different Coefficients of the {Regularizer} Term.} The most important hyper-parameter of our proposed U-MPT loss is the coefficient of the uniformity {regularizer} term. In Table \ref{tab:different lambda}, we present the results of linear evaluation {on ImageNet-100} trained with the U-MPT loss with different coefficients of the {regularizer} term. We can see that the downstream performance increases when the coefficient increases from 0 to 0.01. However, the overlarge coefficient will also hurt the performance of U-MPT loss as the task of MAE will be overlooked.

\paraemph{Visualization of Representations.} To intuitively understand the improvement of our U-MPT loss on clustering intra-class samples, we use t-SNE \citep{vandermaaten08a} to visualize the representations trained with MAE loss and U-MPT loss on ten random classes of ImageNet-100. As shown in Figure \ref{fig:tsne}, we find that with our uniformity {regularizer} term, the samples are much better-clustered corresponding to their ground-truth labels. {To be specific, the red class (``hens'') and the gray class (``indigo birds'') are separated from others, this is because most of other classes are the animals living in the oceans while these two classes are more like the birds living on the land or the sky. Thus, these two classes are easier to distinguish, especially with our uniformity regularizer.}

\begin{figure}[!htbp]
    \centering
    \subfigure[t-SNE representations]{
    \includegraphics[width=.6\textwidth]{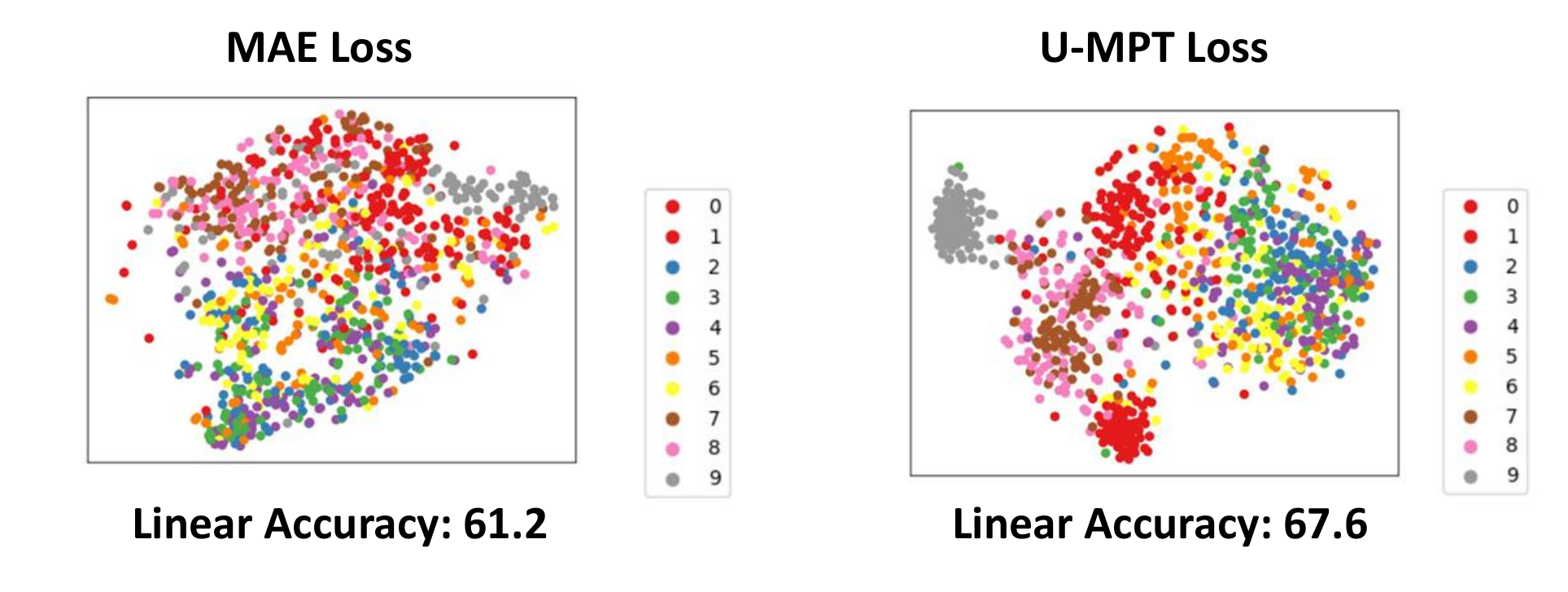}
    \label{fig:tsne}
    }
    \subfigure[Comparison between original MAE loss and our U-MPT loss along training.]{
    \includegraphics[width=0.35\textwidth]{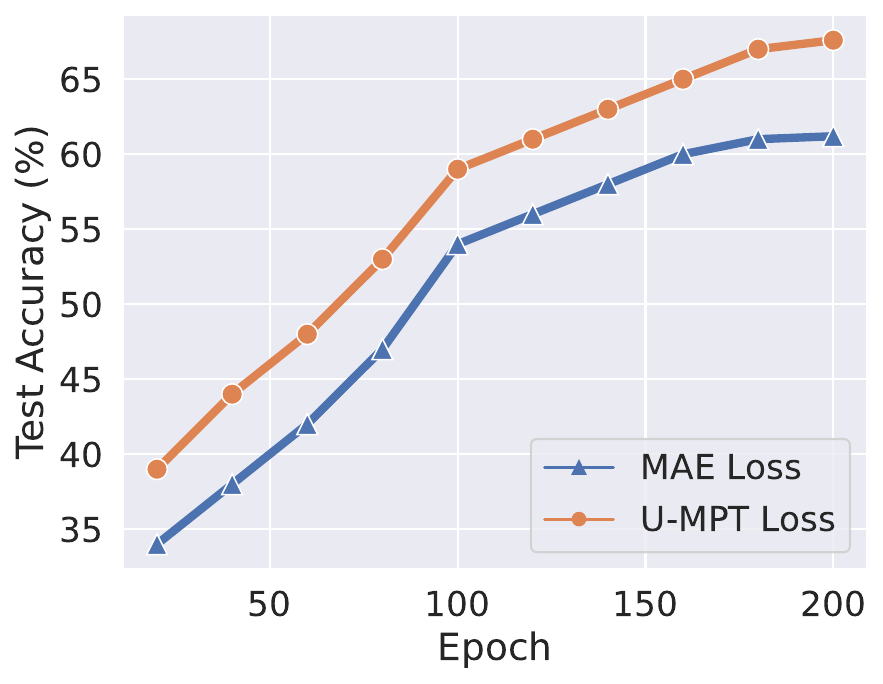}
    \label{fig:different training time}
    }
    \caption{(a) Visualization of representations on random 10 classes of ImageNet-100 trained with MAE loss and our U-MPT loss. Our U-MPT loss significantly improves the class-clustering performance of the encoder. 
    (b) The linear evaluation results during the training process. It shows that our U-MPT loss improves the downstream performance consistently along training. }
\end{figure}

\paraemph{Training Curve.} To further compare the performance between the original MAE loss and U-MPT loss, we plot the linear evaluation accuracy {on ImageNet-100} during the training process in Figure \ref{fig:different training time}. We can observe that our proposed U-MPT loss improves the performance of MAE with all different training epochs, which verifies the effectiveness of our proposed U-MPT loss.

\section{Further Theoretical Generalization Analysis}
\label{sec:downstream}

Having established that MPT implicitly aligns positive pairs through its masking mechanism, we now turn to a theoretical analysis of its downstream generalization performance based on this connection. Without loss of generality, we take the pixel-wise U-MPT objective as an example. In Section \ref{sec:downstream-insight}, we derive the first theoretical guarantees linking MPT’s pretraining loss to downstream classification error. In Section \ref{sec:mask-empirical}, we reveal how the mask ratio $\rho$ influences the downstream generalization error through a toy model. Furthermore, based on the theoretical analysis, in Section \ref{sec:metric}, we propose a surrogate metric to estimate the downstream error and find that it correlates well with the empirical designs of MPT.

\subsection{Theoretical Guarantees on Downstream Classification}
\label{sec:downstream-insight}

In this part, we analyze the downstream performance of U-MPT on the $c$-class linear classification task \citep{arora,haochen}. The performance is measured by the prediction accuracy of natural images $\bar x$ \wrt their labels $y({\bar x})$ when applying a linear prediction head upon pretrained features. For simplicity, we adopt the mean classifier $p_f(x)=\argmax W_ff(x)$, where $W_f\in\sR^{c\times k} $ is the weight of the linear classification head, and for $y\in[c]$, the $y$-th row $W_y=\E_{x_1|y}f(x_1)^\top$ contains the mean representation of the class $y$. It has been empirically demonstrated by \citet{arora} that the mean classifier achieves comparable performance to learnable linear heads. And we define an ensembled linear predictor $p_f'$. For an original sample $\bar{x}$, the predictor ensembles the predictions from all different views and selects the label with the highest frequency. By default, we set the uniformity coefficient $\lambda=1/(4L)$ as in Theorem \ref{thm:spectral-loss}.
\begin{theorem}
Denote the mask-induced label error as $\alpha=\E_{\bar{x},x_1}\mathbbm{1}[{y(x_1)\neq y(\bar{x})}]$
. Then, for $\forall\ h\in\gH$ (the hypothesis class) with $h=g\circ f$, the downstream classification error of its encoder can be upper bounded by its U-MPT pretraining loss:
\begin{align}
\operatorname{Pr}\left(y(\bar x)\neq p_f'(\bar x)\right)\leq c_1L\cdot\gL_\textnormal{U-MPT}(h)+c_2\alpha+c_3L \varepsilon(g)+c_4
,
\end{align}
where $c_1,\dots,c_4$ are constants and $c_3>1$.
\label{thm:downstream and MPT}
\end{theorem}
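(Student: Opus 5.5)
The plan is to combine the lower bound of Theorem~\ref{thm:spectral-loss} with the spectral contrastive learning guarantee of \citet{haochen}. The guarantee is applied on the augmentation graph $\gG_A$ over $\gX_1$. Then we pass from unmasked views to natural samples through the ensembled predictor $p_f'$, paying for the label noise $\alpha$ introduced by masking.

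\textbf{Step 1 (pretraining loss $\to$ SCL).} Rearranging Eq.~(\ref{eq:U-MPT-lowerbound}) with $\lambda=1/(4L)$ gives
\[
\gL_\textnormal{SCL}(f)\leq 4L\,\gL_\textnormal{U-MPT}(h)+4L\varepsilon(g)-2L .
\]
This step is purely algebraic and already accounts for the factors $L$ in front of both $\gL_\textnormal{U-MPT}$ and $\varepsilon(g)$.

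\textbf{Step 2 (SCL $\to$ view-level error).} Following \citet{haochen}, write
\[
\gL_\textnormal{SCL}(f)=\|\bar A-\tilde F\tilde F^\top\|_F^2-\|\bar A\|_F^2 ,
\]
where $\bar A$ is the normalized adjacency of $\gG_A$ and the $x_1$-th row of $\tilde F$ is $\sqrt{d_{x_1}}f(x_1)$. We then use their argument in three parts:
\begin{itemize}
\item the excess loss over $\min_f\gL_\textnormal{SCL}$ controls the distance of $\tilde F$ from the top-$k$ eigenspace of $\bar A$;
\item the top-$k$ eigenspace approximately contains the label indicator vectors on $\gX_1$, up to the Dirichlet energy of the labels on $\gG_A$;
\item linear separability of the labels then transfers to the mean classifier $W_f$.
\end{itemize}
The result is a bound on the view-level error $\Pr(y(x_1)\neq p_f(x_1))$ that is linear in $\gL_\textnormal{SCL}(f)$ plus a constant. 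That constant absorbs the optimal spectral loss, the eigengap term $1/\lambda_{k+1}$, and the label-disagreement probability $\Pr(y(x_1)\neq y(x_1^+))$ on $\gG_A$. Everything here is data-dependent but independent of $h$, so it can be folded into $c_1,c_4$.

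\textbf{Step 3 (views $\to$ natural samples).} The ensemble $p_f'(\bar x)$ is a majority vote over masked views. If it errs on $\bar x$, then at least half of the views of $\bar x$ are misclassified relative to $y(\bar x)$. Markov's inequality therefore gives
\[
\Pr(p_f'(\bar x)\neq y(\bar x))\leq 2\,\E_{\bar x,x_1}\mathbbm 1[p_f(x_1)\neq y(\bar x)] .
\]
A union bound then gives
\[
\mathbbm 1[p_f(x_1)\neq y(\bar x)]\leq \mathbbm 1[p_f(x_1)\neq y(x_1)]+\mathbbm 1[y(x_1)\neq y(\bar x)] ,
\]
which produces the $c_2\alpha$ term. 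Combining Steps 1--3 yields the stated form. The constant in front of $\varepsilon(g)$ is $2\cdot 4\cdot(\text{Step 2 factor})$; one checks that it exceeds $1$.

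\textbf{Main obstacle.} I expect Step 2 to be the hard part. The spectral guarantee of \citet{haochen} is stated for the population minimizer, or for near-minimizers via an eigenspace perturbation argument, whereas the theorem here is for an arbitrary $h\in\gH$. I would first try the approach of \citet{haochen} for approximate minimizers. Bound the suboptimality $\gL_\textnormal{SCL}(f)-\min\gL_\textnormal{SCL}$ by $\gL_\textnormal{SCL}(f)$ plus the constant $-\min\gL_\textnormal{SCL}\leq\|\bar A\|_F^2$. Then apply Davis--Kahan to show that $\tilde F$ captures the label subspace up to an error linear in that suboptimality. Finally, use the fact that the mean classifier's error is controlled by the within-class feature variance relative to the between-class separation. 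Labels on $\gX_1$ are defined through $y(x_1)$, so the Dirichlet-energy constant depends on how often 2-hop neighbours share a label. This dependence is kept inside $c_4$ rather than made explicit.
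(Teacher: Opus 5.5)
\textbf{The gap is in Step 2, which you already flag as the main obstacle. The route you sketch there does not deliver the stated form.}

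First, the approximate-minimizer guarantee of \citet{haochen} passes through a subspace-perturbation (Davis--Kahan) step. The excess loss $\gL_\textnormal{SCL}(f)-\min\gL_\textnormal{SCL}$ therefore enters multiplied by an inverse power of an eigengap of $\bar A$. After your Step 1, that factor multiplies both $\gL_\textnormal{U-MPT}(h)$ and $\varepsilon(g)$. Your $c_1,c_3$ then depend on the spectrum of the mask-induced graph, hence on $\rho$, and blow up when the gap closes. The paper instead obtains absolute constants ($c_1=c_3=128$, $c_2=80$), and its later mask-ratio analysis relies on those explicit values.

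Second, that guarantee is for the best linear probe, not for the mean classifier $W_f$ that $p_f'$ uses. Your suggestion to control the mean classifier by within-class variance relative to between-class separation brings in the separation of the class means of $f$. That quantity depends on $h$, so it cannot be folded into $h$-independent constants.

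\textbf{The missing observation is that the mean classifier can be compared with $\bar A$ directly, with no eigenspace argument.} Let $U_{x_1}=\sqrt{d_{x_1}}f(x_1)$ and $C=D^{1/2}Y$, so row $x_1$ of $C$ is $\sqrt{d_{x_1}}$ times the one-hot label of $x_1$ and $\|C\|_F=1$. The head $W_f=C^\top U$ has $j$-th row $\Pr(y=j)\,\E_{x_1\mid y=j}f(x_1)^\top$, and its score matrix is $UU^\top C$. Then
\[
C-UU^\top C=(C-\bar A C)+(\bar A-UU^\top)C .
\]
The first term satisfies $\|C-\bar AC\|_F^2\le 2\beta$, where $\beta=\sum_{x_1,x_1^+}\gA(x_1,x_1^+)\mathbbm{1}[y(x_1)\neq y(x_1^+)]$. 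To see this, note each row contributes at most $2\beta_{x_1}^2/d_{x_1}\le 2\beta_{x_1}$, with $\beta_{x_1}$ the cross-label edge mass at $x_1$. The second term satisfies $\|(\bar A-UU^\top)C\|_F^2\le\|\bar A-UU^\top\|_F^2=\gL_\textnormal{SCL}(f)+\|\bar A\|_F^2$.

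Hence the view-level squared error is at most $2\gL_\textnormal{SCL}(f)+4\beta+2\|\bar A\|_F^2$, with purely numerical constants and no eigengap. The argmax error is at most twice this. Your Steps 1 and 3 then finish the argument as in the paper. The only difference is that the paper further converts $\beta$ into the $\alpha$ term via $\beta\le 2\alpha$, whereas you would keep it in $c_4$.
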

This theorem establishes an upper bound on the downstream error of an encoder $f$ trained with the U-MPT pretraining loss, which is the \emph{first theoretical guarantee} on the downstream performance of MPT methods. As an implication of this theorem, a small U-MPT loss would provably imply a small downstream classification error, which helps explain the strong downstream generalization ability of MAE \citep{mae}. 
Additionally, we establish a common lower bound on the U-MPT loss that holds for all $h\in\gH$. As a large common lower bound of U-MPT loss indicates that the downstream error will always have a large upper bound, we should pursue a small common lower bound in the following theorem.
\begin{theorem}
The U-MPT pretraining loss has the following common lower bound:
\begin{align}
\forall\ h\in\gH,\quad \gL_\textnormal{U-MPT}(h)&\geq\frac{1}{4L}{\sum_{i=k+1}^{N_1}\lambda_i^2}-\varepsilon(g)+\textnormal{const},
\end{align}
where $\lambda_1\geq\cdots\geq\lambda_{N_1}$ denote the eigenvalues of $A$.
\label{thm:downstream-spectrum}
\end{theorem}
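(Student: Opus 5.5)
The plan is to chain Theorem~\ref{thm:spectral-loss} with the classical spectral analysis of the spectral contrastive loss from \citet{haochen}. By Theorem~\ref{thm:spectral-loss}, with $\lambda=1/(4L)$ every $h=g\circ f\in\gH$ satisfies $\gL_\textnormal{U-MPT}(h)\geq\frac{1}{4L}\gL_\textnormal{SCL}(f)-\varepsilon(g)+\frac12$. It therefore suffices to show a lower bound on $\gL_\textnormal{SCL}(f)$ that holds for every encoder $f:\gX_1\to\sR^k$:
\[
\gL_\textnormal{SCL}(f)\geq \sum_{i=k+1}^{N_1}\lambda_i^2-\|\bar A\|_F^2,
\]
where $\bar A$ is the augmentation adjacency matrix. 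The term $\|\bar A\|_F^2=\sum_i\lambda_i^2$ does not depend on $h$, so it is absorbed into the constant.

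First, rewrite $\gL_\textnormal{SCL}$ in matrix form. As in the proof sketch of Theorem~\ref{thm:asym-sym}, let $F\in\sR^{N_1\times k}$ be the matrix whose $x_1$-th row is $\sqrt{d_{x_1}}f(x_1)$, where $d_{x_1}$ is the marginal of $x_1$. Note that $\sum_{x_1'}\gA(x_1,x_1')=d_{x_1}$, since $\gA$ is the joint distribution of positive pairs. The alignment term becomes
\[
-2\sum_{x_1,x_1'}\gA(x_1,x_1')f(x_1)^\top f(x_1')=-2\tr(F^\top \bar A F),
\qquad \bar A=D_1^{-1/2}AD_1^{-1/2}.
\]
The uniformity term, with independent $x_1,x_1^-$ drawn from the marginal, becomes
\[
\sum_{x_1,x_1^-}d_{x_1}d_{x_1^-}\bigl(f(x_1)^\top f(x_1^-)\bigr)^2=\|FF^\top\|_F^2 .
\]
Completing the square then gives the identity
\[
\gL_\textnormal{SCL}(f)=\|\bar A-FF^\top\|_F^2-\|\bar A\|_F^2,
\]
which is Lemma~3.2 of \citet{haochen} transplanted to our graph.

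Next, apply the Eckart--Young--Mirsky theorem. Since $FF^\top$ has rank at most $k$ and $\bar A$ is symmetric, $\|\bar A-FF^\top\|_F^2$ is at least the squared Frobenius error of the best rank-$k$ approximation of $\bar A$. That error is the sum of the squares of all but the $k$ largest singular values, which is at least $\sum_{i=k+1}^{N_1}\lambda_i^2$. This last comparison is immediate when $\bar A$ is positive semidefinite. That holds here: $\bar A=\bar A_M^\top\bar A_M$, exactly as in the step of Theorem~\ref{thm:asym-sym} that transforms to the augmentation graph, so the eigenvalues and singular values coincide and are nonnegative and ordered. Combining the steps and absorbing $-\frac{1}{4L}\|\bar A\|_F^2+\frac12$ into the constant yields the claim.

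The main obstacle is bookkeeping rather than a conceptual gap. The normalization must be consistent: the $\lambda_i$ in the statement must be read as eigenvalues of the normalized augmentation matrix $\bar A=\bar A_M^\top\bar A_M$, and the degree weights in the uniformity term must match those in the alignment term, so that the two pieces combine into a single Frobenius norm. I would check these weightings first. If the paper's $A$ is unnormalized, I would rescale $f$ by $D_1^{-1/2}$ to reduce to the normalized case.
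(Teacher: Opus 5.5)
Your proposal is correct and follows the paper's proof. Both chain Theorem~\ref{thm:spectral-loss}, the factorization identity $\gL_\textnormal{SCL}(f)=\|\bar A-FF^\top\|_F^2-\|\bar A\|_F^2$ of \citet{haochen}, and Eckart--Young, and both take the $\lambda_i$ as eigenvalues of the normalized $\bar A$ (the paper's proof does this even though the statement writes $A$).

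You apply the bound to each $h$ directly, where the paper unnecessarily detours through a global minimizer $h^\star$. You also check that $\bar A=\bar A_M^\top\bar A_M$ is positive semidefinite, which the paper's appeal to Eckart--Young tacitly needs. One small point: your fallback of rescaling $f$ by $D_1^{-1/2}$ would not work, because no rescaling of $f$ turns eigenvalues of $\bar A$ into those of the unnormalized $A$. You never need it, though.
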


Combining Theorem \ref{thm:downstream and MPT} and
Theorem \ref{thm:downstream-spectrum}, it is evident that the downstream error of MPT can be minimized with three factors: a small vanilla autoencoding error $\varepsilon(g)$, a small label error $\alpha$, and smaller magnitude of ``residual eigenvalues'', \ie $\{\lambda_{k+1},\dots,\lambda_{N_1}\}$. Here, residual eigenvalues represent the high-frequency components of the augmentation graph $\gG_A$ that cannot be fitted by $k$-dimensional features. The vanilla autoencoding error $\varepsilon(g)$ depends on the non-degeneracy of the decoder $g$, while the label error $\alpha$ and residual eigenvalues $\{\lambda_{k+1},\dots,\lambda_{N_1}\}$ are purely reliant on the augmentation graph $\gG_A$ induced by the applied mask. Consequently, overall speaking, a capacious MPT model can have reduced downstream errors with a diminished vanilla autoencoding error $\varepsilon(g)$. In the meantime, the masking ratio $\rho$ should be properly chosen to ensure a small label error $\alpha$ and small magnitude of residual eigenvalues $\{\lambda_{k+1},\dots,\lambda_{N_1}\}$. In fact, as demonstrated by \cite{mae}, the masking ratio exerts a pivotal sway over the downstream efficacy of MAE. Therefore, in the next part, we explore how the choice of mask ratio would affect the downstream generalization of MPT via the label error $\alpha$ and residual eigenvalues $\{\lambda_{k+1},\dots,\lambda_{N_1}\}$.

\subsection{Theoretical Analysis on the Effect of Mask Ratio}
\label{sec:mask-empirical}

To characterize the effects of the mask ratio $\rho$ on label error and graph connectivity, we analyze a binary classification task on a spatially structured simulated data set. As illustrated in Figure \ref{fig:toy-model}, each image consists of $K$ patch positions arranged on a two-dimensional grid. For class $C_i$ and position $k$, the patch $x_k$ is sampled from a position-specific candidate set $P_{i,k}=E_{i,k}\cup O_k$, where $E_{i,k}$ contains $N_k$ class-specific patches and $O_k$ contains $M_k$ patches shared by the two classes. The candidate identities vary across positions, thereby incorporating spatial locality into the image construction. To deliver a mathematically tractable model, we further assume $N_k=N$ and $M_k=M$ for each $k$. 
During masked pretraining, for a mask $m$ with mask ratio $\rho$, we select $\rho K$ positions to form the masked view, while the remaining $(1-\rho)K$ positions constitute the unmasked view. Using this formulation, we explicitly construct the mask-induced augmentation graph and analyze how $\rho$ influences the label error $\alpha$, the eigenvalues $\lambda_1,\lambda_2,\ldots,\lambda_{N_1}$, and the generalization bound.

\begin{figure}[!htbp]
    \centering
    \includegraphics[width=.7\textwidth]{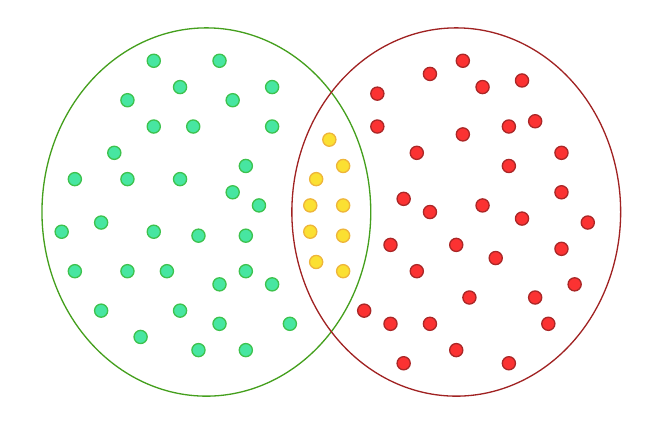}

    \caption{
    An illustration of the toy model. Each dot represents a patch in real-world data. The green and red dots correspond to non-overlapped patches from different classes, while the yellow dots are overlapped patches.\\
    }
    \label{fig:toy-model}
\end{figure}

\begin{theorem}
Let $r=\frac{M}{N+M}$ be the ratio of shared patches in each class, and $\rho$ the mask ratio satisfying $0<\rho<1$. When $r^K<\frac{1}{12}$ and $0.5\leq\rho\leq1$, with a larger mask ratio, the label error ($\alpha$) increases monotonically, the residual eigenvalues ($\sum\lambda_i^2$) decrease monotonically, and the downstream error bound decreases first and then increases. Besides, the upper bound for $\operatorname{Pr}(y(\bar x) \neq p_f'(\bar x))$ achieves its minimum value when $0.5 < \rho < 1$.
\label{thm:generalization on toy}
\end{theorem}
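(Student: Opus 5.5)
The plan is to make the toy model fully explicit, compute $\alpha(\rho)$ and the residual spectral mass $S(\rho)=\sum_{i>k}\lambda_i^2$ in closed form, and then plug both into the bound obtained by combining Theorem~\ref{thm:downstream and MPT} and Theorem~\ref{thm:downstream-spectrum}. That bound has the form
\[
B(\rho)=c_1' S(\rho)+c_2\alpha(\rho)+\textnormal{const},
\]
where we take $\gL_\textnormal{U-MPT}$ at (near) its common lower bound.

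\textbf{Step 1: the label error.} An unmasked view $x_1$ with $n_1=(1-\rho)K$ visible positions carries the label of $\bar x$ unless every visible patch lies in the shared sets $O_k$. In that case, and only then, the label is ambiguous, so by symmetry between the two classes it is wrong with probability $1/2$. Since patches at different positions are drawn independently, this gives
\[
\alpha(\rho)=\tfrac12\, r^{(1-\rho)K}.
\]
This is increasing in $\rho$: as $\rho\to1$ it tends to $1/2$, and at $\rho=0.5$ it equals $\tfrac12 r^{K/2}$. The hypothesis $r^K<1/12$ keeps the latter small.

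\textbf{Step 2: the spectrum of $A$.} We compute $A=\bar A_M^\top\bar A_M$ with $\gA(x_1,x_1')=\E_{x_2}\gM(x_1|x_2)\gM(x_1'|x_2)$. The plan is to exploit the product structure across positions and the symmetry under relabelling patches within $E_{i,k}$ and $O_k$. Two unmasked views are $2$-hop neighbours iff they share a complementary $x_2$. For that, their position sets must be compatible: for $\rho\ge 0.5$ the visible sets are small enough that $x_1,x_1'$ with a common masked view must occupy the same positions as $x_1$. Their patch contents can be arbitrary, while their label-consistency is tied through $x_2$. Consequently $A$ decomposes blockwise by mask pattern, and within each block it is a Kronecker/Hadamard product of small matrices indexed by class (private patch vs.\ shared patch). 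Its eigenvalues can therefore be written as products of eigenvalues of $2\times2$ or $3\times3$ matrices whose entries depend on $r$, $N$, $M$ and $n_2=\rho K$. From this we obtain $S(\rho)$ as an explicit sum, and show it is decreasing in $\rho$, via the intuition that a larger $\rho$ means more conditioning mass is shared across each $x_2$, which concentrates the spectrum in the top $k$ eigenvalues. I expect this step to be the main obstacle: it requires getting the normalisation by $d_{x_1},d_{x_2}$ right and controlling the combinatorics of overlapping position sets. If the exact spectrum is intractable, I would instead bound $S(\rho)=\|A\|_F^2-\sum_{i\le k}\lambda_i^2$, computing $\|A\|_F^2=\sum_{x_1,x_1'}\gA(x_1,x_1')^2$ directly, which is a collision probability and admits a clean formula, and lower bounding the top eigenvalues by the class-indicator test vectors.

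\textbf{Step 3: shape of $B(\rho)$.} We combine $B(\rho)=c_1'S(\rho)+c_2\tfrac12 r^{(1-\rho)K}$, a decreasing term plus an increasing convex term. To get "decreases first, then increases" we show $B'(0.5)<0$ and $B'(\rho)>0$ near $\rho=1$. At $\rho\to1$ the derivative of the $\alpha$ term, $\tfrac12 K\ln(1/r)\, r^{(1-\rho)K}$, tends to $\tfrac12K\ln(1/r)$, whereas $S'$ tends to $0$ as the graph becomes fully connected within blocks. At $\rho=0.5$ the $\alpha$-derivative is at most $\tfrac12 K\ln(1/r) r^{K/2}$, which $r^K<1/12$ makes dominated by $|S'(0.5)|$. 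Together with continuity and monotonicity of each term, this gives a unique interior minimiser in $(0.5,1)$, establishing the final claim of Theorem~\ref{thm:generalization on toy}.
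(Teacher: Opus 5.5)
Your proposal has a genuine gap in Step 3, and it comes from not having the actual spectrum in Step 2. In this toy model the joint law of $(x_1,x_2)$ is a two-component class mixture of product laws. So $\bar A_M$ has rank two; it is not a Kronecker product across positions. Hence $A$ has exactly two nonzero eigenvalues, $1$ and $\lambda_2=(1-r^{\rho K})(1-r^{(1-\rho)K})$. The paper obtains these from a $3\times 3$ quotient matrix over non-overlapped class-1, non-overlapped class-2, and overlapped views. (Consequently $\sum_{i>k}\lambda_i^2\equiv 0$ for $k\ge 2$, and the residual term is effectively $\lambda_2^2$.)

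This spectrum is symmetric under $\rho\mapsto 1-\rho$. With $w=r^{\rho K}$, $g=r^{(1-\rho)K}$ and $wg=r^K$, you get $d\lambda_2/d\rho=K\ln(1/r)\,(w-g)$, which vanishes at $\rho=1/2$. Hence $S'(1/2)=0$. For your $B=c_1'\lambda_2^2+c_2\alpha$ this gives $B'(1/2)=\tfrac{c_2}{2}K\ln(1/r)\,r^{K/2}>0$, however small $r^K$ is.

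So your claim that $|S'(0.5)|$ dominates the $\alpha$-derivative is false. With only these two terms the bound initially increases on $[0.5,1]$, and the "decreases first, then increases" shape cannot be obtained this way. Your intuition-based monotonicity argument would not have exposed this. Neither would your fallback of upper-bounding $S$ via $\|A\|_F^2$ minus test-vector bounds, since that gives only one-sided control.

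The missing ingredient is the cross-view label error $\beta$. In the toy analysis the paper keeps it as a separate term rather than absorbing it via $\beta\le 2\alpha$ as in the general downstream bound. It adds the modelling assumption that an overlapped view disagrees in label with probability $\eta\in(0,0.3)$, which gives $\beta=\eta g(2-g)+\tfrac12 w(1-g)^2$. The inter-class part $\tfrac12 w(1-g)^2$ is decreasing in $\rho$, and this is what can produce the initial decrease.

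The paper then proceeds as follows:
\begin{itemize}
\item It restricts to the hyperbola $wg=r^K$ and completes squares, so the $\rho$-dependent part becomes $(x-a)^2+(c/x-b)^2$ with $x\propto w$.
\item It checks the derivative signs at $x=\sqrt{c}$ and $x=c/b$, using $r^K<1/12$ and $\eta<0.3$, to trap a minimizer with $0.5<\rho<1$.
\item It uses the reflection $x\mapsto c/x$ together with $b>a$ to show this minimizer is global.
\end{itemize}

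Separately, your closing appeal to "continuity and monotonicity of each term" cannot by itself give a unique interior minimizer. A decreasing function plus an increasing one can have several critical points, so some second-order or reflection argument of this kind is needed.
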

As shown in Theorem \ref{thm:generalization on toy}, the label error increases as the mask ratio $\rho$ increases. Meanwhile, the sum of squared residual eigenvalues exhibits a symmetric trend around $\rho=0.5$ and decreases as $\rho$ increases beyond $0.5$. Consequently, there exists a trade-off between the label error and the eigenvalues when $\rho>0.5$. Theorem \ref{thm:generalization on toy} further proves that the generalization bound first decreases and then increases with the mask ratio, and the optimal point lies in the intermediate range. This finding aligns with empirical results from MAE pretraining~\citep{mae}, i.e., 75\% is the best mask ratio for downstream tasks. Notably, while there is a local minimum in the generalization bound for $\rho<0.5$, this point does not achieve a lower bound than the optimal point for $\rho>0.5$.

\paraemph{Theoretical Insights.} 
As discussed above, Theorems \ref{thm:downstream and MPT} \& \ref{thm:downstream-spectrum} show the significance of having a small label error $\alpha$ and small residual eigenvalues ${\lambda_{k+1},\dots,\lambda_{N_1}}$ for good downstream MPT performance. Furthermore, Theorem \ref{thm:generalization on toy} demonstrates that the mask ratio $\rho$ has a decisive influence on both factors. On one hand, the label error $\alpha$ tends to increase with a larger mask ratio. Intuitively, $\alpha$ indicates the likelihood of accurately recovering the original class $y$ from the masked portions. Figure \ref{fig:MPT overlapping example} illustrates that small or moderate mask ratios, even a substantial one like 0.75, barely alter the class identity. Only when the mask ratio is exceptionally high, for instance, 0.95, the object's identity becomes nearly unrecognizable, such as with a car in the example. On the other hand, according to spectral graph theory \citep{chung1997spectral}, the residual eigenvalues ${\lambda_{k+1},\dots,\lambda_{N_1}}$ reflect the high-frequency aspects of the graph, possessing significant magnitudes when the graph is less interconnected (with numerous disjoint components). Consequently, a low downstream error necessitates enhanced connectivity within the augmentation graph $\gG_A$. Figure \ref{fig:MPT overlapping example} visually demonstrates that elevating the mask ratio masks out many diverse patterns, thereby augmenting similarity among remaining patches, especially within intra-class samples (e.g., the tires of two cars). Hence, a higher mask ratio effectively improves graph connectivity by diminishing sample diversity and augmenting inter-sample resemblance. This observation is also consistent with \citet{zhang2024look}, which shows that the flexible prediction targets in masked pretraining can induce richer inter-sample connections. 
Considering the impacts on both the label error $\alpha$ and the residual components ${\lambda_{k+1},\dots,\lambda_{N_1}}$, Theorem \ref{thm:generalization on toy} shows an evident trade-off in selecting the mask ratio: we should choose a mask ratio that is appropriately large to enhance graph connectivity. However, we also need to avoid excessively large mask ratios that lead to class mixture and confusion. In other words, a guiding principle emerges: the mask ratio should be chosen such that mask-induced graph connectivity primarily occurs among \textit{intra-class} samples (no harms), rather than \textit{inter-class} samples (resulting in significant label errors). 

\begin{figure}[!htbp]
    \centering
    \includegraphics[width=0.75\textwidth]{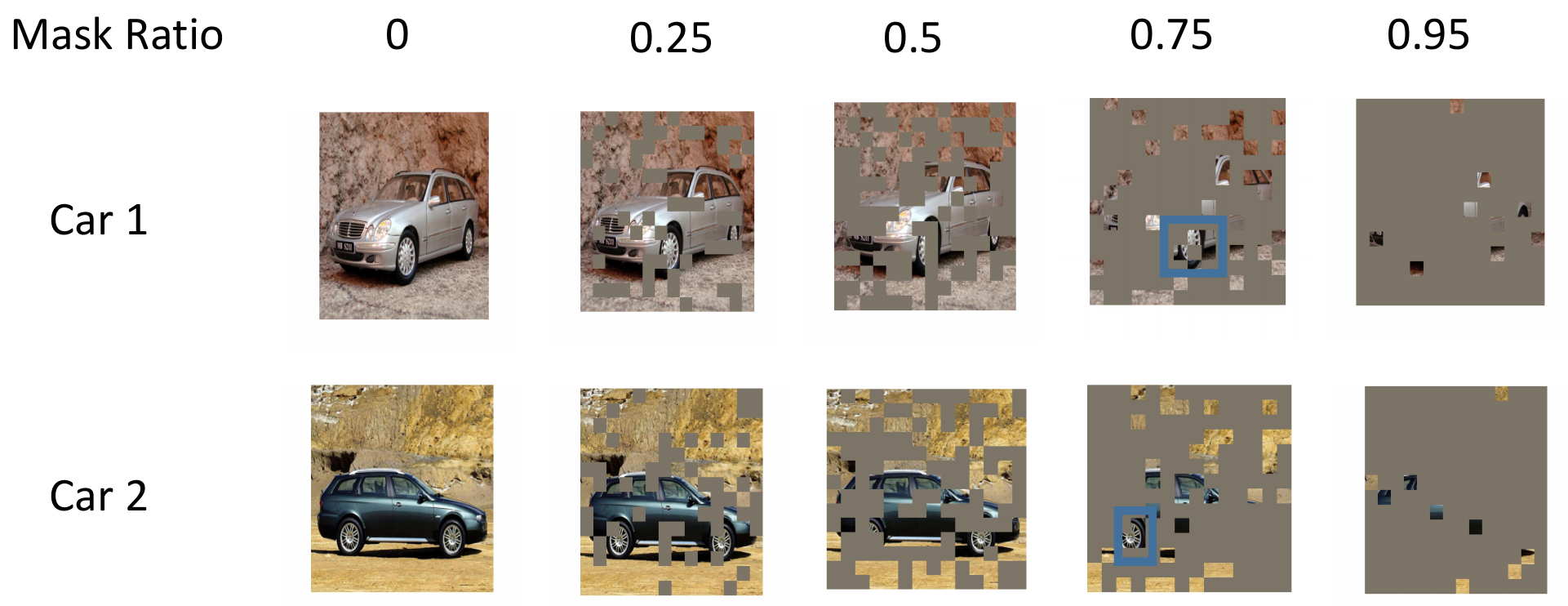}
    \caption{
    An appropriate mask ratio can generate similar views from different samples in the same class.
    }
\label{fig:MPT overlapping example}
\end{figure}

\subsection{Empirical Estimation of Downstream Errors}
\label{sec:metric}

To further validate our argument, we introduce a metric to empirically estimate the downstream error of MPT methods on real-world data sets. Since real-world images are sampled from a continuous space and two patches cannot be identical, we cannot directly estimate the probability that unmasked views share the overlapped masked view. Consequently, we first train a Vector Quantised-Variational AutoEncoder (VQ-VAE) model \citep{vqvae} and map patches into discrete tokens. Subsequently, we randomly selected 2000 images from the same data set and applied 5 random masks to each, resulting in a total of 10000 nodes within the augmentation graph. For all unmasked regions, we utilized the VQ-VAE model to map each visible patch to a unique patch ID, thus representing each view as a multiset of these IDs.

The edge weights between any two unmasked views were then estimated by calculating the degree of overlap between the sets of patch IDs from their respective masked counterparts. This degree of overlap was defined as the size of the intersection of patch IDs divided by the total number of patches in a masked view. Graph connectivity was quantified by the sum of squared residual eigenvalues, specifically $\frac{1}{N_1}\sum\limits_{i=k+1}^{N_1}\lambda_i^2$, where $k$ represents the feature dimension (set to 1000 in this case based on common practice) and $N_1$ denotes the number of nodes in the augmentation graph (equal to $10000$). The normalization coefficient $\frac{1}{N_1}$ is introduced to mitigate the impact of the number of nodes selected. Additionally, the label error was estimated by the proportion of unmasked-view pairs that exhibited over 90\% patch overlap yet possessed different ground-truth labels. The empirical results are illustrated in Figure~\ref{fig:connectivity and label error on real datasets}.

\begin{figure}[!htbp]
    \centering
    \subfigure[Residual eigenvalues]{
        \includegraphics[width=0.27\textwidth]{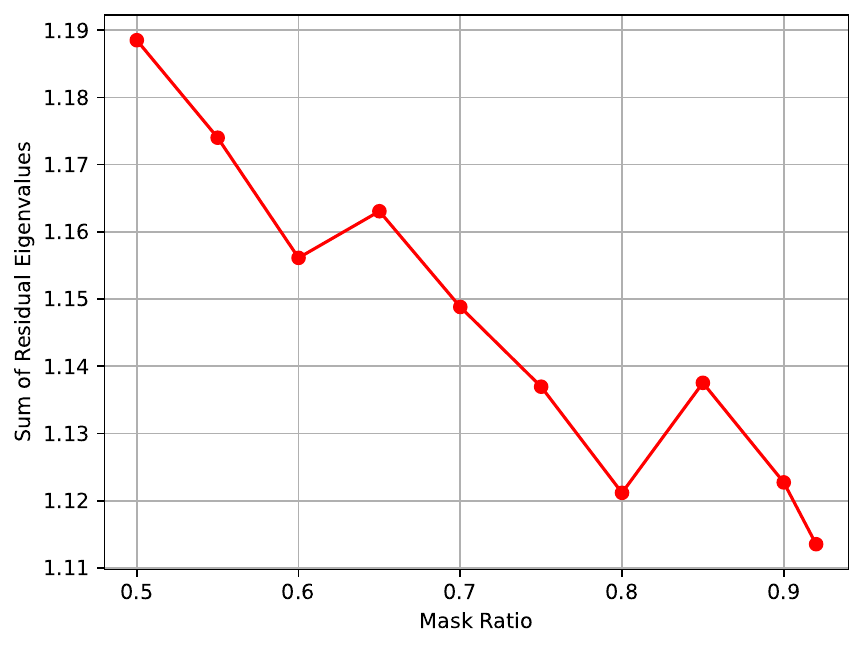}
        \label{fig:residual eigenvalues real dataset}
    }
    \hspace{0.02\textwidth}
    \subfigure[Label Error]{
        \includegraphics[width=0.27\textwidth]{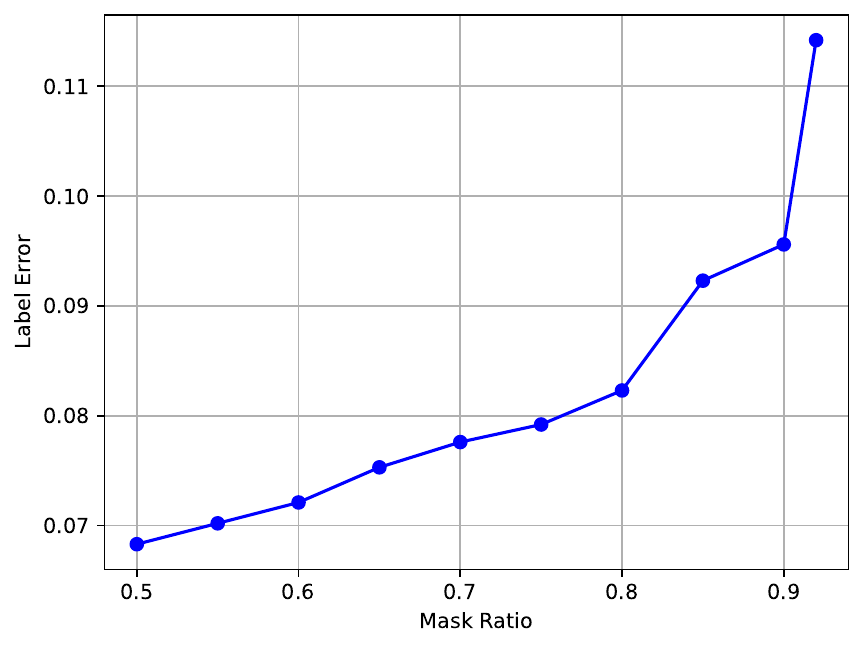}
        \label{fig:label error real dataset}
    }
    \hspace{0.02\textwidth}
    \subfigure[Tradeoff]{
        \includegraphics[width=0.27\textwidth]{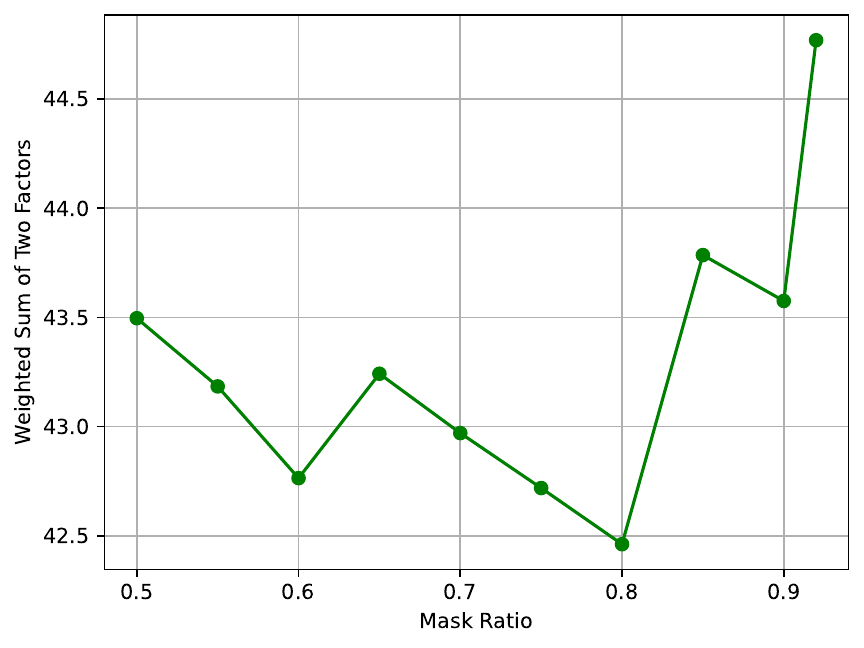}
        \label{fig:tradeoff real dataset}
    }
    \caption{(a)(b) Estimated residual eigenvalues of the augmentation graph and label error for 2000 randomly selected images from the ImageNet-100 data set, with each image subjected to 5 different random masks, under varying mask ratios.\\
    (c) The blended value of these two factors.}
    \label{fig:connectivity and label error on real datasets}
\end{figure}

The results demonstrate a steady decline in the residual eigenvalues as the mask ratio increases from 0.5, which indicates that a higher mask ratio effectively improves the connectivity of the augmentation graph. Meanwhile, the label error shows a general upward trend, which is a result of growing vagueness as more and more information is masked out. To quantify their respective contributions, we adopt the coefficients given in Section~\ref{sec:proof-of-thm-downstream-and-MPT}, \ie $\frac{32}{N_1}\sum_{i=k+1}^{N_1}\lambda_i^2+80\alpha$, and find that the weighted sum reaches its smallest value when the mask ratio is around $0.8$, very close to the value employed in~\cite{mae}. These observations align with our theoretical analysis, highlighting the inherent trade-off between the graph connectivity and the label error.

\section{Theory Inspired Strategies for Performance Improvement of MPT}

As discussed in the theoretical framework, we introduce a principle to improve the performance of MPT, i.e., we should enhance connectivity between unmasked views while avoiding label errors. Based on that, we propose new theory-guided strategies to improve the performance of MPT and analyze current variants of MPT from this perspective. In Section~\ref{sec:reweighting}, we present a novel approach that involves reassigning weights for each patch in the reconstruction loss to enhance the connectivity of the augmentation graph. In Section~\ref{sec:cur-improve-strategies}, we review existing strategies with our theoretical principles.

\subsection{Patch Re-weighting}
\label{sec:reweighting}

In original  MPT approaches, the weight for each masked patch in the reconstruction loss is uniform. However, as analyzed in contrastive learning, there exist intra-class samples that are difficult to be pulled together in the feature space and these samples usually have a greater impact on forming the connectivity of the augmentation graph \citep{zhang2025understanding}. Inspired by this, we consider paying more attention to the patches that are hard to reconstruct in MPT, namely the patches that are difficult to implicitly align. 

\begin{table}[!h]
\vspace{0.1cm}
\renewcommand{\arraystretch}{1.2}
\centering
\setlength{\tabcolsep}{3.0mm}{
\begin{tabular}{llll}
\hline
Setting & Data Set & Vanilla MAE & Re-weighted MAE \\\hline
In-domain 
& ImageNet-1K & 55.4 & \textbf{58.6} \\ \hline

\multirow{17}{*}{OOD}
& ImageNet-A & 0.57 & \textbf{0.87} \\
& ImageNet-C (Defocus Blur) & 10.48 & \textbf{20.51} \\
& ImageNet-C (Motion Blur) & 16.36 & \textbf{24.28} \\
& ImageNet-C (Frosted Glass Blur) & 13.68 & \textbf{19.03} \\
& ImageNet-C (Zoom Blur) & 14.24 & \textbf{19.21} \\
& ImageNet-C (Elastic Transform) & 23.02 & \textbf{30.94} \\
& ImageNet-C (Contrast) & 9.76 & \textbf{20.41} \\
& ImageNet-C (Brightness) & 28.52 & \textbf{46.42} \\
& ImageNet-C (Pixelate) & 24.66 & \textbf{25.50} \\
& ImageNet-C (Snow) & 9.86 & \textbf{16.93} \\
& ImageNet-C (Fog) & 10.52 & \textbf{20.69} \\
& ImageNet-C (Frost) & 11.30 & \textbf{22.00} \\
& ImageNet-C (JPEG Compression) & 20.36 & \textbf{29.41} \\
& ImageNet-C (Gaussian Noise) & 11.46 & \textbf{14.82} \\
& ImageNet-C (Impulse Noise) & 8.78 & \textbf{9.81} \\
& ImageNet-C (Shot Noise) & 11.40 & \textbf{13.60} \\
& Mean over ImageNet-C & 14.96 & \textbf{22.24} \\ \hline
\end{tabular}}
\caption{Results (\%) of vanilla MAE and Re-weighted MAE with $\gamma=1.5$, where OOD means out-of-distribution. Both models are pretrained on ImageNet-1K, and the learned representations are evaluated via linear probing on ImageNet-1K for in-domain performance and on ImageNet-A and ImageNet-C for out-of-distribution performance.}
\label{table:reweighted-results}
\end{table}

Specifically, we propose reweighted MAE, where the weight for a patch in the reconstruction loss is dynamically adjusted based on the reconstruction loss of that specific patch. Let \( h \) denote the autoencoder, \( x_1 \) the unmasked view, and \( x_2 \) the masked view. Here, \( h(x_1) \) represents the reconstructed masked view given \( x_1 \), while \( h(x_1)_k \) and \( x_{2,k} \) denote the \( k \)-th patch of \( h(x_1) \) and \( x_2 \), respectively. The reconstruction loss for the \( k \)-th patch is defined as:
\[
l_k(x_1, x_2) = \|h(x_1)_k - x_{2,k}\|_2^2.
\]
The original MAE reconstruction loss for \( x_1 \) and \( x_2 \) is then formulated as:
\[
\gL_{\textnormal{MAE}}(h, x_1, x_2) = \sum_{k} l_k(x_1, x_2).
\]
To introduce patch re-weighting, we define the weight for the \( k \)-th patch as:
\[
w_k = \left[\operatorname{stopgrad}(l_k(x_1, x_2))\right]^\gamma,
\]
where \( \operatorname{stopgrad}(\cdot) \) denotes the stop gradient operation, ensuring that \( w_k \) is detached from the back-propagation process. The reweighted loss function is then given by:
\[
\gL_{\textnormal{Re-weighted}} = \frac{1}{\sum_k w_k} \sum_k w_k \cdot l_k(x_1, x_2).
\]

To evaluate the effectiveness of re-weighted MAE, we conduct a series of experiments using a ViT-Base encoder pretrained on ImageNet-1K. As shown in Table~\ref{table:reweighted-results}, re-weighted MAE with $\gamma=1.5$ improves the in-domain linear probing accuracy on ImageNet-1K from $55.4\%$ to $58.6\%$. The improvement is more pronounced under distribution shifts, where the mean accuracy across the fifteen ImageNet-C corruption types increases from $14.96\%$ to $22.24\%$. These results suggest that emphasizing harder-to-reconstruct patches can improve both the quality and robustness of the learned representations.

We further conduct an ablation study to investigate the effect of the re-weighting coefficient $\gamma$ on ImageNet-100. As shown in Figure~\ref{fig:reweighted-acc}, the linear probing accuracy first increases as $\gamma$ grows and then decreases when $\gamma$ becomes too large, with the best performance achieved at $\gamma=1.5$. Specifically, $\gamma=1.5$ improves the linear probing accuracy by $1.98\%$ over vanilla MAE (i.e., $\gamma=0$). This trend suggests that moderately emphasizing harder-to-reconstruct patches is beneficial, whereas overly large weights may overemphasize a small subset of difficult patches and hurt the overall representation quality.

\begin{figure}[!htbp]
    \centering
    \includegraphics[width=0.6\linewidth]{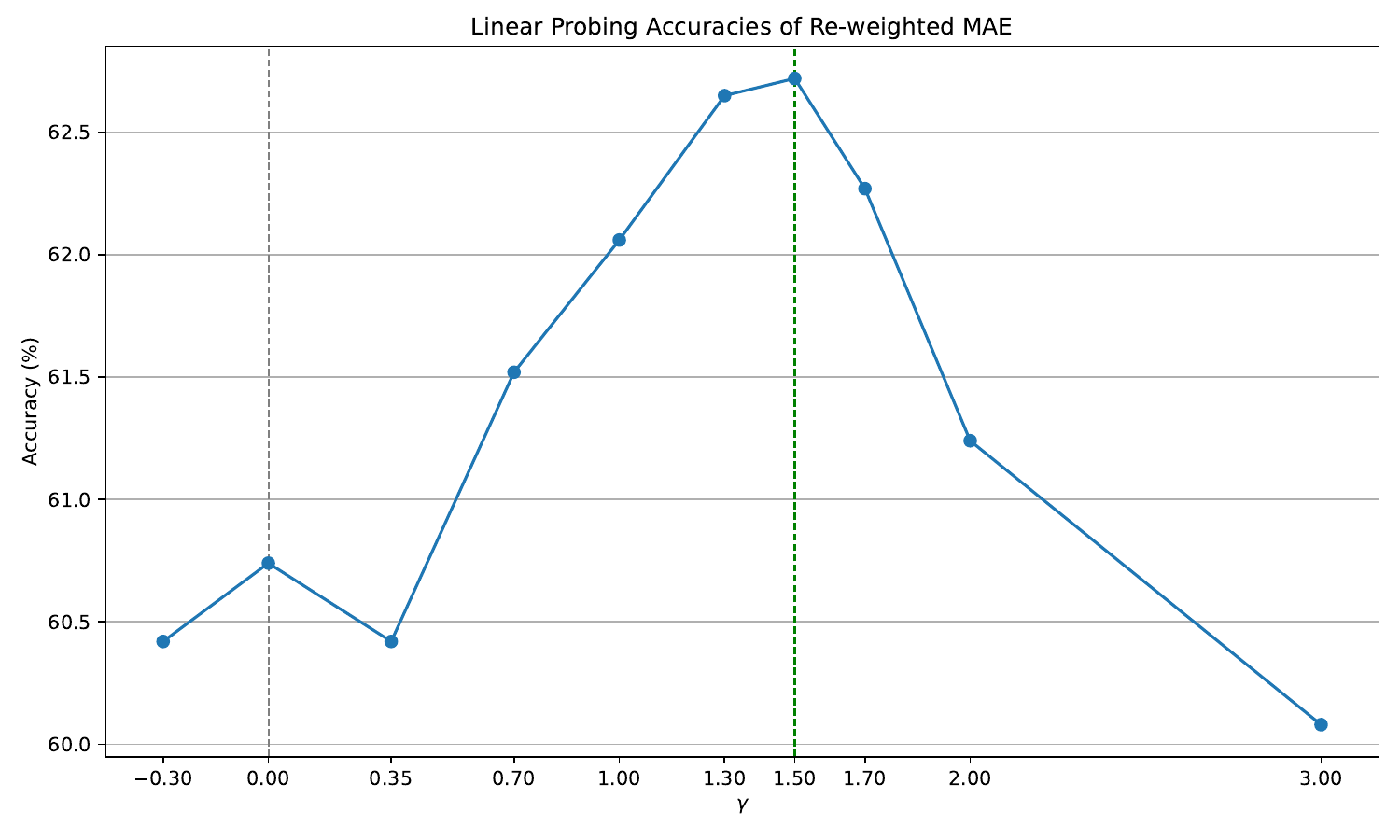}
    \caption{Linear probing accuracies of ViT-Base pretrained using Re-weighted MAE
    on the ImageNet-100 data set for 200 epochs.}
    \label{fig:reweighted-acc}
\end{figure}

\subsection{Explaining Existing Strategies}
\label{sec:cur-improve-strategies}

Recent improvements to MPT broadly fall into two categories: refinements to masking strategies and integration with contrastive learning.
We find that most of these improvements can be explained with our theoretical framework (Section~\ref{sec:downstream}), which emphasizes the roles of label error ($\alpha$) and augmentation graph connectivity ($\sum \lambda_{k+1}^2$).

\paraemph{Masking Strategy Improvements.} 
A prominent line of work optimizes masking by prioritizing \emph{salient patches}---regions critical for downstream tasks. 
For instance, \citet{Liu2023GoodHi} proposed guided masking using attention scores from the CLS token in Vision Transformers. Periodically, unmasked images are passed through the encoder to compute an importance map, where higher attention scores indicate salient patches (e.g., foreground objects). These patches are then masked with higher probability, while less important regions (e.g., backgrounds) are retained as encoder inputs. \citet{Wang2023HardPM} dynamically adjust masking based on the reconstruction difficulty of the patches and preferentially mask the core objects. Expanding this concept to relational learning, \citet{Yang2023MRMMR} introduced masked relation modeling for medical images. This approach leverages the self-attention mechanism to detect highly interdependent regions within the input image and disrupts these relationships by masking the most critical patches associated with a specific region. These methods align well with our theoretical framework. {By masking salient patches, they reduce the distinctness of masked views, generating more intra-class edges in the augmentation graph $\gG_A$. This lowers residual eigenvalues ($\sum \lambda_{k+1}^2$) and improves downstream performance as Theorem~\ref{thm:downstream-spectrum}.}

\paraemph{Contrastive Learning Integration.} 
Another category of improving strategy is to combine MPT with contrastive objectives. \citet{CMAE} proposed CMAE, which exemplifies this through a dual-branch architecture. 
The first branch follows the standard MAE framework: an encoder processes masked and pixel-shifted images, and a lightweight decoder reconstructs pixels. The second branch introduces a momentum encoder updated via exponential moving average (EMA), which processes the unshifted and unmasked version of inputs. A contrastive loss aligns features from the EMA encoder with those decoded by the first branch, effectively distilling knowledge between masked and unmasked views. The authors report a $5.9\%$ improvement in ImageNet-1K linear probing accuracy over MAE, attributing this to reduced intra-class variance and enhanced inter-class diversity in learned features. \citet{Jiang2023LayerGP} proposed Layer-Grafted Pretraining for Vision Transformers. 
Early layers are trained with MIM to capture low-level textures, while later layers employ contrastive learning to enhance semantic discriminability. This is achieved iteratively: after pretraining early layers with reconstruction loss, they are frozen, and contrastive objectives are applied to later layers. The authors observed improved separability between different classes following the application of Layer-Grafted Pretraining, based on a qualitative evaluation of the t-SNE visualizations of the representations. {We note that MPT generates intra-class edges by masking techniques while contrastive learning generates edges by data augmentation. Consequently, combining these two objectives can generate more edges and further enhance the connectivity of the augmentation graph, which indicates better downstream performance.}

\section{Conclusion}

In this work, we established a theoretical connection between masked pretraining (MPT) and contrastive learning, showing that minimizing the MPT loss implicitly aligns mask-induced positive pairs. We identified dimensional collapse as a key limitation of MPT and proposed U-MPT, a uniformity-enhanced variant that explicitly promotes feature diversity. Theoretically, we derived downstream generalization guarantees, explaining the empirical success of high mask ratios (e.g., $\rho=0.75$) through their balance of intra-class connectivity and inter-class discriminability. Empirically, U-MPT achieved superior robustness on out-of-distribution data sets like ImageNet-A and ImageNet-C. Additionally, based on our theoretical analysis, we propose a new reconstruction strategy that prioritizes challenging regions during reconstruction and enhances the performance of MPT. These contributions provide a unified framework for optimizing MPT, bridging reconstruction fidelity with feature discriminability.

\acks{Yisen Wang was supported by National Natural Science Foundation of China (92370129, 62376010), Beijing Major Science and Technology Project under Contract no. Z251100008425006, Beijing Natural Science Foundation (L257007), Beijing Nova Program (20230484344, 20240484642), and State Key Laboratory of General Artificial Intelligence.}

\newpage
\appendix

\section{Proofs}

In the following proofs, given the fact that $g$ is fixed, we write $\varepsilon=\varepsilon(g)$ for notational simplicity.

{
\subsection{Proof of Theorem \ref{thm:implicit-alignment}}

\begin{proof}
With Assumption \ref{ass:aprroximation pesudo encoder}, we have
\begin{align*}
    \gL_\text{MAE}(h)&  = \E_{x_1,x_2} \| h(x_1) - x_2 \|^2 \\
    &=\E_{x_1,x_2} \| h(x_1) - x_2\|^2 +\varepsilon - \varepsilon\\
    &\geq\E_{x_1,x_2} \| h(x_1) - x_2\|^2 +\Vert x_2-h_g(x_2) \Vert ^2 - \varepsilon &(\Vert x_2-h_g(x_2) \Vert ^2\leq \varepsilon)\\
    &\geq \frac{1}{2}\E_{x_1,x_2} \| h(x_1) - h_g(x_2)\|^2 -\varepsilon &(\Vert a+b \Vert ^2 \leq 2 (\Vert a\Vert^2 + \Vert b\Vert^2)  \\
    &= -\E_{x_1,x_2}  h(x_1)^\top h_g(x_2) -\varepsilon + 1.  &(h(x) \text{ and } h_g(x) \text{ are normalized}) \\
\end{align*}
We formulate the features as two matrices $H$,$H_g$. We denote $H(x_1)  =  \sqrt {d_{x_{1}}}h(x_{1})$ as the $x_1$-th row of the matrix $H$ and $H_g(x_2)  =  \sqrt {d_{x_{2}}}h_g(x_{2})$ as the $x_2$-th row of the matrix $H_g$. As defined before, $(A_M)_{x_2,x_1} $ is the joint distribution of $x_{1}$ and $x_{2}$, \ie $(A_M)_{x_2,x_1} = w_{x_1,x_2}$. We denote the normalized form of $A_M$ as $\bar{A}_M$, \ie $(\bar{A}_M)_{x_2,x_1}=\frac{w_{x_1,x_2}}{\sqrt{d_{x_1 }}\sqrt{d_{  x_2} }}$. Then we can reformulate the reconstruction loss,
\begin{align*}
    \gL_\text{MAE}(h)
    &\geq -\sum\limits_{x_1,x_2}\frac{w_{x_1,x_2}}{\sqrt{d_{x_1}d_{ x_2}}}  \sqrt{d_{x_1}}h(x_1)\cdot \sqrt {d_{ x_2}}h_g(x_2) -\varepsilon + 1\\
    &= -\tr(\bar{A}_MHH_g^\top) -\varepsilon + 1   \\
    &\geq -\frac{1}{2}(\|\bar{A}_MH\|^2+\|H_g\|^2) -\varepsilon + 1 .\\
\end{align*}
As the output of decoder is normalized, \ie $\|H_g\|^2 = 1$. we obtain
\begin{equation}
    \gL_\text{MAE}(h)\geq-\frac{1}{2}(\tr(\bar{A}_M^\top \bar{A}_MHH^\top) + 1) - \varepsilon + 1=-\frac{1}{2}\tr(\bar{A}_M^\top \bar{A}_MHH^\top)  - \varepsilon + \frac{1}{2}.\\
\label{equ:matrix-rec}
\end{equation}
Then we element-wise compute $\bar{A}_M^\top \bar{A}_M$ and $HH^\top$, we have
\begin{equation}
(\bar{A}_M^\top \bar{A}_M)_{x_1,x_1^+} =  \sum\limits_{x_2} \frac{w_{x_1,x_2}w_{x_1^+,x_2}}{d_{x_2}\sqrt{d_{x_1 }d_{ x_1^+ }}}.
\end{equation}
\begin{equation}
(HH^\top)_{x_1^+,x_1} = \sqrt{d_{ x_1}d_{\ x_1^+}}h(x_1)^\top h(x_1^+).
\end{equation}
As the trace is the sum of the diagonal value of the matrix, we consider $x_1$-th diagonal value of $(\bar{A}_M^\top \bar{A}_M H H^\top)$,
\ie
\begin{equation}
(\bar{A}_M^\top \bar{A}_M H H^\top)_{x_1,x_1} = \sum\limits_{x_1^+} (A_M^\top A_M)_{x_1,x_1^+} (HH^\top)_{x_1^+,x_1} = \sum\limits_{x_1^+} \sum\limits_{x_2} \frac{w_{x_1,x_2}w_{x_1^+,x_2}}{d_{x_2} } h(x_1)^\top h(x_1^+). 
\end{equation}
With that, we can element-wise expand Eq.~(\ref{equ:matrix-rec}),
\begin{align*}
 \gL_\text{MAE}(h)&\geq-\frac{1}{2}\E_{x_1,x_1^+ \sim \hat{p}(x,x^+)}h(x_1)^\top h(x_1^+) -\varepsilon + \frac{1}{2},
\end{align*}
where $\hat{p}(x_1,x_1^+) = \sum\limits_{x_2} \frac{w_{x_1,x_2}w_{x_1^+,x_2}}{d_{x_2} }$.
Similarly, we define a matrix $H_g$, where $(H_g)_{x_2} = \sqrt{d_{x_{2}}}h_g(x_{2}) $. We let $(\bar{A}_M)_{x_2,x_1}=\frac{w_{x_1,x_2}}{\sqrt{d_{ x_1}}\sqrt{d_{ x_2}}}$ and we obtain
\begin{align*}
    \gL_\text{MAE}(h)
    &\geq -\tr(HH_g^\top \bar{A}_M) -\varepsilon + 1   \\
    &\geq -\frac{1}{2}(\|H\|^2+\|\bar{A}_M^\top H_g\|^2) -\varepsilon + 1  & (tr(AB) \leq \frac{1}{2}\Vert A \Vert ^2 + \Vert B \Vert^2)\\
    &=-\frac{1}{2} (\tr(\bar{A}_M \bar{A}_M^\top H_gH_g^\top) + 1) -\varepsilon + 1 &\text{($H_g$ is normalized)}\\
    &\geq-\frac{1}{2}\E_{x_2^+ \sim \bar{p}(x_2,x_2^+)}h_g(x_2)^\top h_g(x_2^+) -\varepsilon + \frac{1}{2},
\end{align*}
where $\bar{p}(x_2,x_2^+) = \sum\limits_{x_1} \frac{w_{x_1,x_2}w_{x_1,x_2^+}}{d_{x_1} }$.
\end{proof}

}
\subsection{Proof of Corollary \ref{cor:implicit-alignment}}
\begin{proof}
With Theorem \ref{thm:implicit-alignment}, we know that
\begin{align*}
 \gL_\text{MAE}(h)&\geq-\frac{1}{2}\E_{x_1,x_1^+ \sim \hat{p}(x,x^+)}h(x_1)^\top h(x_1^+) -\varepsilon + \frac{1}{2}.
\end{align*}
As the decoder is $L$-bi-Lipschitz, we obtain
\begin{equation}
\forall\ (x_1,x_2), 1/L \cdot \| x_1 -x_2\|^2\leq\|g(x_1)-g(x_2)\|^2 \leq L \cdot \| x_1 -x_2\|^2.
\label{eqn:bili}
\end{equation}
So,
\begin{align*}
 \gL_\text{MAE}(h)&\geq-\frac{1}{2}\E_{x_1,x_1^+ \sim \hat{p}(x_1,x_1^+)}h(x_1)^\top h(x_1^+) -\varepsilon + \frac{1}{2}\\
 &=\frac{1}{4}\E_{x_1,x_1^+ \sim \hat{p}(x_1,x_1^+)}\Vert h(x_1)- h(x_1^+) \Vert ^2 -\varepsilon  &\text{($h(x)$ is normalized)}\\
  &=\frac{1}{4}\E_{x_1,x_1^+ \sim \hat{p}(x_1,x_1^+)}\Vert g(f(x_1))- g(f(x_1^+)) \Vert ^2 -\varepsilon  \\
  &\geq\frac{1}{4L}\E_{x_1,x_1^+ \sim \hat{p}(x_1,x_1^+)}{\Vert f(x_1)-f(x_1^+) \Vert} ^2 -\varepsilon  &(\text{Equation \ref{eqn:bili}}) \\  
  &=-\frac{1}{2L}\E_{x_1,x_1^+ \sim \hat{p}(x_1,x_1^+)} f(x_1)^\top f(x_1^+)  -\varepsilon + \frac{1}{2}.  \\    
\end{align*}
\end{proof}
\subsection{Proof of Theorem \ref{thm:MPT not tc}}
\begin{proof}
 When the encoder fully collapses, the encoder $f$ maps all the features to the same point $c$, \ie
 \begin{equation}
     \forall x \in \gX_1, f(x) = c.
 \end{equation}
Then,
 \begin{equation}
\begin{aligned}
     \gL_{MAE}(h) &= \E_{x_1,x_2}\Vert g(f(x_1)) - x_2\Vert ^2\\
     &= \E_{x_2}\Vert g(c) - x_2\Vert^2.\\
\end{aligned}
\label{kkt dis}
 \end{equation}
We then select a g(c) to make Equation (\ref{kkt dis}) minimal. 
According to KKT conditions, it has a closed-form solution $q^\star$, satisfying
\begin{equation}
    2\E_{x_2}(q^\star - x_2) = 0.
\end{equation}
\ie $q^\star = \E_{x_2}x_2$. Then
\begin{align*}
     \gL_{MAE}(h)
     &= \E_{x_2}\Vert g(c) - x_2\Vert^2\\
     &\geq \E_{x_2}\Vert \E_{x_2'} x_2' - x_2\Vert^2\\
     &= \text{Var}(x_2).
\end{align*}
\end{proof}

\subsection{Proof of Theorem \ref{thm:spectral-loss}}
\begin{proof}
With Corollary \ref{cor:implicit-alignment}, we have
\begin{equation}
\gL_\text{MAE}(h)\geq-1/(2L)\cdot \E_{x_1,x_1^+} f(x_1) ^\top f(x_1^+) -\epsilon + \textnormal{const}.
\end{equation}
Then we set $\lambda = \frac{1}{4L}$ and we obtain
\begin{equation}
    \begin{aligned}
    \gL_\text{U-MPT}(h) &= \gL_{MAE}(h) + 1/(4L) \cdot \gL_{unif}(f)\\
    &\geq 1/(2L) \cdot \gL_{align}(f) + 1/(4L) \cdot \gL_{unif}(f) -\epsilon + \textnormal{const}\\
    &= 1/(4L) \cdot \gL_{SCL}(f) -\epsilon + \textnormal{const}.
    \end{aligned}
\end{equation}
\end{proof}

\subsection{Proof of Theorem \ref{thm:downstream and MPT}}
\label{sec:proof-of-thm-downstream-and-MPT}

\begin{proof}

 We compose the marginal distribution of $x_1$ as a matrix $D$ and $D_{x_1} = d_{x_1}$ is the $x_1$-th row of $D$. And we denote $U$ as the matrix 
composed of encoder features, \ie $U_{x_1} = \sqrt{ d_{x_1}} f(x_1)$. Recall that  $A_{x_1,x_1^+} = \gA(x_1,x_1^+)$ and $\bar{A}$ is the normalized form of $A$, \ie $\bar{A}_{x_1,x_1^+} = \frac{\gA(x_1,x_1^+)}{\sqrt{ d_{x_1}\cdot d_{x_1^+}}}$.
Then we reformulate the downstream error,
\begin{align*}
\E_{x,y}\|y-W_ff(x)\|^2&= \sum\limits_{(x_1,y_{x_1})}d_{x_1}\| y_{x_1} - W_f f(x_1)\|^2 \\
&= \|D^{1/2}Y-UW_f^\top\|^2 \\
&= \|D^{1/2}Y-\bar{A}C + \bar{A}C - UW_f^\top\|^2, \\
\end{align*}
where $C_{x_1,j} = \sqrt{(d_i)\mathbbm{1}_{y_{x_1}=j}}$.
Then we consider the relationship between the downstream error and the augmentation graph, we element-wise consider the matrix $(D^{1/2}Y-\bar{A}C)$,
\begin{equation}
    (D^{1/2}Y)_{x_1,j} = \sqrt{(d_{x_1})\mathbbm{1}_{y_{x_1}=j}}, \quad (\bar{A}C)_{{x_1},j} = \sum\limits_{x_1^+} \frac{w_{{x_1},x_1^+}}{\sqrt{d_{x_1}}\cdot \sqrt{d_{x_1^+}}} \sqrt{(d_{x_1^+})\mathbbm{1}_{y_{x_1^+}=j}}.
\end{equation}
So when $j = y_{x_1}  $,
\begin{equation}
\begin{aligned}
    (D^{1/2}Y-\bar{A}C)_{{x_1},j} &= \sqrt{(d_{x_1})\mathbbm{1}_{y_{x_1}=j}} -\sum\limits_{x_1^+} \frac{\gA(x_1,x_1^+)}{\sqrt{d_{x_1}}} 
    \mathbbm{1}_{y_{x_1^+}=j} \\
    &= \sqrt{d_{x_1}} -\sum\limits_{x_1^+} \frac{\gA(x_1,x_1^+)}{\sqrt{d_{x_1}}} \mathbbm{1}_{y_{x_1^+}=j} \\ 
    &= \sum\limits_{x_1^+}\frac{{\gA(x_1,x_1^+)}}{\sqrt{d_{x_1}}} -\sum\limits_{x_1^+} \frac{\gA(x_1,x_1^+)}{\sqrt{d_{x_1}}} \mathbbm{1}_{y_{x_1^+}=j} \\
    &= \sum\limits_{x_1^+} \frac{\gA(x_1,x_1^+)}{\sqrt{d_{x_1}}} \mathbbm{1}_{y_{x_1^+}\neq j}\\
    &= \sum\limits_{x_1^+} \frac{\gA(x_1,x_1^+)}{\sqrt{d_{x_1}}} \mathbbm{1}_{y_{x_1^+}\neq y_{x_1}}.\\
\end{aligned}
\end{equation}
\vspace{-1 cm}
When $j \neq y_{x_1} $,
\begin{equation}
\begin{aligned}
    (D^{1/2}Y-\bar{A}C)_{{x_1},j} &= \sqrt{(d_{x_1})\mathbbm{1}_{y_{x_1}=j}} -\sum\limits_{x_1^+} \frac{\gA(x_1,x_1^+)}{\sqrt{{d_{x_1}}}} \mathbbm{1}_{y_{x_1^+}=j} \\
     &= 0 -\sum\limits_{x_1^+} \frac{\gA(x_1,x_1^+)}{\sqrt{d_{x_1}}} \mathbbm{1}_{y_{x_1^+}=j} \\
    &= -\sum\limits_{x_1^+} \frac{\gA(x_1,x_1^+)}{\sqrt{d_{x_1}}} \mathbbm{1}_{y_{x_1^+} = j}.
\end{aligned}
\end{equation}
\begingroup
\raggedbottom
\allowdisplaybreaks[4]
We define $\beta_{x_1} = \sum\limits_{x_1^+} \gA(x_1,x_1^+)\mathbbm{1}_{y_{x_1^+} \neq y_{x_1}} $, and we have
\begin{equation}
\begin{aligned}
\Vert( D^{1/2}Y-\bar{A}C )_{x_1}\Vert^2 &= \left(\sum\limits_{x_1^+} \frac{\gA(x_1,x_1^+)}{\sqrt{d_{x_1}}} \mathbbm{1}_{y_{x_1^+}\neq y_{x_1}}\right)^2 + \sum\limits_{j\neq y_{x_1}}\left(\sum\limits_{x_1^+} \frac{\gA(x_1,x_1^+)}{\sqrt{d_{x_1}}} \mathbbm{1}_{y_{x_1^+} = j}\right)^2 \\
&\leq\left(\sum\limits_{x_1^+} \frac{\gA(x_1,x_1^+)}{\sqrt{d_{x_1}}} \mathbbm{1}_{y_{x_1^+}\neq y_{x_1}}\right)^2 + \left(\sum\limits_{j\neq y_{x_1}} \sum\limits_{x_1^+} \frac{\gA(x_1,x_1^+)}{\sqrt{d_{x_1}}} \mathbbm{1}_{y_{x_1^+} = j}\right)^2\\
&\leq\left(\sum\limits_{x_1^+} \frac{\gA(x_1,x_1^+)}{\sqrt{d_{x_1}}} \mathbbm{1}_{y_{x_1^+}\neq y_{x_1}}\right)^2 + \left( \sum\limits_{x_1^+} \frac{\gA(x_1,x_1^+)}{\sqrt{d_{x_1}}} \sum\limits_{j\neq y_{x_1}} \mathbbm{1}_{y_{x_1^+} = j}\right)^2\\
&=\left(\sum\limits_{x_1^+} \frac{\gA(x_1,x_1^+)}{\sqrt{d_{x_1}}} \mathbbm{1}_{y_{x_1^+}\neq y_{x_1}}\right)^2 + \left( \sum\limits_{x_1^+} \frac{\gA(x_1,x_1^+)}{\sqrt{d_{x_1}}} \mathbbm{1}_{y_{x_1^+} \neq y_{x_1}}\right)^2\\
&= \frac{2\beta_{x_1}^2}{d_{x_1}}.
\end{aligned}
\end{equation}
With that, we obtain $\Vert D^{1/2}Y-\bar{A}C \Vert^2 =  \sum\limits_{{x_1}} \frac{ 2\beta_{x_1}^2}{d_{x_1}}.$
Let $\beta=\sum_{x_1,{x_1^+}}\gA(x_1,x_1^+)\mathbbm{1}_{y_{x_1^+} \neq y_{x_1}}$. Then we have
\begin{align*}
\Vert D^{1/2}Y-\bar{A}C \Vert^2 &=  \sum\limits_{x_1} \frac{ 2\beta_{x_1}^2}{d_{x_1}} \\
&\leq     \sum\limits_{x_1} 2\beta_{x_1} &\left(d_{x_1} = \sum\limits_{x_1^+}\gA(x_1,x_1^+)\geq \beta_{x_1}\right) \\
&=2\beta.
\end{align*}
Then we obtain
\begin{align}
\E_{x,y}\|y-W_ff(x)\|^2
&=\|D^{1/2}Y-\bar{A}C+\bar{A}C-UW_f^\top\|^2
\notag\\
&\leq 2\|\bar{A}C-UW_f^\top\|^2+4\beta
\notag\\
&=2\|(\bar{A}-UU^T+UU^T)C-UW_f^\top\|^2+4\beta
\notag\\
&=2\|(\bar{A}-UU^T)C+U(U^TC-W_f^\top)\|^2+4\beta
\notag\\
&\leq
4\bigl(
\|(\bar{A}-UU^T)C\|^2
+\|U(U^TC-W_f^\top)\|^2
\bigr)+4\beta
\notag\\
&\leq
4\bigl(
\|\bar{A}-UU^T\|^2\|C\|^2
+\|U\|^2\|U^TC-W_f^\top\|^2
\bigr)+4\beta
\notag\\
&\leq
4\bigl(
\|\bar{A}-UU^T\|^2
+\|U^TC-W_f^\top\|^2
\bigr)+4\beta
\notag\\
&=
4\gL_{SCL}(f)
+\textnormal{const}
+\|U^TC-W_f^\top\|^2
+4\beta
\notag\\
&\leq
16L\cdot\gL_{\text{U-MPT}}(h)
+8\E_y[\E_{x|y}f(x)-b_y]^2
+4\beta
+16L\varepsilon
+\textnormal{const}
\notag\\
&\leq
16L\cdot\gL_{\text{U-MPT}}(h)
+4\beta
+16L\varepsilon
+\textnormal{const}.
\label{eqn:umae_and_scl}
\end{align}
\endgroup
Here we leverage Lemma B.8 in \cite{haochen} and the fact that $W_f$ is a mean classifier.
In the next step, we analyze the prediction error.
We denote $y(\bar{x})$ as the ground-truth label of original data $\bar{x}$.
We first define an ensembled linear predictor $c_f'$. For an original sample, the predictor ensembles the results of all different views and choose the label predicted most. With the definition, $y(\bar{x}) \neq c_f'(\bar{x})$ only happens when more than half of the views predict wrong labels.
So
\begin{equation}
\begin{aligned}
    &\operatorname{Pr}(y(\bar{x})\neq c_f'(\bar{x})) \\
    &\leq 2 \operatorname{Pr}(y(\bar{x})\neq c_f(x))\\
    &\leq 4\E_{\bar{x}\sim \gP_d(x),x\sim \gM_1(x|\bar{x})}\|y(\bar{x})-W_ff(x)\|^2\\
    &\leq 8( \E_{x,y}\|y-W_ff(x)\|^2+ \E_{\bar{x}\sim \gP_d(x),x\sim \gM_1(x|\bar{x})}\|y-y(\bar{x})\|^2) \\
    &\leq  8(\E_{x,y}\|y-W_ff(x)\|^2 + 2\alpha)\\
    &\leq 32 \gL_{SCL}(f) +64\alpha + 16\alpha +\textnormal{const}\\
    &\leq  128L \cdot \gL_\text{U-MPT}(h)+32\beta +16\alpha+ 128L \varepsilon+ \textnormal{const}. 
\end{aligned}
\label{eqn:pred error alpha beta}
\end{equation}

Also note that, as we assume $E_{\bar{x}\sim \gP_d}(\gA(x_1|\bar{x})\mathbbm{1}_{y_{x_1}\neq y(\bar{x})})\leq \alpha$,
we have
\begin{equation}
\begin{aligned}
\beta
 &= \sum_{x_1,{x_1^+}} \ E_{\bar{x}}(\gA(x_1|\bar{x})\gA(x_{x_1^+}|\bar{x})\mathbbm{1}_{y_{x_1^+} \neq y_{x_1}})\\
  &\leq \sum_{x_1,{x_1^+}} \ E_{\bar{x}}(\gA(x_1|\bar{x})\gA(x_{x_1^+}|\bar{x})(\mathbbm{1}_{y_i \neq y(\bar{x})}+\mathbbm{1}_{y_{x_1^+} \neq y(\bar{x})}))\\
  &= 2E_{\bar{x}\sim \gP_d}(\gA(x_1|\bar{x})\mathbbm{1}_{y_{x_1}\neq y(\bar{x})})\\
  &\leq 2\alpha.
\end{aligned}
\label{eqn:natural error and aug eror}
\end{equation}

This implies

$$
\operatorname{Pr}(y(\bar{x})\neq c_f'(\bar{x})) \le 128L \cdot \gL_\text{U-MPT}(h)+80\alpha+ 128L \varepsilon+ \textnormal{const},
$$

which gives the values of the constants mentioned in Theorem~\ref{thm:downstream and MPT}.

\end{proof}

\subsection{Proof of Theorem \ref{thm:downstream-spectrum}}
\begin{proof}
 With Equation (\ref{eqn:umae_and_scl}), we have 
 \begin{equation}
         \gL_\text{U-MPT}(h) \geq  \frac{1}{4L} \Vert A - UU^\top \Vert ^2 - \varepsilon +\textnormal{const}.
 \end{equation}

 We set $\gL_{mf} (U) = \|(\bar{A}-UU^T) \|^2$. When $U^\star$ is the minimizer of $\gL_{mf} (U)$, according to the analysis in \cite{eckart1936approximation}, we obtain
\begin{equation}
    \|(\bar{A}-U^\star (U^\star)^T) \|^2 = \sum\limits_{i=k+1}^{N_1}\lambda_i^2 ,
\end{equation}
where $\lambda_{k+1}\cdots \lambda_{N_1}$ are the remaining $N_1-k$ eigenvalues of matrix $\bar{A}$. 
We denote $h^\star$ as the minimizer of $\gL_\text{U-MPT}$ and $f^\star$ as the corresponding encoder. Then $U_{f^\star}$ is composed of the features encoded by $f^\star$, \ie $(U_{f^\star})_{x_1} = \sqrt{d_{x_1}}f^\star(x_1)$. So for all $h\in\gH$, we have
\begin{equation}
\begin{aligned}
          \gL_\text{U-MPT}(h)\geq\gL_\text{U-MPT}(h^\star) &\geq  \frac{1}{4L} \Vert A - U_{f^\star} (U_{f\star})^\top \Vert ^2 -  \varepsilon +\textnormal{const}\\
         &\geq \frac{1}{4L}  \Vert A - U^\star (U^\star)^\top \Vert ^2 -    \varepsilon +\textnormal{const}\\
         &=\frac{1}{4L}\sum\limits_{i=k+1}^{N_1}\lambda_i^2 -  \varepsilon +\textnormal{const}.\\
\end{aligned}
\label{eqn:applambda}
\end{equation}

\end{proof}

\section{Analysis of the Toy Model}

In this section, we provide the theoretical proofs of the toy model.

\subsection{Problem Setting and Probabilistic Model}

Consider two classes, $C_1$ and $C_2$, and $K$ patch positions arranged on a two-dimensional grid. For each class $C_i$ and position $k$, define a local candidate set
\[
P_{i,k}=E_{i,k}\cup O_k,
\]
where $E_{i,k}$ contains $N$ class-specific patches and $O_k$ contains $M$ patches shared by the two classes. At the same position, the class-specific sets are disjoint, i.e., $E_{1,k}\cap E_{2,k}=\emptyset$, and the two classes overlap only through $O_k$. The identities of these candidate patches may vary with $k$. If an image belongs to class $C_i$, then its $k$-th patch is independently and uniformly sampled from $P_{i,k}$.

To create a masked view from an image, $\rho K$ positions are randomly selected as the masked portion, while the remaining $(1-\rho)K$ positions form the unmasked view, where $\rho$ is the mask ratio. Let $I_2$ denote the masked-position set and $I_1$ denote the unmasked-position set. A view is termed an overlapped view if all its patches are drawn from the shared local sets $O_k$ at the corresponding positions; otherwise, it is considered a non-overlapped view.

The following quantities come in handy:
\begin{align*}
H&=\prod_{k\in I_2}|P_{i,k}|=(N+M)^{\rho K},\\
W&=\prod_{k\in I_2}|O_k|=M^{\rho K},\\
R&=\prod_{k\in I_1}|P_{i,k}|=(N+M)^{(1-\rho)K},\\
G&=\prod_{k\in I_1}|O_k|=M^{(1-\rho)K}.
\end{align*}
where $H,W,R,G$ denote the number of possible masked views, overlapped masked views, possible unmasked views, and overlapped unmasked views, respectively.

Given the uniform sampling, the marginal probability of a view with $p$ patches being sampled from a specific class is $\frac{1}{(N+M)^p}$.
Considering both classes, we can derive
the marginal probability of each view as follows:

\begin{itemize}
    \item Non-overlapped masked views: $\frac{1}{2H}$
    \item Overlapped masked views: $\frac{1}{H}$
    \item Non-overlapped unmasked views: $\frac{1}{2R}$
    \item Overlapped unmasked views: $\frac{1}{R}$
\end{itemize}

Note that overlapped views have twice the probability of being selected as non-overlapped views, since they can be chosen by both classes.

We use $x_1,x_1^+$ to denote unmasked views, and $x_2$ masked views.
The quantity $w_{x_1,x_2}$ represents the probability that $x_1$ and $x_2$ come from the same image.
Specifically, $w_{x_1,x_2}=\Pr(x_2)\Pr(x_1|x_2)$.
Clearly, $w_{x_1,x_2} = 0$ if both $x_1$ and $x_2$ are non-overlapped and from different classes.
Otherwise, $w_{x_1,x_2}$ is equal to $\frac{1}{2HR}$ when $x_1$ or $x_2$ is non-overlapped, and $\frac{1}{HR}$ when both are overlapped.

The nodes in an augmentation graph correspond to the set of all unmasked views. The normalized edge weight between two unmasked views, $x_1$ and $x_1^+$, is given by

$$
    \gA^*(x_1,x_1^+)=\frac{1}{\sqrt{w_{x_1}w_{x_1^+}}}\sum\limits_{x_2}\frac{w_{x_1,x_2}w_{x_1^+,x_2}}{w_{x_2}}
$$
where $w_{x_1}$ represents the marginal probability of $x_1$. Here we use a normalized version of the augmentation graph for the sake of simplicity.

$\gA^*(x_1,x_1^+)$ can take one of the following values:

\begin{itemize}
    \item \textbf{One of $x_1$ and $x_1^+$ is non-overlapped and
    the other is overlapped}: \\
    $\gA^*(x_1,x_1^+)=\frac{1}{\sqrt{2}R}:=p_1$.
    \item \textbf{$x_1$ and $x_1^+$ are in the same class and both non-overlapped}: \\
    $\gA^*(x_1,x_1^+)=\frac{2H-W}{2HR}:=p_2$.
    \item \textbf{$x_1$ and $x_1^+$ are in different classes and both non-overlapped}: \\
    $\gA^*(x_1,x_1^+)=\frac{W}{2HR}:=p_3$.
    \item \textbf{Both $x_1$ and $x_1^+$ are overlapped}: \\
    $\gA^*(x_1,x_1^+)=\frac{1}{R}:=p_4$.
\end{itemize}

\subsection{Adjacency Matrix and Graph Connectivity}

The adjacency matrix of the augmentation graph, $A$, is symmetric and structured into 9 blocks, with the widths of blocks labeled above:

$$
    A = 
    \left[
        \begin{array}{c|c|c}
        \overbrace{A_{11}}^{R-G} & \overbrace{A_{12}}^{R-G} & \overbrace{A_{13}}^{G} \\
        \hline
        A_{21} & A_{22} & A_{23} \\
        \hline
        A_{31} & A_{32} & A_{33} \\
        \end{array}
    \right].
$$

Each block $A_{ij}\ (i=1,2,3;j=1,2,3)$ is a constant matrix, and the entries are given below:

$$
    C = {(c_{ij})}_{3\times 3} =
    \begin{bmatrix}
        p_2 & p_3 & p_1 \\
        p_3 & p_2 & p_1 \\
        p_1 & p_1 & p_4
    \end{bmatrix}.
$$
where $c_{ij}$ is the value of the entries in the block $A_{ij}$.

A block $A_{ij}$ can be expressed as $A_{ij}=c_{ij}\ones_{n_i}\ones_{n_j}\Trans$,
where $(n_1,n_2,n_3)=(R-G,R-G,G)$ and $\ones_n$ denotes a column vector of ones with dimension $n$.

Therefore,
$$
    A_1 = \begin{bmatrix}
        c_{11}\ones_{n_1}\ones_{n_1}\Trans &
        c_{12}\ones_{n_1}\ones_{n_2}\Trans &
        c_{13}\ones_{n_1}\ones_{n_3}\Trans \\
        c_{21}\ones_{n_2}\ones_{n_1}\Trans &
        c_{22}\ones_{n_2}\ones_{n_2}\Trans &
        c_{23}\ones_{n_2}\ones_{n_3}\Trans \\
        c_{31}\ones_{n_3}\ones_{n_1}\Trans &
        c_{32}\ones_{n_3}\ones_{n_2}\Trans &
        c_{33}\ones_{n_3}\ones_{n_3}\Trans \\
    \end{bmatrix}
    =
    \begin{bmatrix}
        \begin{bmatrix}
            c_{11} \ones_{n_1} \\
            c_{12} \ones_{n_2} \\
            c_{13} \ones_{n_3}
        \end{bmatrix}\ones_{n_1}\Trans &
        \begin{bmatrix}
            c_{21} \ones_{n_1} \\
            c_{22} \ones_{n_2} \\
            c_{23} \ones_{n_3}
        \end{bmatrix}\ones_{n_2}\Trans &
        \begin{bmatrix}
            c_{31} \ones_{n_1} \\
            c_{32} \ones_{n_2} \\
            c_{33} \ones_{n_3}
        \end{bmatrix}\ones_{n_3}\Trans
    \end{bmatrix}.
$$

It is evident that any vector in the nullspace of $\ones_{n_i}\Trans$ can be expanded to a vector in the nullspace of $A$, by filling the remaining blocks with zeros. Therefore, $A$ has an eigenvalue of $0$ with multiplicity at least $2R-G-3$.

In order to find the other three eigenvalues, we further consider the matrix $Q$:
$$
    Q = {(q_{ij})}_{3\times 3} =
    \begin{bmatrix}
        (R-G) p_2 & (R-G) p_3 & G p_1 \\
        (R-G) p_3 & (R-G) p_2 & G p_1 \\
        (R-G) p_1 & (R-G) p_1 & G p_4
    \end{bmatrix}.
$$
where $q_{ij}=n_j c_{ij}$.
Let $\lambda$ be an eigenvalue of $Q$, with corresponding eigenvector $\vv$.
From the equation $Q\vv=\lambda \vv$, we have
$$
\sum\limits_{j=1}^{3} c_{ij}n_j v_j = \lambda v_i,
$$
where $v_i$ is the $i$-th element of $\vv$.
Next, we define $\vu=\vv\otimes\begin{bmatrix}
    \ones_{n_1} \\
    \ones_{n_2} \\
    \ones_{n_3}
\end{bmatrix}=\begin{bmatrix}
    v_1\ones_{n_1} \\
    v_2\ones_{n_2} \\
    v_3\ones_{n_3}
\end{bmatrix}$, where $\otimes$ denotes the Kronecker product. Let $\vw=A\vu$.
Then, the $i$-th block of $\vw$, $\vw_i$, satisfies:
$$
    \vw_i = \sum\limits_{j=1}^{3} A_{ij}\vu_j
    = \sum\limits_{j=1}^{3} c_{ij}\ones_{n_i}\ones_{n_j}\Trans v_j\ones_{n_j}
    = \sum\limits_{j=1}^{3} (c_{ij}n_j v_j) \ones_{n_i}
    = \sum\limits_{j=1}^{3} \lambda v_i \ones_{n_i}
    = \sum\limits_{j=1}^{3} \lambda \vu_i,
$$
Thus, $\vu$ is an eigenvector of $A$ with corresponding eigenvalue $\lambda$. From the structure of $\vu$, we observe that $\vu$ is orthogonal to all eigenvectors corresponding to the eigenvalue $0$ discussed above. Finally, we can find the remaining three eigenvalues as below:

\begin{align*}
\begin{cases}
    \lambda_1 &= 1, \\
    \lambda_2 &= \frac{(H-W)(R-G)}{HR} = \left[1-\left(\frac{M}{N+M}\right)^{\rho K}\right]\left[1-\left(\frac{M}{N+M}\right)^{(1-\rho)K}\right], \\
    \lambda_3 &= 0.
\end{cases}
\end{align*}

We observe that $\lambda_2$ lies within the range $(0,1)$ and remains unchanged if $\rho$ is replaced by $1-\rho$, reflecting a symmetry between the masked and unmasked views. The value of $\lambda_2$ is maximized at $\rho = 0.5$, and diminishes as $\rho$ moves away from this midpoint. According to spectral graph theory \cite{chung1997spectral}, a smaller $\lambda_2$ implies a more connected augmentation graph, supporting our analysis in Section \ref{sec:mask-empirical} that connectivity improves when the mask ratio approaches 1.

\subsection{The Label Error}

Denote the unmasked-to-unmasked label error as $\beta=\sum\limits_{x_1,x_1^+} \A(x_1,x_1^+)\ones[y(x_1)\ne y(x_1^+)]$. To evaluate $\beta$, we consider the following four scenarios:

\begin{itemize}
    \item \textbf{One of $x_1$ and $x_1^+$ is non-overlapped and
    the other is overlapped}: \\
    $\A(x_1,x_1^+) = \frac{1}{2R^2} = q_1$.
    In this case, $\Pr[y(x_1)\ne y(x_1^+)] = \eta$.
    \item \textbf{$x_1$ and $x_1^+$ are in the same class and both non-overlapped}: \\
    $\A(x_1,x_1^+) = \frac{2H-W}{4HR^2} = q_2$.
    In this case, $\Pr[y(x_1)\ne y(x_1^+)] = 0$.
    \item \textbf{$x_1$ and $x_1^+$ are in different classes and both non-overlapped}: \\
    $\A(x_1,x_1^+) = \frac{W}{4HR^2} = q_3$.
    In this case, $\Pr[y(x_1)\ne y(x_1^+)] = 1$.
    \item \textbf{Both $x_1$ and $x_1^+$ are overlapped}: \\
    $\A(x_1,x_1^+) = \frac{1}{R^2} = q_4$.
    In this case, $\Pr[y(x_1)\ne y(x_1^+)] = \eta$.
\end{itemize}

Here, $\eta$ should be a value greater than $0$ and considerably smaller than $0.5$. The reason is that in practice, even if a patch possibly belongs to both two classes, its membership to a class should still be on a spectrum, \ie its belonging is not completely indistinguishable; considering that an unmasked view comprises many patches, the indistinguishability, \ie label error should be significantly smaller than the completely stochastic scenario. So, we assume $0<\eta<0.3$ here.

Then we have

\begin{equation*}
    \begin{split}
    \beta &=
        4(R-G)G    \cdot q_1 \cdot \eta +
        2{(R-G)}^2 \cdot q_2 \cdot 0 +
        2{(R-G)}^2 \cdot q_3 \cdot 1 +
        G^2        \cdot q_4 \cdot \eta \\
        &= \frac{8H(R-G)G\eta+2(R-G)^2W+4HG^2\eta}{4HR^2}. \\
    \end{split}
\end{equation*}

Denote the unmasked-to-natural label error as $\alpha=\E_{\bar{x},x_1}\ones[y(x_1)\ne y(\bar{x})]$. We have

\newcommand{\xo}{{x_1\textrm{ is overlapped}}}
\begin{align*}
    \alpha &=
        \E_{\bar{x}} \Pr[y(x_1)\ne y(\bar{x})] \\
        &= \Pr(\xo)[
            \Pr(x_1\in C_2|\xo)\Pr(\bar{x}\in C_1|\xo)\\
            &\qquad\qquad\qquad\qquad\qquad+\Pr(x_1\in C_1|\xo)\Pr(\bar{x}\in C_2|\xo)
        ] \\
        &=
            \frac{G}{R}\left[\frac{1}{2}\cdot\frac{1}{2}+\frac{1}{2}\cdot\frac{1}{2}\right] \\
        &= \frac{G}{2R} \\
        &= {\left(\frac{M}{N+M}\right)}^{(1-m)K}.
\end{align*}

\subsection{Proof of Theorem~\ref{thm:generalization on toy}}
\begin{proof}
As shown in Equation~(\ref{eqn:pred error alpha beta}), we consider $S=32\lambda_2^2 + 32\beta + 16\alpha$ as a component of the upper bound for the downstream classification error that relates to $\rho$.
Let $w=\frac{W}{H}=r^{\rho K}$, $g=\frac{G}{R}=r^{(1-\rho)K}$ and $V=wg=r^K$. Note that $V$ is independent of the mask ratio $\rho$. As stated in Theorem \ref{thm:generalization on toy}, $V$ satisfies $0<V<\frac{1}{12}$.
Then we have

$$
\begin{aligned}
    S
    &= 32\lambda_2^2 + 32\beta + 8\alpha \\
    &= 32{\left[\frac{(H-W)(R-G)}{HR}\right]}^2 + 32\frac{8H(R-G)G\eta+2(R-G)^2W+4HG^2\eta}{4HR^2} + 8\frac{G}{2R} \\
    &= 32g^2w^2-48g^2w+32(1-\eta)g^2-64gw^2+96gw+[64 (\eta-1)+4]g+32w^2-48w+32.
\end{aligned}
$$

Let $S^*(\rho,r,K)=\frac{S-32}{4}$. We now regard $g$ and $w$ as variables that comply with the constraints $V<w<1$ and $wg=V$. By plugging in $wg=V$ we get

$$
\begin{aligned}
    S^* &= 8V^2-12Vg+8(1-\eta)g^2-16Vw+24V+[16(\eta-1)+1]g+8w^2-12w \\
    &= 8(1-\eta)g^2+[16(\eta-1)+1-12V]g+8w^2-(16V+12)w+8V^2+24V.
\end{aligned}
$$

Let $x=2\sqrt{2}\cdot w$, $y=2\sqrt{2(1-\eta)}\cdot w$, $a=\frac{\sqrt{2}(4V+3)}{2}$ and $b=\frac{12V-16\eta+15}{4\sqrt{2(1-\eta)}}$. Then $S^*=(x-a)^2+(y-b)^2+\varPhi(V,\eta)$, where $\varPhi(V,\eta)$ is a function of $\eta$ and $V$ which is independent of $w$ and $g$, and consequently irrelevant to $\rho$. We also note that $xy=8\sqrt{1-\eta}\cdot V:=c$. By plugging this in, we have

$$
    S^* = (x-a)^2+\left(\frac{c}{x}-b\right)^2+\varPhi(V,\eta).
$$

Take the derivative of $S^*$ with respect to $x$:

$$
    \fracpartial{S^*}{x}=2(x-a)-\frac{2c}{x^2}\left(\frac{c}{x}-b\right)
$$

We consider the sign of the derivative at two points.

When $x=\frac{c}{b}$,

$$
\begin{aligned}
    \left.\fracpartial{S^*}{x}\right|_{x=\frac{c}{b}}&=2\left(\frac{c}{b}-a\right) \\
    &= \frac{2}{b}(c-ab) \\
    &= \frac{2}{b}\left[8\sqrt{1-\eta}\cdot V-\frac{(4V+3)\sqrt{2}}{2}\frac{12V-16\eta+15}{4\sqrt{2(1-\eta)}}\right] \\
    &= \frac{2}{b}\cdot\frac{-48V^2-32V+48\eta-45}{8\sqrt{1-\eta}} \\
    &< \frac{2}{b}\cdot\frac{48\cdot 0.3-45}{8\sqrt{1-\eta}} & (V>0,\ \eta<0.3) \\
    &<0.
\end{aligned}
$$

When $x=\sqrt{c}$,

\begin{align}
    \left.\fracpartial{S^*}{x}\right|_{x=\sqrt{c}}&=2\sqrt{c}(b-a) \nonumber \\
    &= 2\sqrt{c}\cdot  \left[\frac{12V-16\eta+15}{4\sqrt{2(1-\eta)}} - \frac{(4V+3)\sqrt{2}}{2}\right] \nonumber \\
    &= \frac{\sqrt{2c}}{4\sqrt{1-\eta}}\cdot  \left[-4V\left(4\sqrt{1-\eta}-3\right)-16\eta-12\sqrt{1-\eta}+15\right] \nonumber \\
    &> \frac{\sqrt{2c}}{4\sqrt{1-\eta}}\cdot  \left[16-16\eta-\frac{40}{3}\sqrt{1-\eta}\right] & \left(V<\frac{1}{12}\right) \nonumber \\
    &= \sqrt{2c}\cdot  \left[16\sqrt{1-\eta}-\frac{40}{3}\right] \nonumber \\
    &> \sqrt{2c}\cdot  \left[16\sqrt{0.7}-\frac{40}{3}\right] & (\eta<0.3) \nonumber \\
    &> 0. \label{eqn:b-a gt 0}
\end{align}

From the signs of the derivative at certain points we know that $S^*$ reaches a local minimum at $x=x^*$, where $\frac{c}{b}<x^*<\sqrt{c}$ (if there are multiple local minima, we take the one that leads to the smallest $S^*$). Also note that $\frac{c}{b}<\sqrt{c}$ holds, since this is equivalent to $c<b^2$ and we have

$$
\begin{aligned}
    c &= 8\sqrt{1-\eta}\cdot V \\
    &< 0.67 & \left(V<\frac{1}{12},\ \eta>0\right) \\
    &< 1.8 \\
    &< \frac{12V-16\eta+15}{4\sqrt{2(1-\eta)}} & \left(V>0,\ 0<\eta<0.3\right) \\
    &= b \\
    &< b^2.
\end{aligned}
$$

$x<\sqrt{c}$ is equivalent to $w=\frac{x}{2\sqrt{2}}<\sqrt{\sqrt{(1-\eta)}V}<\sqrt{V}$; together with $w=r^{\rho K}=V^\rho$ and $V=r^K<1$ we have $\rho>0.5$. $x>\frac{c}{b}$
is equivalent to $w>\frac{c}{2\sqrt{2}\cdot b}$, which means

$$
\begin{aligned}
    w &> \frac{8\sqrt{1-\eta}}{2\sqrt{2}}\cdot\frac{4\sqrt{2(1-\eta)}}{12V-16\eta+15} \cdot V \\
    &= \frac{16(1-\eta)}{12V+15-16\eta}\cdot V \\
    &> \frac{16(1-\eta)}{16-16\eta}\cdot V & \left(V<\frac{1}{12}\right) \\
    &= V.
\end{aligned}
$$

This in turn implies $\rho<1$. Therefore, the local minimum $x^*$ is achieved when $0.5<\rho<1$.

It remains to show that this local minimum is a global minimum when $w>0$, \ie $x>0$. Consider the second derivative of $S^*$ with respect to $x$:

$$
\begin{aligned}
\frac{\partial^2 S^*}{\partial x^2} &= \frac{2 c^{2}}{x^{4}} + \frac{4 c \left(\frac{c}{x} - b\right)}{x^{3}} + 2.
\end{aligned}
$$

Clearly, $\frac{\partial^2 S^*}{\partial x^2}>0$ when $x<\frac{c}{b}$. This implies that $S^*$ does not have a zero point when $x<\frac{c}{b}$, meaning that $x^*$ is a global minimum for $0<x<\sqrt{c}$. For $x>\sqrt{c}$, we consider the symmetry of the hyperbola $xy=C$. Let $x_1<\sqrt{c}$, then $\frac{c}{x_1}>\sqrt{c}$. We have
\begin{align}
    \left.S^*\right|_{x=x_1}-\left.S^*\right|_{x=\frac{c}{x_1}} &= (x_1-a)^2+\left(\frac{c}{x_1}-b\right)^2-\left(\frac{c}{x_1}-a\right)^2-(x_1-b)^2 \nonumber \\
    &= \left(x_1+\frac{c}{x_1}-2a\right)\left(x_1-\frac{c}{x_1}\right)-\left(x_1+\frac{c}{x_1}-2b\right)\left(x_1-\frac{c}{x_1}\right) \nonumber \\
    &= 2\left(x_1-\frac{c}{x_1}\right)(b-a) \nonumber \\
    &< 0. \qquad\qquad \left(x_1<\sqrt{c}<\frac{c}{x_1};\ b-a>0,\ \text{from inequality}\ \ref{eqn:b-a gt 0}\right) \label{eqn:hyperbola-symmetry}
\end{align}

Now, suppose there exists a local minimum $\hat{x}>\sqrt{c}$. From inequality (\ref{eqn:hyperbola-symmetry}), we know that $\left.S^*\right|_{x=\hat{x}}>\left.S^*\right|_{x=\frac{c}{\hat{x}}}$. Since $x^*$ is a global minimum for $0<x<\sqrt{c}$ and $0<\frac{c}{\hat{x}}<\sqrt{c}$, we have $\left.S^*\right|_{x=\frac{c}{\hat{x}}}>\left.S^*\right|_{x=x^*}$. Thus, any local minimum $\hat{x}>\sqrt{c}$ results in greater $S^*$ than $x^*$ does.

In conclusion, $x^*$ is the global minimum for $x>0$. This completes the proof of Theorem \ref{thm:generalization on toy}.
\end{proof}

\section{Visualization of the Augmentation Graph}
\label{sec:vis of augmentation graph}

To provide a better visualization of the augmentation graph, we randomly selected 6 image pairs from the ImageNet data set: 3 intra-class pairs  and 3 inter-class pairs. For each image, 500 random masks were applied with a mask ratio of 75\%. A Vision Transformer, pretrained using Masked Autoencoder (MAE) on ImageNet, was then utilized to estimate the similarity between each pair of unmasked views. This estimation was based on the cosine similarity of the extracted features. From these, the top three unmasked view pairs, as determined by their similarity scores, were selected for visualization. The results are presented in Figure~\ref{fig:vismae}.

Clearly, unmasked views within intra-class image pairs exhibit significantly higher similarity compared to those in inter-class pairs, which also leads to larger weights for the edges in the augmentation graph. The similarity is illustrated by the presence of common iconic patches, such as the wheels in Figure~\ref{fig:vismae_same_1}, the fur in Figure~\ref{fig:vismae_same_2} and the piles of oranges in Figure~\ref{fig:vismae_same_3}. Conversely, the unmasked regions of inter-class image pairs appear more heterogeneous and lack a consistent pattern, underscoring the fundamental visual differences between distinct classes.

\begin{figure}
    \centering
    \subfigure[]{
        \includegraphics[width=0.23\textwidth]{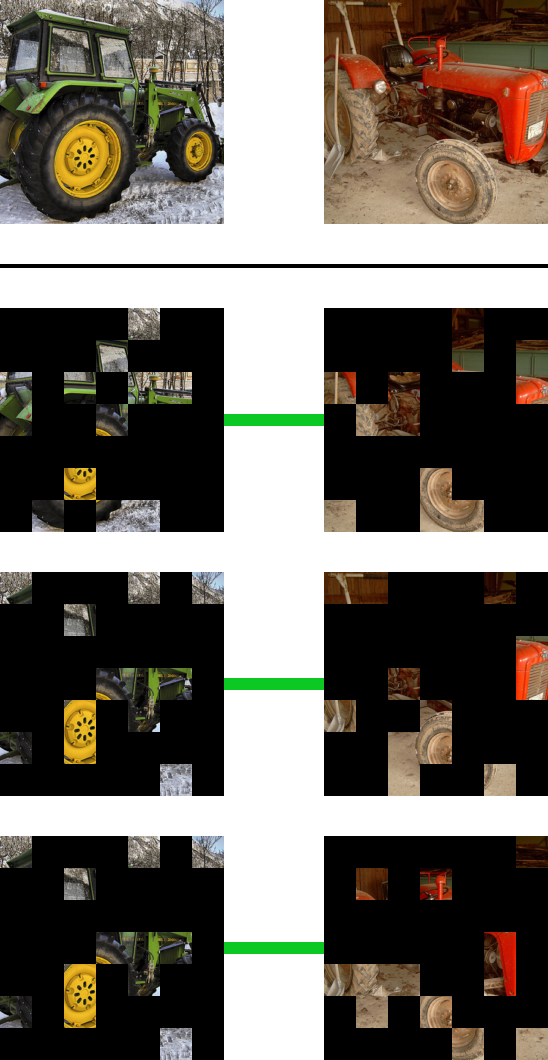}
        \label{fig:vismae_same_1}
    }
    \hspace{0.08\textwidth}
    \subfigure[]{
        \includegraphics[width=0.23\textwidth]{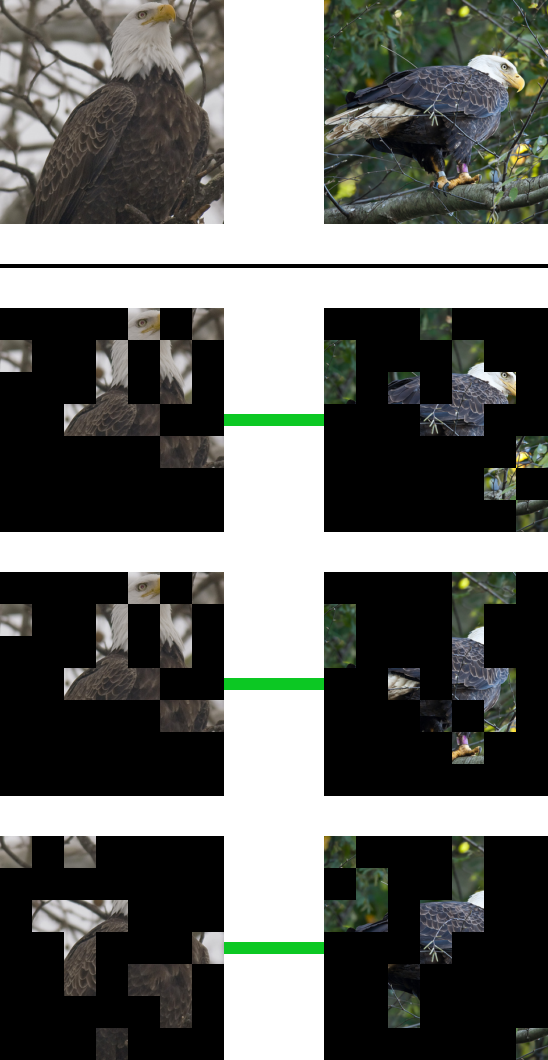}
        \label{fig:vismae_same_2}
    }
    \hspace{0.08\textwidth}
    \subfigure[]{
        \includegraphics[width=0.23\textwidth]{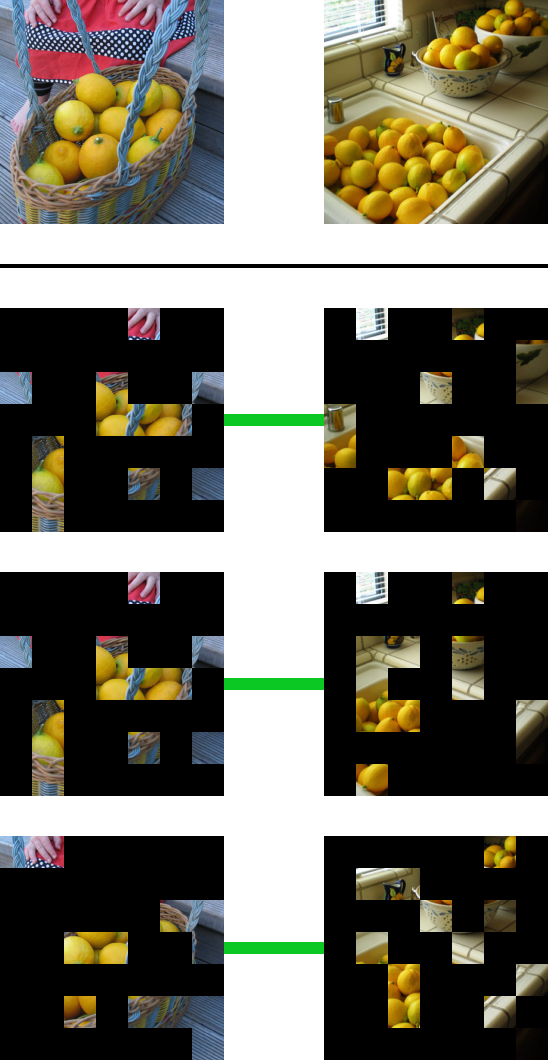}
        \label{fig:vismae_same_3}
    }

    \subfigure[]{
        \includegraphics[width=0.23\textwidth]{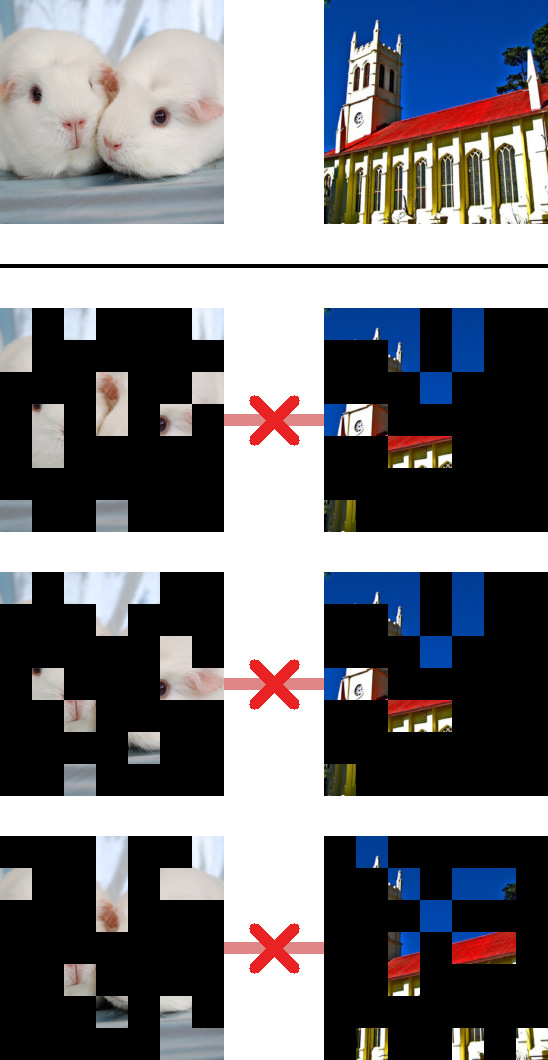}
        \label{fig:vismae_diff_1}
    }
    \hspace{0.08\textwidth}
    \subfigure[]{
        \includegraphics[width=0.23\textwidth]{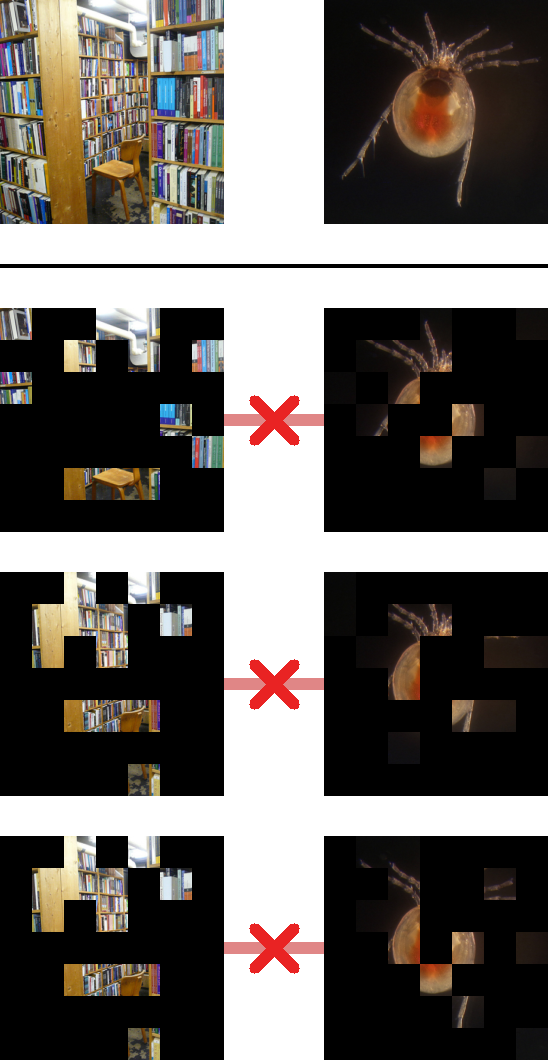}
        \label{fig:vismae_diff_2}
    }
    \hspace{0.08\textwidth}
    \subfigure[]{
        \includegraphics[width=0.23\textwidth]{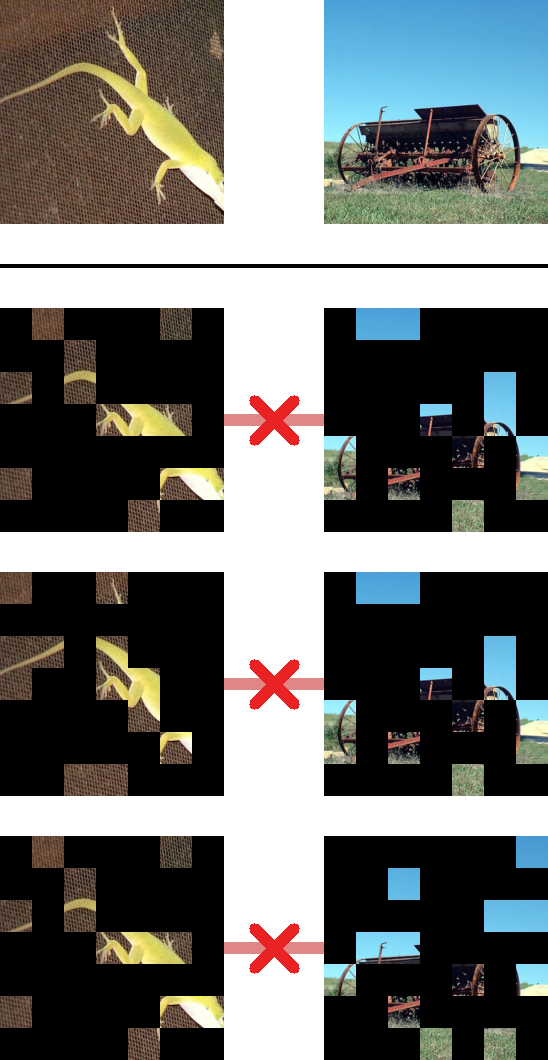}
        \label{fig:vismae_diff_3}
    }

    \caption{Illustration of edges in an augmentation graph between intra-class images and inter-class images. The first row shows that intra-class image pairs tend to have fairly overlapped unmasked views, while the second row demonstrates that even the most similar unmasked views of inter-class image pairs are non-overlapped.}
    \label{fig:vismae}
\end{figure}

\newpage
\bibliography{sample}

\end{document}